\newtheorem{proposition}{Proposition}[section]
\newtheorem{lemma}{Lemma}[section]
\newtheorem{assumption}{Assumption}[section]
\newenvironment{customprop}[1]
  {\innercustomprop}
  {\endinnercustomprop}
\newcommand{\SE}{\mathrm{SE}}
\newcommand{\edit}[1]{#1}    
\title{Equivariant $Q$ Learning in Spatial Action Spaces}
\author{
    Dian Wang \quad \quad
    Robin Walters \quad \quad
    Xupeng Zhu \quad \quad
    Robert Platt\\
    Khoury College of Computer Sciences\\
    Northeastern University\\
    United States\\
    \texttt{\{wang.dian, r.walters, zhu.xup, r.platt\}@northeastern.edu}
}
\begin{document}
\maketitle



\begin{abstract}
Recently, a variety of new equivariant neural network model architectures have been proposed that generalize better over rotational and reflectional symmetries than standard models. These models are relevant to robotics because many robotics problems can be expressed \edit{in a rotationally symmetric way.} This paper focuses on equivariance over a visual state space and a spatial action space -- the setting where the robot action space includes a subset of $\SE(2)$. In this situation, we know a priori that rotations and translations in the state image should result in the same rotations and translations in the spatial action dimensions of the optimal policy. \edit{Therefore, we can use equivariant model architectures to make $Q$ learning more sample efficient. This paper identifies when the optimal $Q$ function is equivariant and proposes $Q$ network architectures for this setting. We show experimentally that this approach outperforms standard methods in a set of challenging manipulation problems. }
\end{abstract}


\keywords{Reinforcement Learning, Equivariance, Manipulation} 


\section{Introduction}

A key question in policy learning for robotics is how to leverage structure present in the robot and the world to improve learning. This paper focuses on a fundamental type of structure present in visuo-motor policy learning for most robotics problems: translational and rotational invariance with respect to camera viewpoint. Specifically, the reward and transition dynamics of most robotics problems can be expressed in a way that is invariant with respect to the camera viewpoint from which the agent observes the scene. 
In spite of the above, most visuo-motor policy learning agents do not leverage this invariance in camera viewpoint. The agent's value function or policy typically considers different perspectives on the same scene to be different world states. A popular way to combat this problem is through visual data augmentation, i.e., to create additional samples or experiences by randomly translating and rotating observed images~\cite{transporter} but keeping the same labels. This can be used in conjunction with a contrastive term in the loss function which helps the system learn an invariant latent representation~\cite{oord2018representation,curl}. While these methods can improve generalization, 
\edit{they require the neural network to learn translational and rotational invariance from the augmented data.}

Our key idea in this paper is to model rotational and translation invariance in policy learning using neural network model architectures that are equivariant over finite subgroups of $\SE(2)$. These equivariant model architectures reduce the number of free parameters using steerable convolutional layers~\cite{steerable_cnns}. 
\edit{Compared with traditional methods, this approach creates an inductive bias that can significantly improve the sample efficiency of the model, the number of environmental steps needed to learn a policy.}
Moreover, it enables us to generalize in a very precise way: everything learned with respect to one camera viewpoint is automatically also represented in other camera perspectives via selectively tied parameters in the model architecture. We focus our work on $Q$ learning in spatial action spaces, where the agent's action space spans $\SE(2)$ or $\SE(3)$. We make the following contributions. First, \edit{we identify the conditions under which the optimal $Q$ function is $\SE(2)$ equivariant.} Second, we propose neural network model architectures that encode $\SE(2)$ equivariance in the $Q$ function. Third, since most policy learning problems are only equivariant in \emph{some} of the state variables, we propose partially equivariant model architectures that can accommodate this. Finally, we compare equivariant models against non-equivariant counterparts in the context of several robotic manipulation problems. The results show that equivariant models are more sample efficient than non-equivariant models, often by a significant margin. Supplementary video and code are available at \url{https://pointw.github.io/equi_q_page}.

\section{Related Work}
\underline{Data Augmentation:} Data augmentation techniques have long been employed in computer vision to encode the invariance property of translation and reflection into neural networks~\cite{lenet, alexnet}. Recent work demonstrates the use of data augmentation improves the data efficiency and the policy's performance in reinforcement learning~\cite{rl_with_aug, kostrikov2020image, hansen2020generalization}. In the context of robotics, data augmentation is often used to generate additional samples~\cite{transporter, qtopt, lin2020invariant}.
In contrast to learning the equivariance property using data augmentation, our work utilizes the equivariant network to hard code the symmetries in the structure of the network to achieve better sample efficiency.

\underline{Contrastive Learning:} Another approach to learning a representation that is invariant to translation and rotation is to add a contrastive learning term to the loss function~\cite{oord2018representation}. This idea has been applied to reinforcement learning in general~\cite{curl} and robotic manipulation in particular~\cite{zhan2020framework}. While this approach can help the agent learn an invariant encoding of the data, it does not necessarily improve the sample efficiency of policy learning. 




\underline{Equivariant Learning:} Equivariant model architectures hard-code $\mathrm{E}(2)$ symmetries into the structure of the neural network and have been shown to be useful in computer vision~\cite{cohen2016group, steerable_cnns, e2cnn}. In reinforcement learning, some recent work applies equivariant models to structure-finding problems involving MDP homomorphisms~\cite{van2020plannable, van2020mdp}. In addition, \citet{mondal2020group} recently applied an $\rm{E}(2)$-equivariant model to $Q$ learning in an Atari game domain, but showed limited improvement. To our knowledge, equivariant model architectures have not been explored in the context of robotics applications.



\underline{Spatial Action Representations:} Several researchers have applied policy learning in spatial action spaces to robotic manipulation. A popular approach is to do $Q$ learning with a dense pixel action space using a fully convolutional neural network (this is the FCN approach we describe and extend in Section~\ref{sect:fcn})~\cite{morrison2018closing, fc_gq_cnn, zeng_picking, zeng_pushing}. Variations on this approach have been explored in~\cite{deictic,marcus_hier}. The FCN approach has been adapted to a variety of different manipulation tasks with different action primitives~\cite{ShiftingObjectsforGrasping, Slide-to-Wall, tossingbot, form2fit, berscheid2020self, transporter, asrse3, biza2021action,wu2020spatial}. In this paper, we extend the work above by proposing new equivariant architectures for the spatial action space setting.


\section{Problem Statement}
\label{sect:problem}

We are interested in solving complex robotic manipulation problems such as the packing and construction problems shown in Fig~\ref{fig:envs}. We focus on problems expressed in a spatial action space. \edit{This section identifies conditions under which the $Q$ function is $\SE(2)$-invariant. The next section describes how these invariance properties translate into equivariance properties in the neural network.}


\underline{Manipulation as an MDP in over a visual state space and a spatial action space:} We assume that the manipulation problem is formulated as a Markov decision process (MDP): $\mathcal{M} = (S,A,T,R,\gamma)$. We focus on MDPs in visual state spaces and spatial action spaces~\cite{asrse3,zeng_picking,wu2020spatial}. The state space is factored into the state of the objects in the world, expressed as an $n$-channel $h \times w$ image $I \in S_{\rm{world}} = \mathbb{R}^{n \times h \times w}$, and the state of the robot (including objects held by the robot) $s_{\rm{rbt}} \in S_{\rm{rbt}}$, expressed arbitrarily. The total state space is $S = S_{\rm{world}} \times S_{\rm{rbt}}$. The action space is expressed as a cross product of $\SE(2)$ (hence it is spatial) and a set of additional arbitrary action variables: $A = \SE(2) \times A_{\rm{arb}}$. The spatial component of action expresses where the robot hand is to move and the additional action variables express how it should move or what it should do. For example, in the pick/place domains shown in Fig~\ref{fig:envs}, $A_{\rm{arb}} = \{\textsc{pick}, \textsc{place}\}$, giving the agent the ability to move to a pose and close the fingers (pick) or move and open the fingers (place). We will sometimes decompose the spatial component of action $a_{\rm{sp}} \in \SE(2)$ into its translation and rotation components, $a_{\rm{sp}} = (x,\theta)$. The goal of manipulation is to achieve a desired configuration of objects in the world, as expressed by a reward function $R : S \times A \rightarrow \mathbb{R}$.

\underline{Translation and Rotation in $\SE(2)$:} \edit{We are interested in learning policies that are invariant to translation and rotation of the state and action.} To do that, we define rotation and translation of state and action as follows. Let $g \in \SE(2)$ be an arbitrary rotation and translation in the plane and let $s = (I,s_{\rm{rbt}}) \in S_{\rm{world}} \times S_{\rm{rbt}}$ be a state. $g$ operates on $s$ by rotating and translating the image $I$, but leaving $s_{\rm{rbt}}$ unchanged: $gs = (gI,s_{\rm{rbt}})$, where $gI$ denotes the image $I$ translated and rotated by $g$. For action $a = (a_{\rm{sp}}, a_{\rm{arb}})$, $g$ rotates and translates $a_{\rm{sp}}$ but not $a_{\rm{arb}}$: $ga = (g a_{\rm{sp}},a_{\rm{arb}})$. Notice that both $S$ and $A$ are closed under $g \in \SE(2)$, i.e. that $\forall g \in \SE(2)$, $a \in A \implies ga \in A$ and $s \in S \implies gs \in S$.


\underline{Assumptions:} We assume that the reward and transition dynamics of the system are invariant with respect to translation and rotation of state and action as defined above, and that the translation and rotation operations on state and action are invertible.

\begin{assumption}[Goal Invariance]
\label{assumption:goalinv}
The manipulation objective is to achieve a desired configuration of objects in the world without regard to the position and orientation of the scene. That is, $R(s,a) = R(gs,ga)$ for all $g \in \SE(2)$.
\end{assumption}

\begin{assumption}[Transition Invariance]
\label{assumption:transinv}
The outcome of robot actions is invariant to translations and rotations of both the scene and the action. Specifically, $T(s,a,s') = T(gs,ga,gs')$ for all $g \in \SE(2)$.
\end{assumption}


\begin{assumption}[Invertibility]
\label{assumption:invertability}
Translations and rotations in state and action are invertible. That is, $\forall g \in \SE(2)$, $g^{-1}(gs) = s$ and $g^{-1}(ga) = a$.
\end{assumption}

Assumptions~\ref{assumption:goalinv} and~\ref{assumption:transinv} are satisfied in problem settings where the objective and the transition dynamics can be expressed intrinsically to the world without reference to an external coordinate frame imposed by the system designer. These assumptions are satisfied in many manipulation domains including all those shown in Fig~\ref{fig:envs}. In House Building, for example, the reward and transition dynamics of the system are independent of the coordinate frame of the image or the action space. Assumption~\ref{assumption:invertability} is needed to guarantee the $Q$ function invariance described in the next section.

\begin{figure}[t]
\newlength{\env}
\newlength{\envcovid}
\setlength{\env}{0.135\linewidth}
\setlength{\envcovid}{0.09\linewidth}
\centering
\subfloat[Block Stacking]{
\label{fig:envs_4s}
\includegraphics[width=\env]{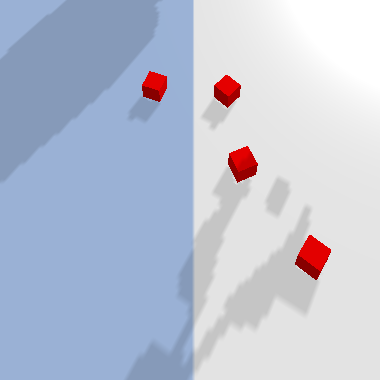}
\includegraphics[width=\env]{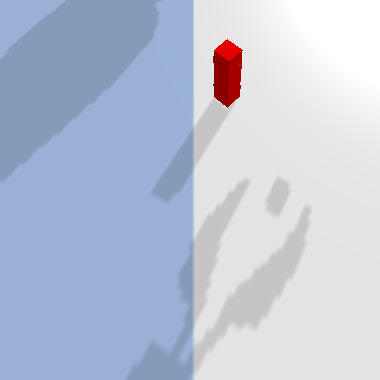}
}
\subfloat[Bottle Arrangement]{
\label{fig:envs_bt}
\includegraphics[width=\env]{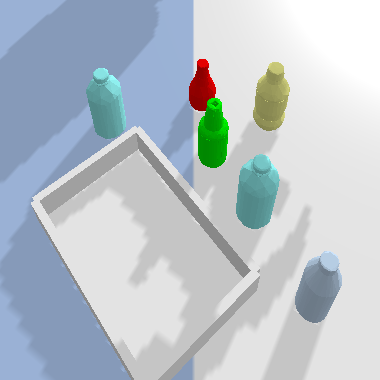}
\includegraphics[width=\env]{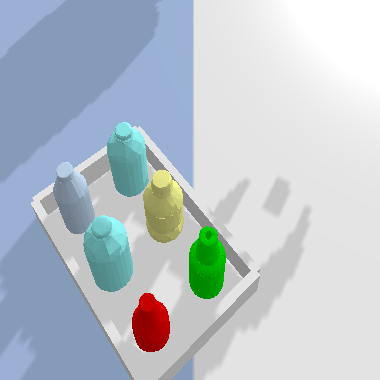}
}
\subfloat[House Building]{
\label{fig:envs_h4}
\includegraphics[width=\env]{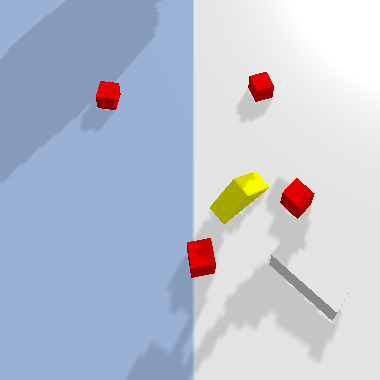}
\includegraphics[width=\env]{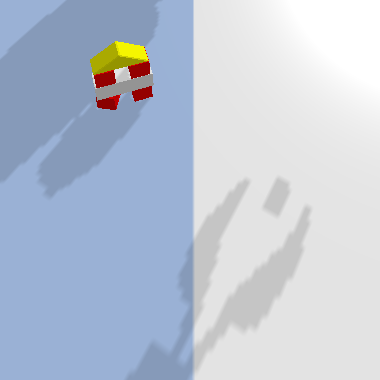}
}\\
\vspace{-0.3cm}
\subfloat[Covid Test]{
\label{fig:envs_covid}
\includegraphics[width=\envcovid]{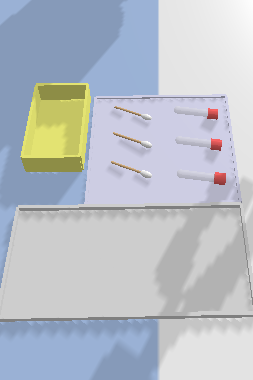}
\includegraphics[width=\envcovid]{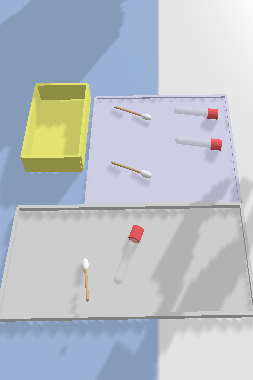}
\includegraphics[width=\envcovid]{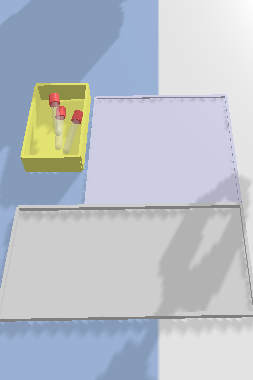}
}
\subfloat[Box Palletizing]{
\label{fig:envs_box18}
\includegraphics[width=\env]{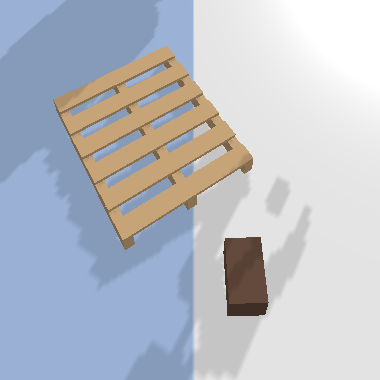}
\includegraphics[width=\env]{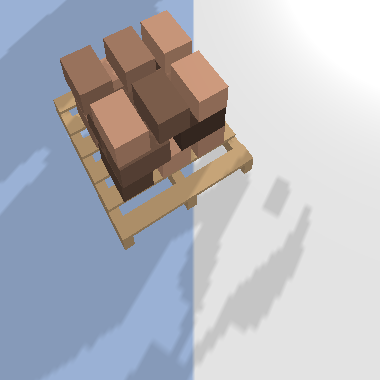}
}
\subfloat[Bin Packing]{
\label{fig:envs_bin}
\includegraphics[width=\env]{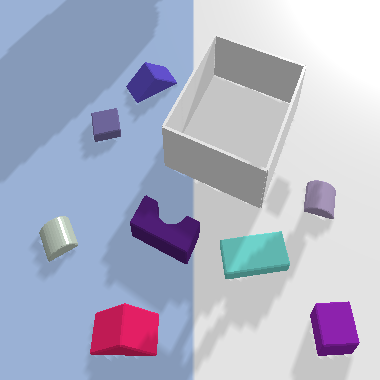}
\includegraphics[width=\env]{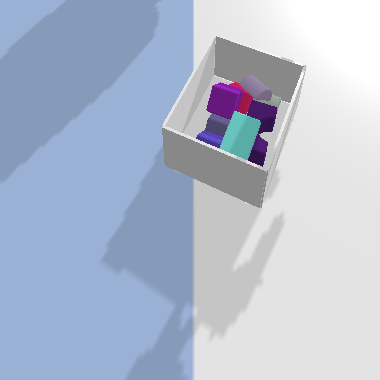}
}
\vspace{-0.2cm}
\caption{The experimental environments implemented in PyBullet~\cite{pybullet}. The left image in each sub figure shows an initial state of the environment; the right image shows the goal state.}
\vspace{-0.2cm}
\label{fig:envs}
\end{figure}



\section{Approach}


Assumptions~\ref{assumption:goalinv}, \ref{assumption:transinv}, and \ref{assumption:invertability} imply that the optimal $Q$ function is invariant to translations and rotations in $\SE(2)$.

\begin{proposition}
\label{prop:qstar}
Given an MDP $\mathcal{M} = (S,A,T,R,\gamma)$ for which Assumptions~\ref{assumption:goalinv}, \ref{assumption:transinv}, and \ref{assumption:invertability} are satisfied, the optimal $Q$ function is invariant to translation and rotation, i.e. $Q^*(s,a) = Q^*(gs,ga)$, for all $g \in \SE(2)$. (Proof in Appendix~\ref{appendix:proof}.)
\end{proposition}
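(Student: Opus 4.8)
The plan is to characterize the optimal $Q$ function as the unique fixed point of the Bellman optimality operator $\mathcal{B}$, defined by
\[
(\mathcal{B}Q)(s,a) = R(s,a) + \gamma \sum_{s' \in S} T(s,a,s')\, \max_{a' \in A} Q(s',a'),
\]
(with the sum replaced by an integral over the appropriate measure on the continuous image state space), and then to show that $\mathcal{B}$ maps $\SE(2)$-invariant functions to $\SE(2)$-invariant functions. Since $\mathcal{B}$ is a $\gamma$-contraction whose iterates $Q_{k+1} = \mathcal{B}Q_k$ converge to $Q^*$ from any initialization, and since the constant function $Q_0 \equiv 0$ is trivially invariant, an induction on $k$ together with the closedness of invariance under pointwise limits would yield the claim.

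The heart of the argument is the inductive step: assuming $Q(s,a) = Q(gs,ga)$ for all $g$, I would evaluate $\mathcal{B}Q$ at the transformed pair $(gs,ga)$ and recover its value at $(s,a)$. First I would apply Goal Invariance (Assumption~\ref{assumption:goalinv}) to replace $R(gs,ga)$ by $R(s,a)$. For the expectation term I would reindex the sum over next states by the substitution $s' = g s''$: because $g$ is a bijection of $S$ onto itself (closure of $S$ under $g$ together with Invertibility, Assumption~\ref{assumption:invertability}) and a measure-preserving one, summing over $s'$ is the same as summing over $gs''$ as $s''$ ranges over $S$. Transition Invariance (Assumption~\ref{assumption:transinv}) then turns $T(gs,ga,gs'')$ into $T(s,a,s'')$. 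Finally I would handle the inner maximization by the analogous reindexing $a' = g a''$ on the action space, so that $\max_{a'} Q(gs'', a') = \max_{a''} Q(gs'', ga'') = \max_{a''} Q(s'', a'')$, the last equality using the inductive invariance of $Q$. Assembling these substitutions gives $(\mathcal{B}Q)(gs,ga) = (\mathcal{B}Q)(s,a)$.

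The main obstacle I anticipate is justifying the two reindexing steps rather than the algebra itself. Both rely on the fact that the $\SE(2)$ action is an invertible, measure-preserving bijection on $S$ and on $A$ -- the bijectivity comes from Invertibility and closure, while measure preservation reflects that rotations and translations are isometries of the plane (unit Jacobian), which is what makes the change of variables in the continuous state space leave the transition term unchanged. One subtlety worth flagging is boundary effects when rotating a finite image, which the problem statement abstracts away by positing exact invariance of $T$ and $R$; under those assumptions the change of variables is exact. An alternative but equivalent route avoids value iteration: define $\tilde Q(s,a) = Q^*(gs,ga)$, verify by the same computation that $\tilde Q$ satisfies the Bellman optimality equation, and conclude $\tilde Q = Q^*$ by uniqueness of the fixed point.
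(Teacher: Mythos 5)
Your proposal is correct, and its core computation is exactly the one the paper uses: reindex the integral over next states by $s' \mapsto gs'$ and the supremum over next actions by $a' \mapsto ga'$ (legitimate because closure plus Assumption~\ref{assumption:invertability} make $g$ a bijection of $S$ onto $S$ and of $A$ onto $A$ --- this is precisely the paper's Lemma~\ref{lemma:1}), then apply Assumptions~\ref{assumption:goalinv} and~\ref{assumption:transinv} to replace $R(gs,ga)$ and $T(gs,ga,gs')$ by $R(s,a)$ and $T(s,a,s')$. Where you differ is in the wrapper around that computation: your primary route shows the Bellman optimality operator $\mathcal{B}$ maps invariant functions to invariant functions and then propagates invariance through value iteration from $Q_0 \equiv 0$, invoking closedness of invariance under the (uniform) limit; the paper instead defines $\bar{Q}(s,a) = Q^*(gs,ga)$, shows it satisfies the same Bellman optimality equation as $Q^*$, and concludes by uniqueness of the fixed point --- which is exactly the ``alternative route'' you flag in your last sentence. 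The paper's version is slightly more economical since it needs only uniqueness rather than convergence of value iteration plus a limiting argument, while yours makes explicit that the set of invariant $Q$ functions is $\mathcal{B}$-invariant, a mildly stronger and reusable fact. Your observation that the change of variables in the integral requires the group action to be measure-preserving (unit Jacobian for planar isometries) is a genuine point of care that the paper elides when it silently rewrites $\int_{\bar{s}'\in gS}$ as $\int_{s'\in S}$; under the stated assumptions it is harmless, but you are right to flag it.
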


Our key idea is to use the invariance property of Proposition~\ref{prop:qstar} to structure $Q$ learning (and make it more sample efficient) by defining a neural network that is hard-wired to encode only invariant $Q$ functions. However, in order to accomplish this in the context of DQN, we must allow for the fact that state is an input to the neural network while action values are an \emph{output}. This neural network is therefore a function $q : S \rightarrow \mathbb{R}^{A}$, where $\mathbb{R}^{A}$ denotes the space of functions $\lbrace A \to \mathbb{R} \rbrace$. The invariance property of Proposition~\ref{prop:qstar} now becomes an \emph{equivariance} property, 
\begin{equation}
q(gs)(a) = q(s)(g^{-1}a) ,
\label{eqn:eqiv_q_fn}
\end{equation}
where $q(s)(a)$ denotes the $Q$ value of action $a$ in state $s$. We implement this constraint using equivariant convolutional layers as described below.

\subsection{Equivariant Convolutions}
\label{sec:equi_conv}
\underline{Equivariance over a finite group:} In order to implement the equivariance constraint, it is standard in the literature to approximate $\SE(2)$ by a finite subgroup~\cite{steerable_cnns, e2cnn}. Recall that the spatial component of an action is $a_{\rm{sp}} = (x,\theta) \in \SE(2)$. We constrain position to be a discrete pair of positive integers $x \in \{1 \dots h\} \times \{1 \dots w\} \subset \mathbb{Z}^2$, corresponding to a pixel in the input image $I$. We constrain orientation to be a member of a finite cyclic group $\theta \in C_u$, i.e. one of $u$ discrete orientations. For example, if $u=8$, then $C_8 = \{0,\frac{\pi}{4}, \frac{2\pi}{4}, \frac{3\pi}{4}, \frac{4\pi}{4}, \frac{5\pi}{4}, \frac{6\pi}{4}, \frac{7\pi}{4}\}$. Our finite approximation of $a_{\rm{sp}} \in \SE(2)$ is $\hat{a}_{\rm{sp}}$ in the subgroup $\hat{\SE}(2)$ generated by translations $\mathbb{Z}^2$ and rotations $C_u$. 

\begin{wrapfigure}[18]{r}{0.4\textwidth}
\vspace{-0.5cm}
  \begin{center}
  \includegraphics[width=0.4\textwidth]{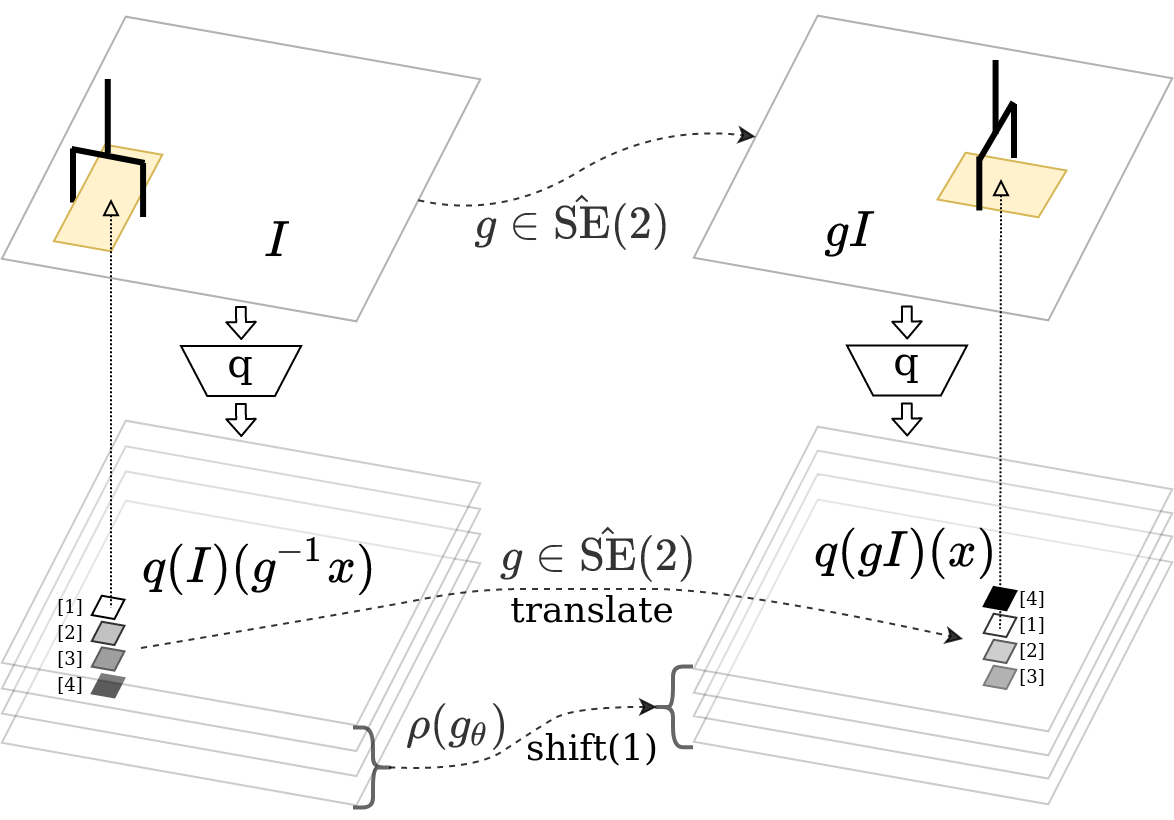}
  \end{center}
\vspace{-0.3cm}
  \caption{Illustration of $Q$-map equivariance when $C_u=C_4$. The output $Q$-map rotates and translates with the input image. The $4$-vector at each pixel does a circular shift, i.e., the optimal rotation changes from 0 (the 1st element of $C_4$) to $\frac{\pi}{2}$ (the 2nd element of $C_4$)}
\label{fig:q_theta_equivariant_property}
\end{wrapfigure}

\underline{Input and output of an equivariant convolutional layer:} A standard convolutional layer $h$ takes as input an $n$-channel feature map and produces an $m$-channel map as output, $h_{\mathrm{standard}} : \mathbb{R}^{n \times h \times w} \to \mathbb{R}^{m \times h \times w}$. We can construct an equivariant convolutional layer by adding an additional dimension to the feature map that encodes the values for each element of a group ($C_u$ in our case).\footnote{In the language of \cite{e2cnn}, this is a steerable convolution between regular representations of $C_u$.} The equivariant mapping therefore becomes $h_{\mathrm{equiv}} : \mathbb{R}^{u \times n \times h \times w} \to \mathbb{R}^{u \times m \times h \times w}$ for all layers except the first. The first layer of the network generally takes a ``flat'' image as input: $h_{\mathrm{equiv}}^{\mathrm{in}} : \mathbb{R}^{1 \times n \times h \times w} \to \mathbb{R}^{u \times m \times h \times w}$.\footnote{This is a steerable convolution between the trivial representation and regular representation of $C_u$.}


\underline{Equivariance constraint:} Let $h_i(I)(x)$ denote the output of convolutional layer $h$ at channel $i$ and pixel $x$ given input $I$. For an equivariant layer, $h_i(I)(x) \in \mathbb{R}^{C_u}$ describes feature values for each element of $C_u$.  
For an element $g \in \hat{\SE}(2)$, denote the 
rotational part by  
$g_\theta \in C_u$.
If we identify functions $\mathbb{R}^{C_u}$ with vectors $\mathbb{R}^u$, then the group action of $g_\theta \in C_u$ on $\mathbb{R}^{C_u}$ becomes left multiplication by a permutation matrix $\rho(g_\theta)$ that performs a circular shift on the vector in $\mathbb{R}^u$.
Then the group action of $\hat{\SE}(2)$ on a feature map $h(I) \in \mathbb{R}^{u \times m \times h \times w}$ can be expressed as $g(h_i(I))(x) = \rho(g_\theta) h_i(I)(g^{-1}x)$.
The mapping $h$ is equivariant if and only if
\begin{equation}
h_i(gI)(x) = g(h_i(I))(x) = \rho(g_\theta) h_i(I) (g^{-1}x),
\label{eqn:layer_equiv_constraint}
\end{equation}
for each $i \in \{1 \dots m\}$. 
This is illustrated in Fig~\ref{fig:q_theta_equivariant_property}. We can calculate the output feature map in the lower right corner by transforming the input by $g$ and then doing the convolution (left side of Eq.~\ref{eqn:layer_equiv_constraint}) or by doing the convolution first and then taking the value of $g^{-1}x$ and circular-shifting the output vector (right side of Eq.~\ref{eqn:layer_equiv_constraint}). In order to create a network that enforces the constraint of Eq.~\ref{eqn:eqiv_q_fn}, we can simply stack equivariant convolutions layers that each satisfy Eq.~\ref{eqn:layer_equiv_constraint}.

\underline{Kernel constraint:} The equivariance constraint of Eq.~\ref{eqn:layer_equiv_constraint} can be implemented by strategically tying weights together in the convolutional kernel~\cite{steerable_cnns}. Since the standard convolutional kernel is already translation equivariant~\cite{cohen2016group}, we must only enforce rotational ($C_u$) equivariance~\cite{cohen_equicnn_theory}:
\begin{equation}
\label{eqn:kernel_constraint}
K(g_\theta y) = \rho_{\rm{out}}(g_\theta) K(y) \rho_{\rm{in}}(g_\theta)^{-1},
\end{equation}
where $\rho_{\rm{in}}(g_\theta)$ and $\rho_{\rm{out}}(g_\theta)$ are the permutation matrix of the group element $g_\theta$ (note that for the first layer, $K(y)$ will be a $1\times u$ matrix, and $\rho_{\rm{in}}(g_\theta)$ will be 1). More details are in Appendix~\ref{appendix:equi_kernel_constraint}.

\subsection{Equivariant Fully Convolutional $Q$ Functions in $\SE(2)$}
\label{sect:fcn}

A baseline approach to encoding the $Q$ function over a spatial action space is to use a fully convolutional network (FCN) that stacks convolutional layers to produce an output $Q$ map with the same resolution as the input image. If we ignore the non-image state variables $s_{\rm{rbt}}$ and the non-spatial action variables $a_{\rm{arb}}$, then we have all the tools we need -- we simply replace all convolutional layers with equivariant convolutions and the $Q$ network becomes fully equivariant.



\underline{Partial Equivariance:} Unfortunately, in realistic robotics problems, $Q$ function is generally not equivariant with respect to all state and action variables. For example, the non-equivariant parts of state 
\begin{wrapfigure}[10]{r}{0.63\textwidth}
\vspace{-0.2cm}
\centering
\subfloat[Lift Expansion]{
\includegraphics[height=2cm]{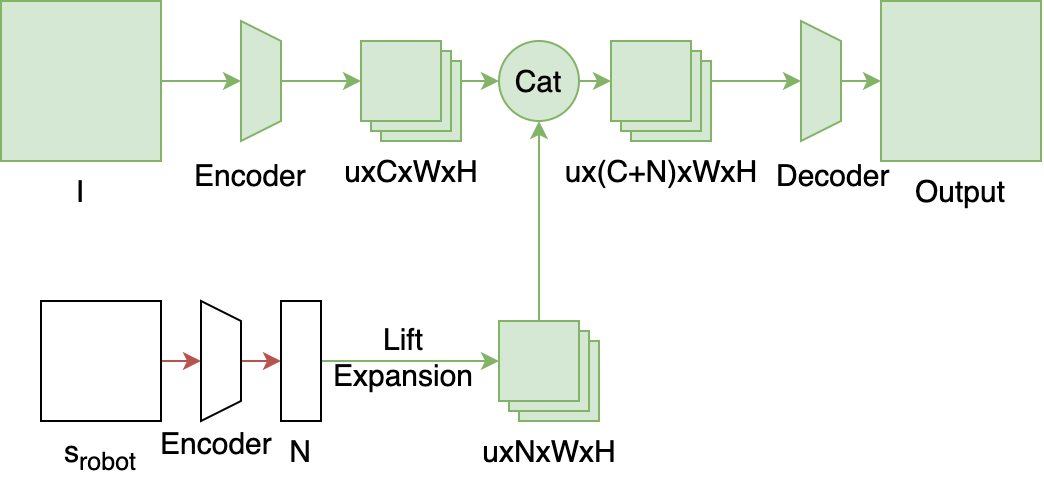}
\label{fig:partial_equi_exp}
}
\subfloat[Dynamic Filter]{
\includegraphics[height=2cm]{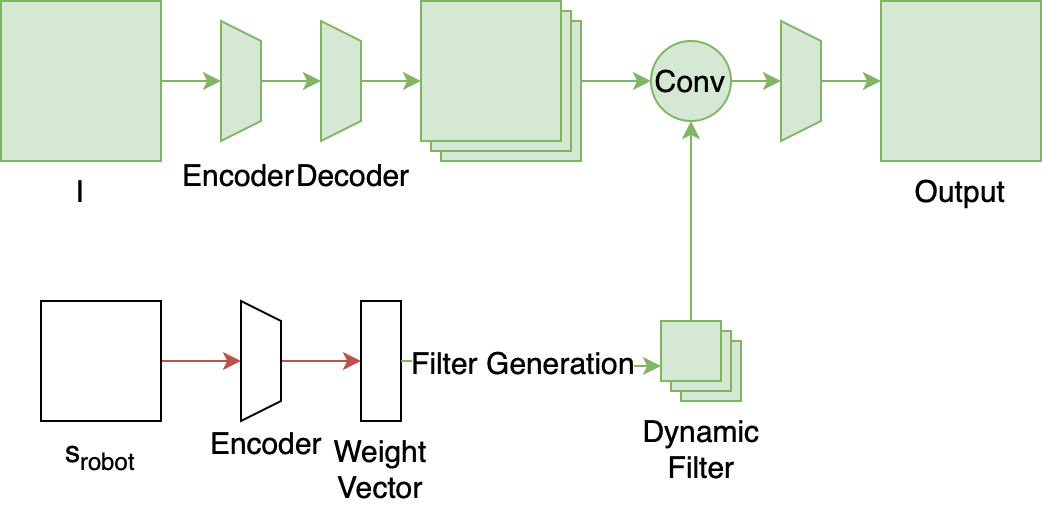}
\label{fig:partial_equi_df}
}
\caption{Illustration of approaches to partial equivariance. Green blocks are equivariant and white blocks are not.}
\label{fig:partial_equi}
\end{wrapfigure}
and action in Section~\ref{sect:problem} are $s_{\rm{rbt}}$ and $a_{\rm{arb}}$. We encode $a_{\rm{arb}}$ by simply having a separate output head for each. However, to encode $s_{\rm{rbt}}$, we need a mechanism for inserting the non-equivariant information into the neural network model without ``breaking'' the equivariance property. We explored two approaches: the lift expansion approach and the dynamic filter approach. In the lift expansion, we tile the non-equivariant information across the equivariant dimensions of the feature map as additional channels (Fig~\ref{fig:partial_equi_exp}). In the dynamic filter approach~\cite{dynamic_filter_network}, the non-equivariant data is passed through a separate pathway that outputs the weights of an equivariant kernel that is convolved into the main equivariant backbone. We constrain this filter to be equivariant by enforcing the kernel constraint of Eq.~\ref{eqn:kernel_constraint} (Fig~\ref{fig:partial_equi_df}). We empirically find that both methods have similar performance (Appendix~\ref{appendix:exp_df_vs_exp}). In the remainder of this paper, we use the dynamic filter approach because it is more memory efficient.


\underline{Encoding Gripper Symmetry Using Quotient Groups:} Another symmetry that we want to leverage is the bilateral symmetry of the gripper. The outcome of a pick action performed using a two-finger gripper in orientation $\theta$ is the same as for the gripper in orientation $\theta + k \pi$ for any integer $k$. Similarly, it is often valid to assume that the outcome of place actions is invariant~\footnote{Strictly speaking, this is true only when the grasped object is also symmetric.}. We model this invariance using the quotient group $C_u/C_2$. The $C_2 = \lbrace 0, \pi \rbrace$ action equates rotations which differ by multiples of $\pi$ in $C_u/C_2$. The steerable layer defined under the quotient group is applied with the same constraint as in Eq.~\ref{eqn:kernel_constraint}, except that the output space will be in $C_u/C_2$.

\label{sec:exp_equi_fcn}
\underline{Experimental Domains:} We evaluate the equivariant FCN approach in the Block Stacking and Bottle Arrangement tasks shown in  Fig~\ref{fig:envs}. 
Both environments have sparse rewards (+1 at goal and 0 otherwise). The world state is encoded by a 1-channel heightmap $I \in \mathbb{R}^{1 \times h \times w}$ and robot state is encoded by an image patch $H$ that describes the contents of the robotic hand. The non-spatial action variable $a_{\rm{arb}} \in \{\textsc{pick},\textsc{place}\}$ is selected by the gripper state, i.e., $a_{\rm{arb}}=\textsc{place}$ if the gripper is holding an object, and $\textsc{pick}$ otherwise. The equivariant layers of the FCN are defined over group $C_{12}$ where the output is with respect to the quotient group $C_{12}/C_2$ to encode the gripper symmetry. See Appendix~\ref{appendix:envs} and~\ref{appendix:network} for detail on the experimental domains and the FCN architecture respectively. 


\begin{wrapfigure}[12]{r}{0.46\textwidth}
\vspace{-0.8cm}
\begin{center}
\subfloat[Block Stacking]{\includegraphics[width=0.23\textwidth]{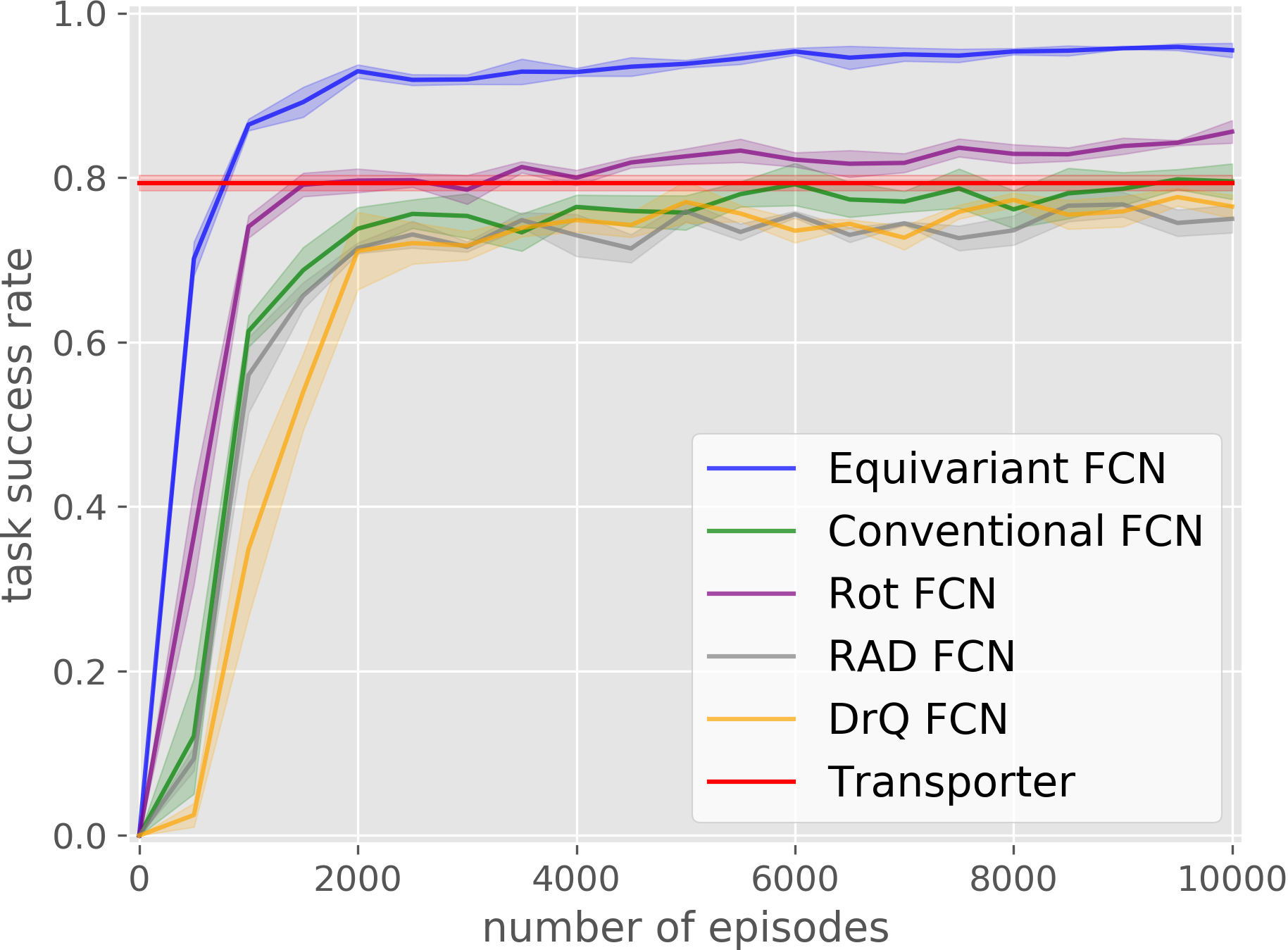}}
\subfloat[Bottle Arrangement]{\includegraphics[width=0.23\textwidth]{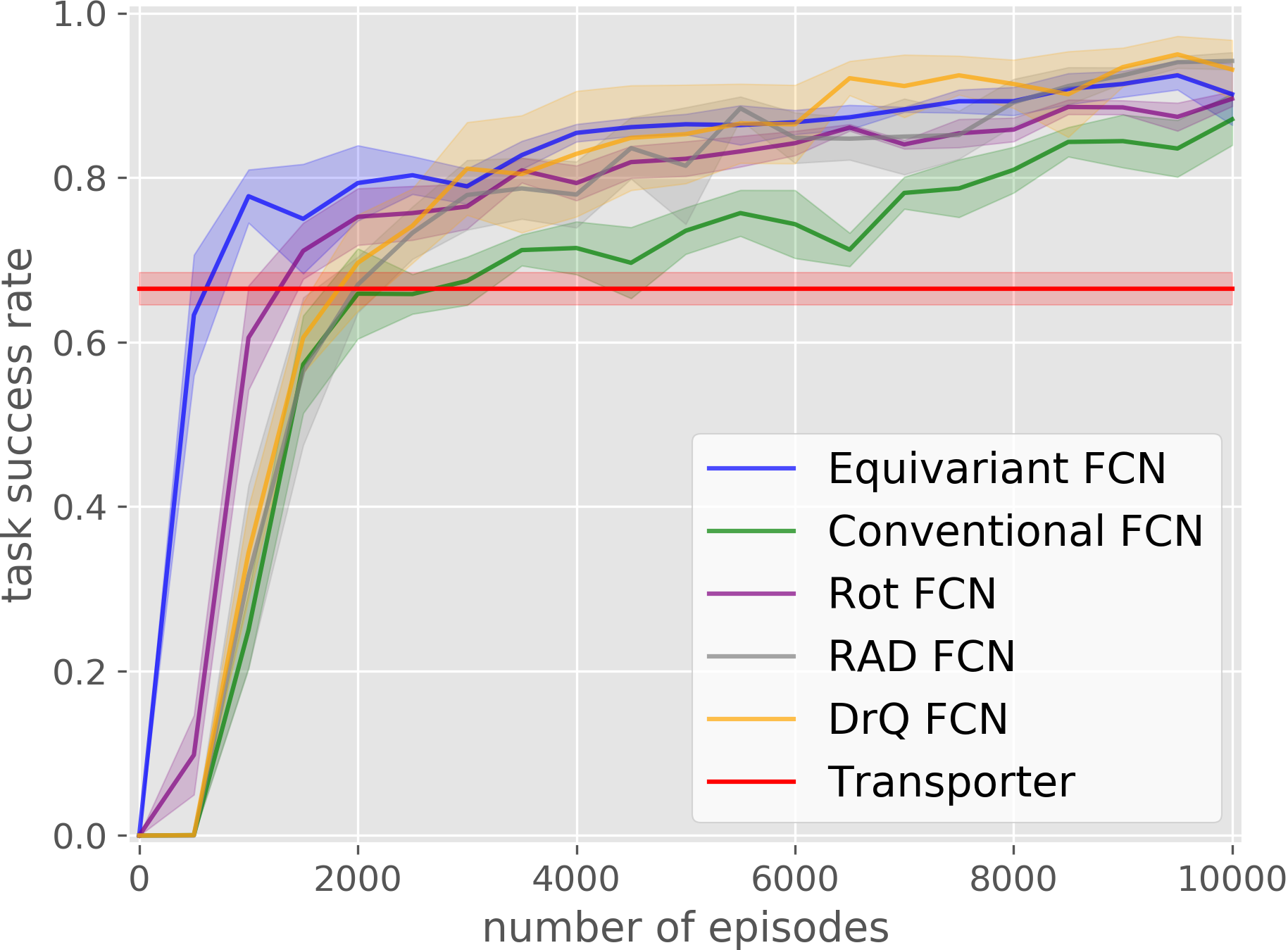}}
\end{center}
\vspace{-0.25cm}
\caption{\edit{Comparison of Equivariant FCN (blue) with baselines. Results averaged over four runs. Shading denotes standard error.}}
\label{fig:exp_fcn_equi_vs_cnn}
\end{wrapfigure}

\underline{Experimental Comparison With Baselines:} We evaluate against the following baselines: 1) Conventional FCN: FCN with 1-channel input and $6$-channel output where each output channel corresponds to a $Q$ map for one rotation in the action space (similar to~\citet{fc_gq_cnn} but without the $z$ dimension). \edit{2) RAD~\cite{rl_with_aug} FCN: same architecture as 1), while at each training step, we augment each transition in the minibatch with a rotation randomly sampled from $C_{12}$. 3) DrQ~\cite{kostrikov2020image} FCN: same architecture as 1), while at each training step, the $Q$ targets and $Q$ outputs are calculated by averaging over multiple augmented versions of the sampled transitions. Random rotations sampled from $C_{12}$ are used for the augmentation. }4) Rot FCN: FCN with 1-channel input and 1-channel output, the rotation is encoded by rotating the input and output for each $\theta$~\cite{zeng_pushing}. 5) Transporter Network~\cite{transporter}, an FCN-based architecture with the last layer being a dynamic kernel generated by a separate FCN with an input of an image crop at the pick location. 
\edit{Baseline 2) and 3) are data augmentation methods that aim to learn the symmetry encoded in our equivariant network using rotational data augmentation sampled from the same symmetry group ($C_{12}$) as used by our equivariant model.
All baselines have the same action space as the proposal.} 
More detail on the baselines is in Appendix~\ref{appendix:baseline_fcn}. All methods except the Transporter Network use SDQfD, an approach to imitation learning in spatial action spaces that combines a TD loss term with penalties on non-expert actions~\cite{asrse3}. (Transporter Network is a behavior cloning method.) 
Table~\ref{tab:expert_step} shows the number of demonstration steps.
Those expert transitions are augmented by 9 random $\SE(2)$ transformations. See Appendix~\ref{appendix:parameters} for more parameter detail. Fig~\ref{fig:exp_fcn_equi_vs_cnn} shows the results. Our equivariant FCN outperforms all baselines in the block stacking task. Notice that in the Bottle Arrangement task, the equivariant network learns faster than the baselines but converges to a similar level as RAD and DrQ. This is because the domain itself is already partially rotationally equivariant because the bottles are cylindrical and therefore our network has less of an advantage.

\begin{table}[t]
\scriptsize{
\renewcommand{\arraystretch}{0.3}
\centering
\begin{tabular}{ccccccc}
\toprule
\rule{0pt}{0ex} & Block Stacking & Bottle Arrangement & House Building & Box Palletizing & Covid Test & Bin Packing \\
\midrule
expert steps & 50 & 240 & 200 & 1000 & 2000 & 2000\\
\midrule
equivalent episodes & 8 & 20 & 20 & 28 & 111 & 125\\
\bottomrule
\end{tabular}
\vspace{0.2cm}
\caption{The number of expert steps and the (approximate) equivalent number of episodes.}
\vspace{-0.7cm}
\label{tab:expert_step}
}
\end{table}

\subsection{Equivariant Augmented State $Q$ Functions in $\SE(2)$}
\label{sect:asrse2}


The FCN approach does not scale well to challenging manipulation problems. Therefore, we design an equivariant version of the augmented state representation (ASR) method of~\cite{asrse3}, which has been shown to be faster and have better performance. The ASR method transforms the original MDP with a high dimensional action space into a new MDP with an augmented state space but a lower dimensional action space. Instead of encoding the value of all dimensions of action in a single neural network, this model encodes the value of different factorized parts of the action space such as position and orientation using separate neural networks conditioned on prior action choices.

\begin{wrapfigure}[11]{r}{0.28\textwidth}
\vspace{-0.25cm}
\centering
\includegraphics[width=0.28\textwidth]{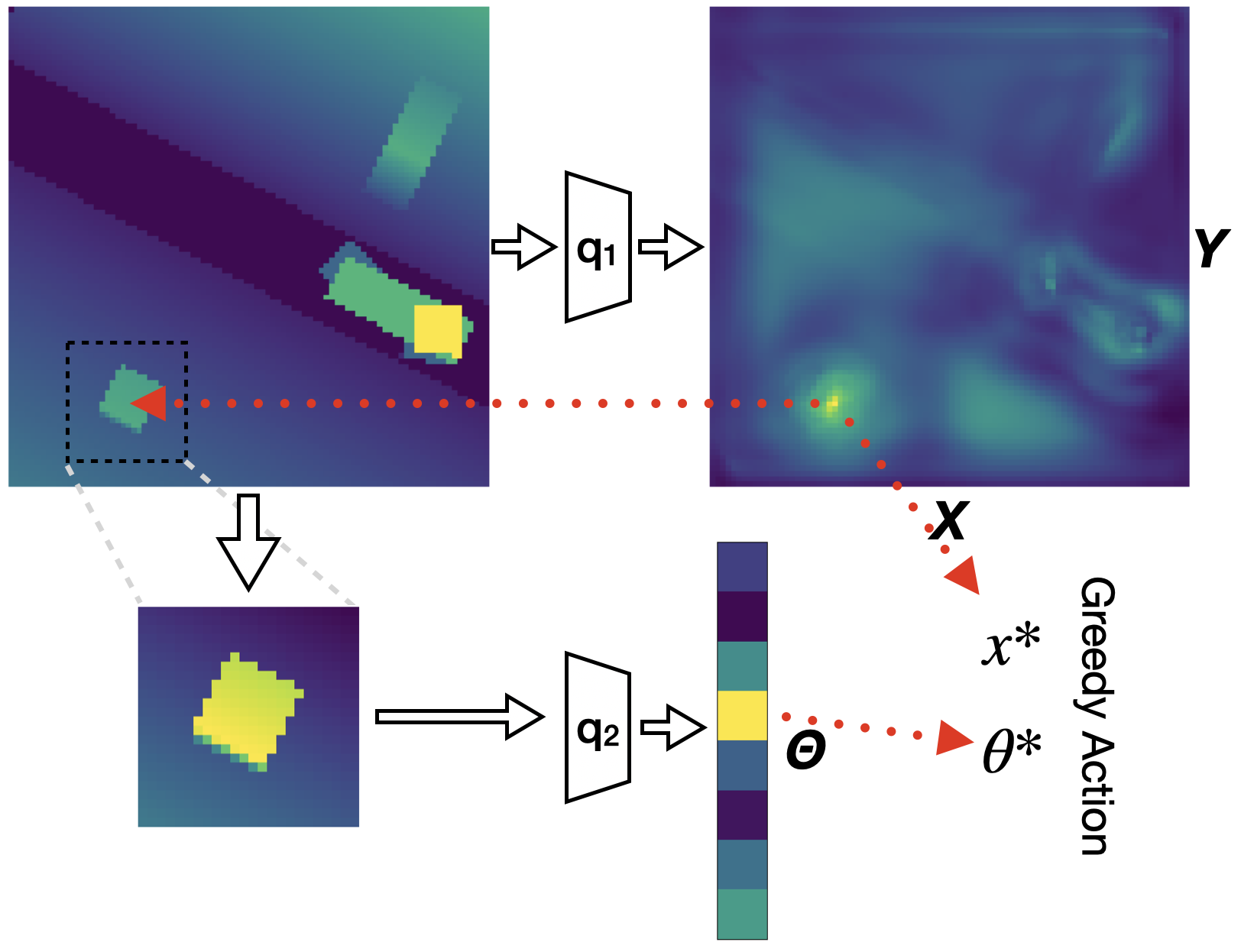}
\vspace{-0.5cm}
\caption{Illustration of the ASR approach in $\SE(2)$.}
\label{fig:asrse3}
\end{wrapfigure}

\underline{ASR in $\SE(2)$:} \edit{We explain the ASR method in the example setting of the $\SE(2)$ action space. See Fig~\ref{fig:asrse3} for an illustration. As before, actions $\hat{a}_{\rm{sp}} = (x,\theta)$ are elements of the space $\hat{\SE}(2)$, the finite approximation of $\SE(2)$ as in Section~\ref{sec:equi_conv}.
However, the $Q$ function is now computed using two separate functions, the position function $Q_1(s, x) = \mathrm{max}_\theta Q(s,(x,\theta))$ and the orientation function $Q_2((s, x), \theta) = Q(s, (x, \theta))$. $Q_1$ is encoded using a fully convolutional network $q_1: \mathbb{R}^{n \times h \times w} \rightarrow \mathbb{R}^{1 \times h \times w}$ that takes an $n$-channel image $I$ as input and produces a $1$-channel $Q$ map that describes $Q_1(s,x)$ for all $x$. We evaluate $Q_2$ on the ``augmented state'' $(s,x)$ which contains the state $s$ and the chosen $x$. The augmented state is encoded using the image patch $P = \textsc{crop}(I,x) \in \mathbb{R}^{n \times h' \times w'}$ cropped from $I$ and centered at $x$. We model $Q_2$ using the network $q_2 : \mathbb{R}^{n \times h' \times w'} \to \mathbb{R}^u$ that takes input $P$ and outputs $Q_2((s, x), \theta)$ for all $u$ different orientation $\theta$. These two networks are used together for both action selection and evaluation of target values during learning. We evaluate $x^* = \arg\max_{x} q_1(I)$, calculate $P = \textsc{crop}(I,x^*)$, and then evaluate $\theta^* = \arg\max_{\theta} q_2(P)$ and $Q^*=\max_{\theta}q_2(P)$.  Note that $Q$ maps produced by $q_1$ and $q_2$ are of size $u + hw$, significantly smaller than the $Q$ map in the FCN approach which is size $uhw$.  Essentially the ASR method takes advantage of the fact that the optimal $\theta$ depends only on the local patch $P$ given an optimal position $x$.}

\begin{wrapfigure}[13]{r}{0.46\textwidth}
\vspace{-0.7cm}
\centering
\subfloat[Block Stacking]{\includegraphics[width=0.23\textwidth]{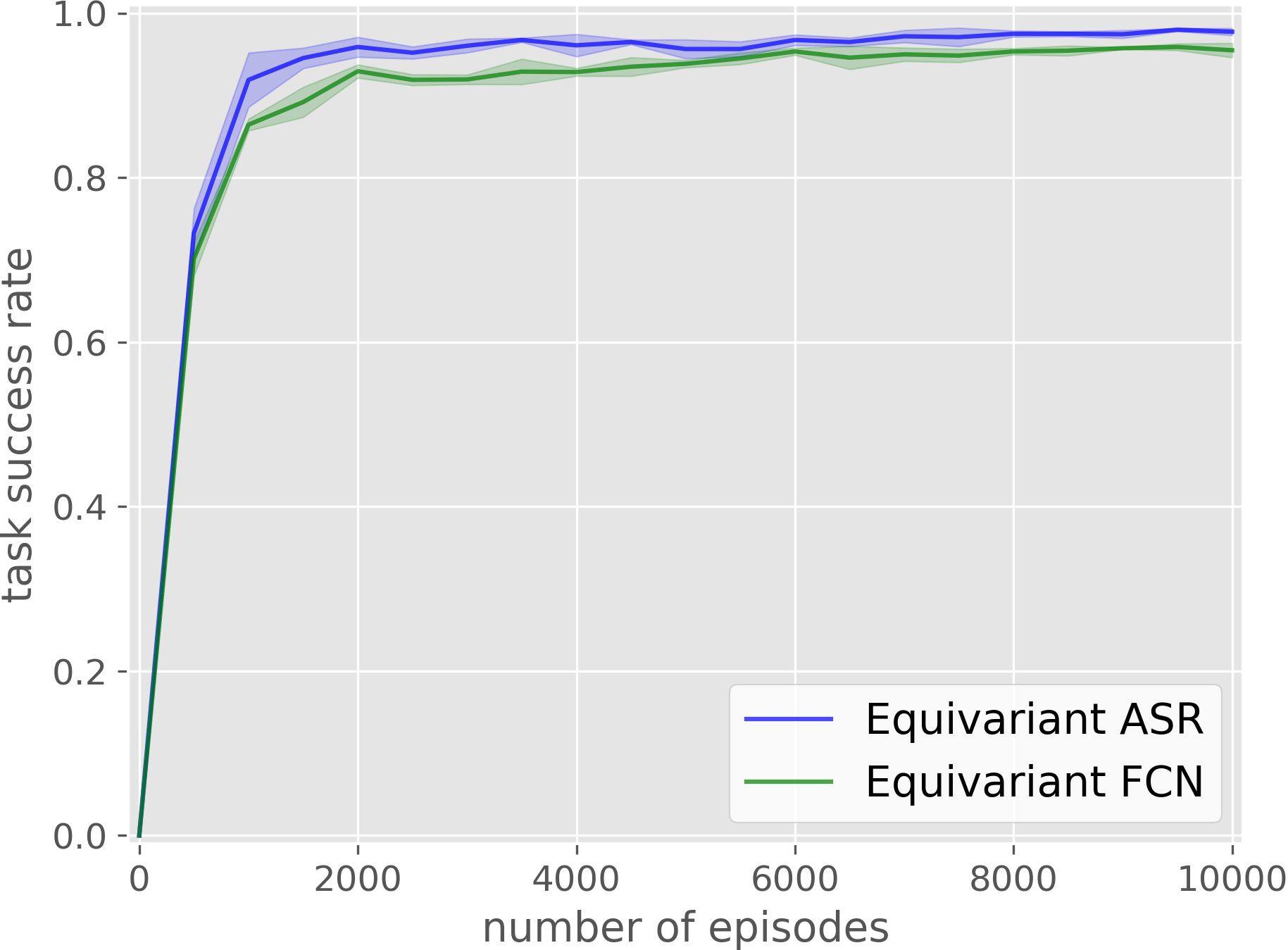}}
\subfloat[Bottle Arrangement]{\includegraphics[width=0.23\textwidth]{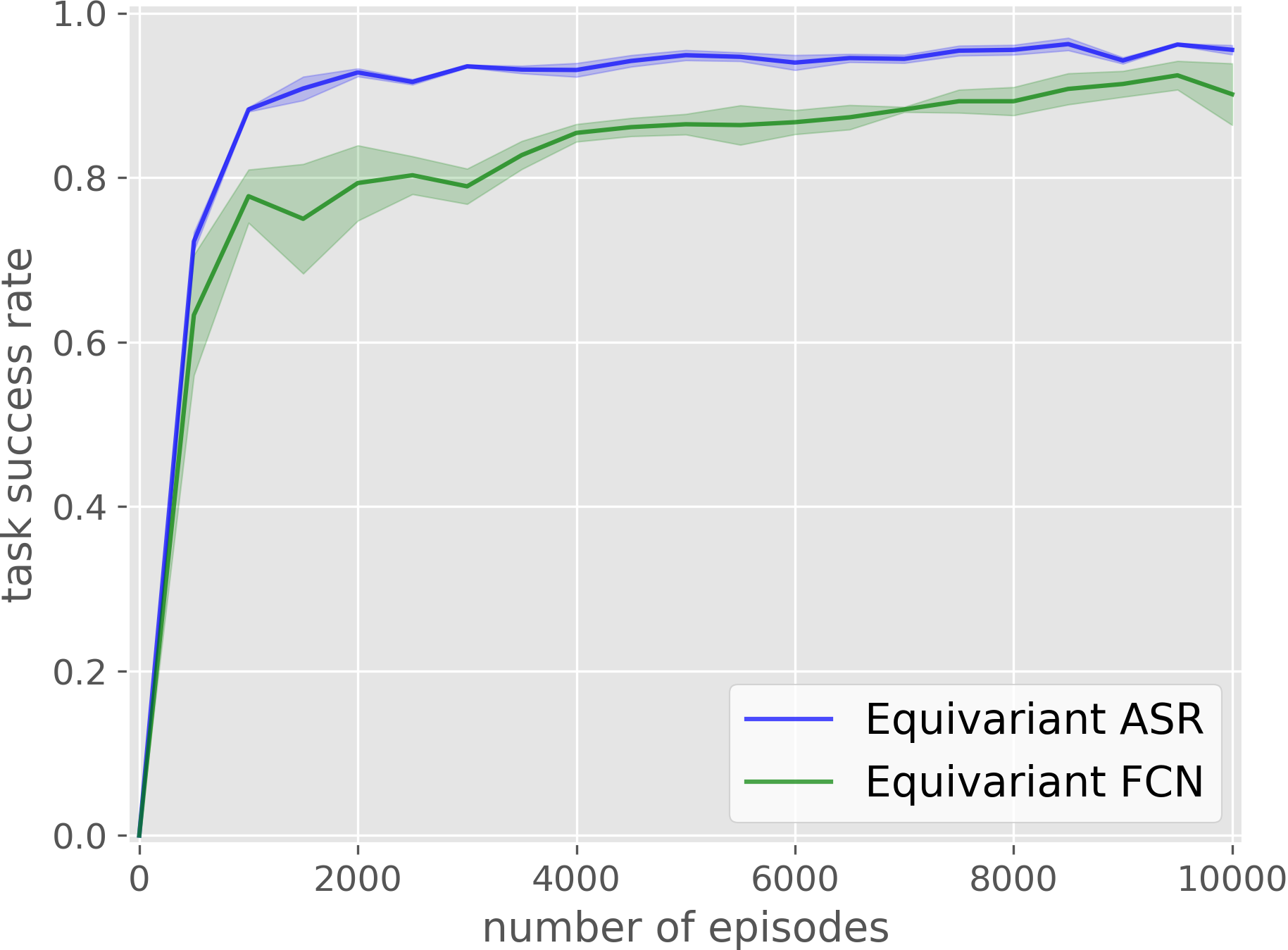}}
\caption{Comparison between Equivariant ASR (blue) and Equivariant FCN (green). Results averaged over four runs. Shading denotes standard error.}
\label{fig:exp_asr_vs_fcn}
\end{wrapfigure}

\underline{Equivariant architecture for ASR in $\SE(2)$:} We decompose the $\SE(2)$ equivariance property of Eq.~\ref{eqn:eqiv_q_fn} into two equivariance properties for $q_1$ and $q_2$, respectively: $q_1(g I) (x) = q_1(I)(g^{-1} x)$ where $g\in \hat{\SE}(2)$, and $q_2(g_\theta P) = \rho(g_\theta)q_2(P)$ where $g_\theta\in C_u$. The equivariance property of $q_1$ is similar to that of Eq.~\ref{eqn:layer_equiv_constraint} except that the output of $q_1$ has only one channel which is invariant to rotations (since it is a maximum over all rotations). This means we can rewrite the $q_1$ equivariance property as $q_1(g I) = g q_1(I)$, where $g$ on the RHS of this equation translates and rotates the output $Q$ map. In practice, we obtained the best performance for $q_1$ by enforcing equivariance to the Dihedral group $D_4$, which is generated by 90 degree rotation and reflections over the coordinate axis. For $q_2$, we used an equivariant feature map that outputs a single $u$-dimensional vector of $Q$ values corresponding to the finite cyclic group $C_u$ used. (We use $C_{12}/C_2$ and $C_{32}/C_2$ in our experiments below). We handle the partial equivariance using the same strategies as earlier. Appendix~\ref{appendix:network} describes the model details.


\underline{Experimental Comparison with Equivariant FCN:} Fig~\ref{fig:exp_asr_vs_fcn} shows a comparison between equivariant ASR (this section) and equivariant FCN (Section~\ref{sect:fcn}) for the Block Stacking and Bottle Arrangement tasks. The network $q_2$ is defined using $C_{12}$ and its quotient group $C_{12}/C_2$ to match Section~\ref{sect:fcn}. The ASR method surpasses the FCN  method in both tasks.

\label{sec:exp_equi_asr}
\underline{More Challenging Experimental Domains:} The equivariant ASR method is able to solve more challenging manipulation tasks than equivariant FCN can.  In particular, we could not run the FCN with as large a rotation space because it requires more GPU memory. We evaluate on the following four additional domains: House Building, Covid Test, Box Palletizing (introduced in~\cite{transporter}), and Bin Packing (Fig~\ref{fig:envs}(c-f)). 
All domains except Bin Packing have sparse rewards. In Bin Packing, the agent obtains a positive reward inversely proportional to the highest point in the pile after packing all objects. See Appendix~\ref{appendix:envs} for more details about the environments. We now define $q_2$ using the group $C_{32}$ and its quotient group $C_{32}/C_2$, i.e., we now encode 16 orientations ranging from 0 to $\pi$. As in Section~\ref{sect:fcn}, we use the SDQfD loss term to incorporate expert demonstrations (except for Transporter Net which uses standard behavior cloning exclusively). 
The number of expert transitions provided is shown in Table~\ref{tab:expert_step}.
9 random $\SE(2)$ augmentations are applied to the expert transitions.

\begin{figure}
\centering
\subfloat[House Building]{\includegraphics[width=0.23\linewidth]{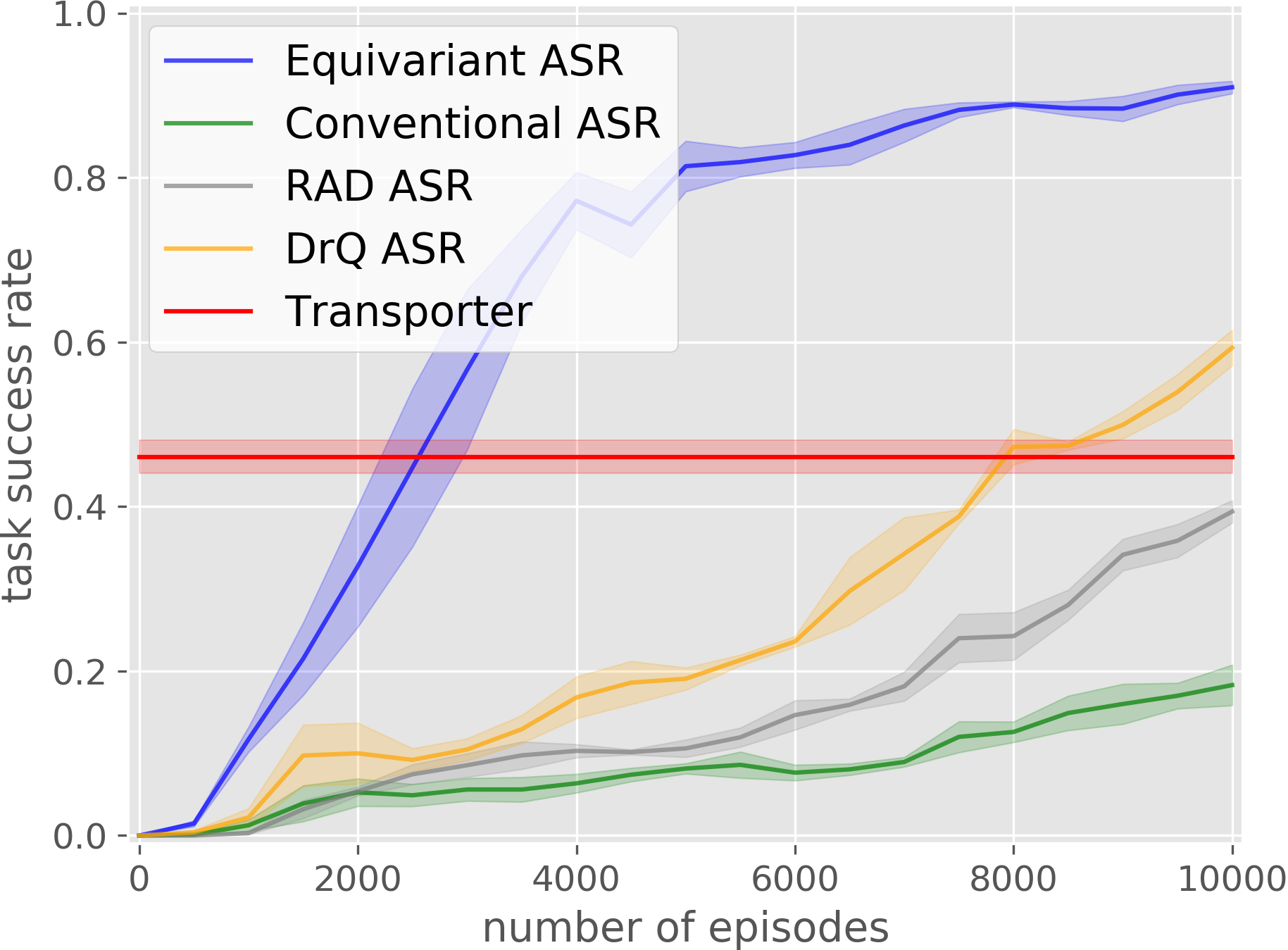}}
\subfloat[Covid Test]{\includegraphics[width=0.23\linewidth]{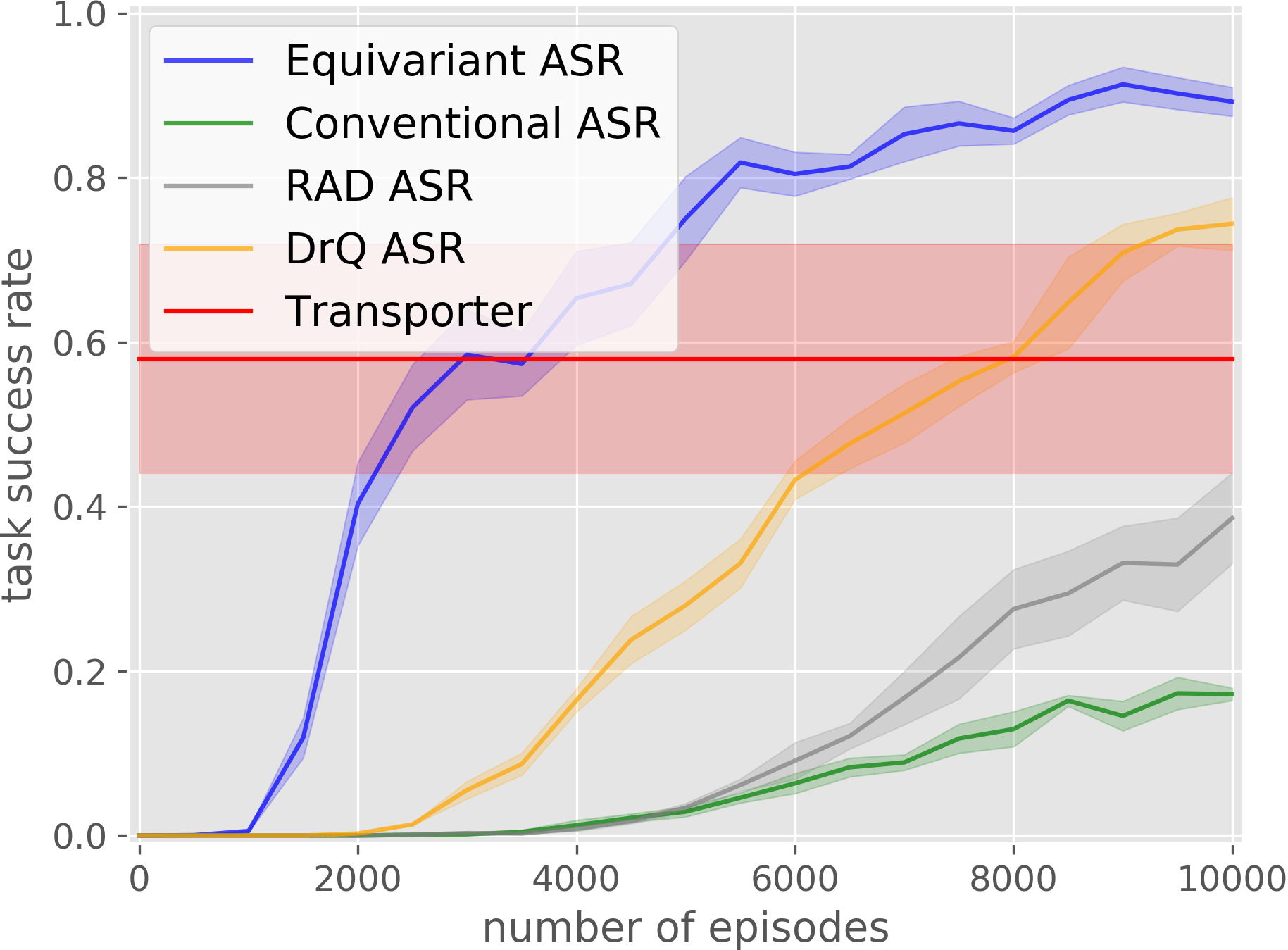}}
\subfloat[Box Palletizing]{\includegraphics[width=0.23\linewidth]{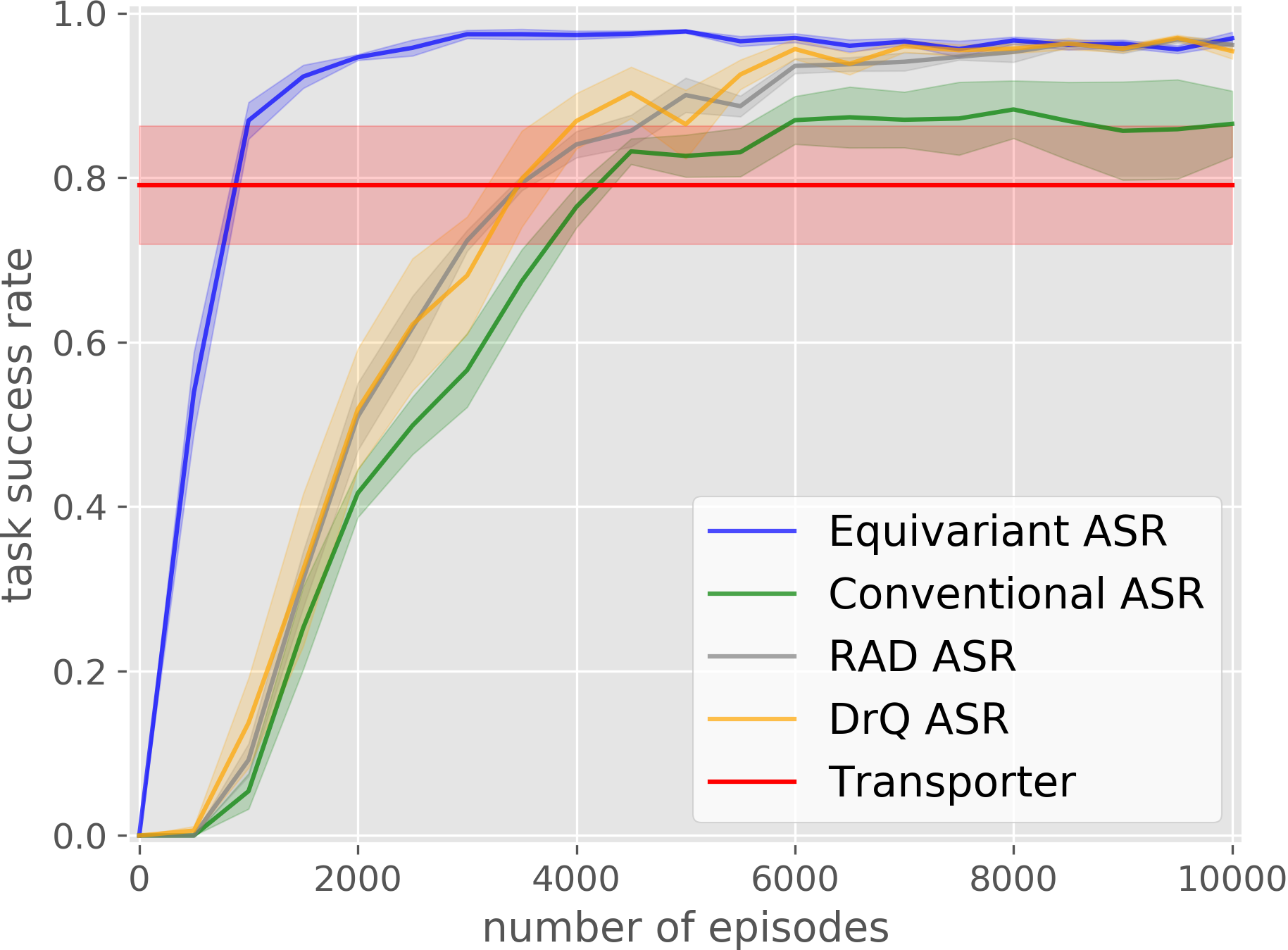}}
\subfloat[Bin Packing]{\includegraphics[width=0.23\linewidth]{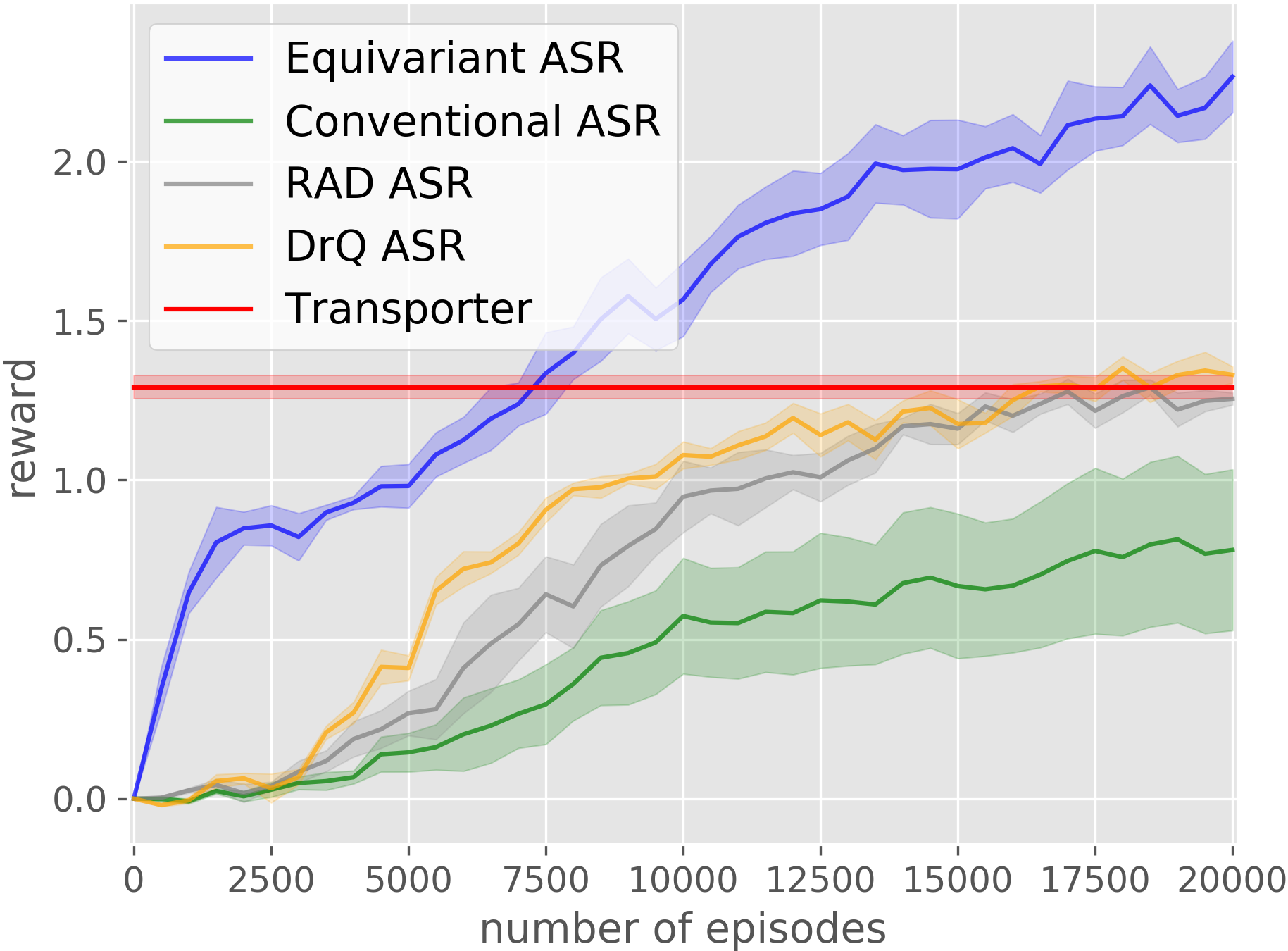}}
\caption{\edit{Comparison of Equivariant ASR (blue) with baselines. Results averaged over four runs. Shading denotes standard error.}}
\vspace{-0.2cm}
\label{fig:exp_asr_equi_vs_cnn}
\end{figure}

\underline{Experimental Comparison with Non-Equivariant Baselines:} We compare equivariant ASR against the following non-equivariant baselines: 1) Conventional ASR: ASR in $\SE(2)$ with conventional CNNs rather than equivariant layers. \edit{2) RAD~\cite{rl_with_aug} ASR: same architecture as (1) but each minibatch is augmented with a random rotation. 3): DrQ~\cite{kostrikov2020image} ASR: same architecture as (1) but each $Q$ target and $Q$ estimate are calculated by averaging over several augmented versions of the sampled transition. The augmentation in (2) and (3) is by random rotations sampled from $C_{32}$, the same group used in the equivariant model. }
4) the Transporter Network~\cite{transporter}. See Appendix~\ref{appendix:baseline_asr} for the network architecture for the baselines. The results in Fig~\ref{fig:exp_asr_equi_vs_cnn} show that equivariant ASR outperforms the other methods on all tasks, followed by DrQ, RAD, and Transporter Net, followed by Conventional ASR.

\begin{figure}[t]
\newlength{\robot}
\setlength{\robot}{0.14\linewidth}
\vspace{-0.2cm}
\centering
\subfloat{
\includegraphics[width=\robot]{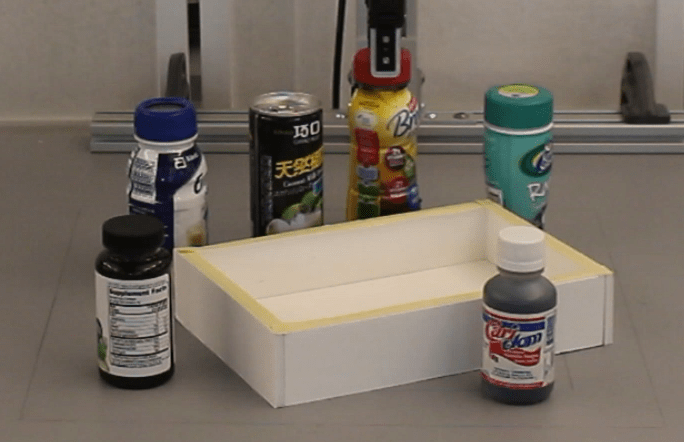}
}
\subfloat{
\includegraphics[width=\robot]{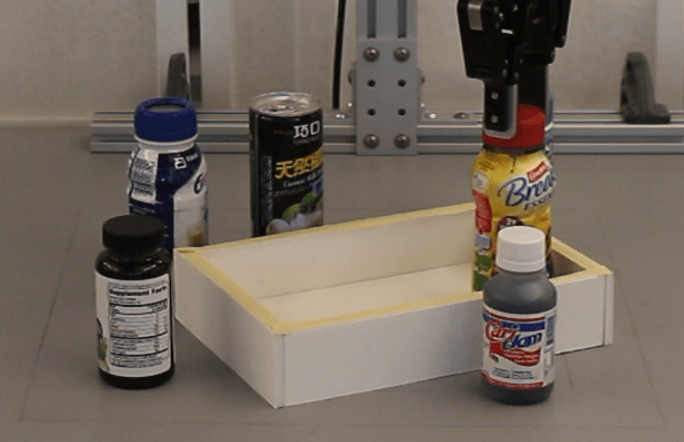}
}
\subfloat{
\includegraphics[width=\robot]{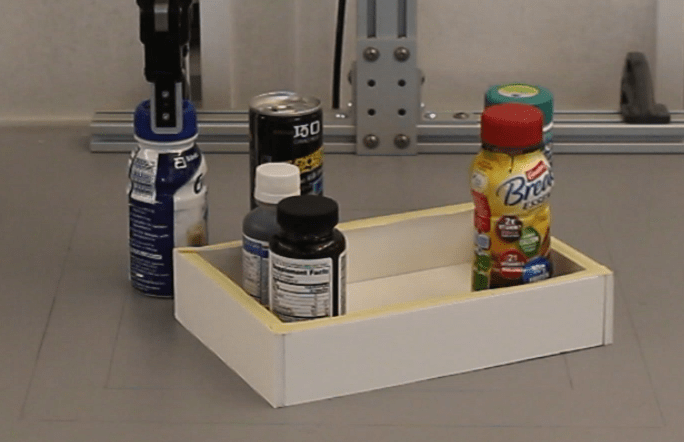}
}
\subfloat{
\includegraphics[width=\robot]{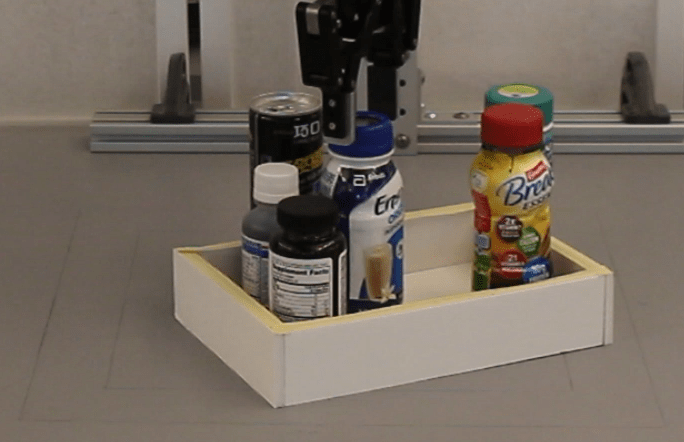}
}
\subfloat{
\includegraphics[width=\robot]{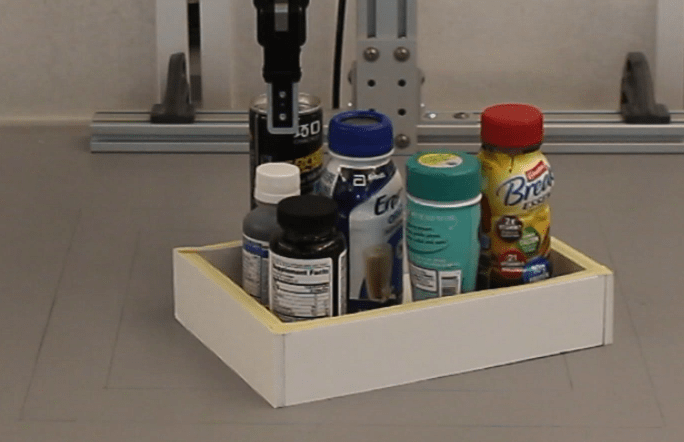}
}
\subfloat{
\includegraphics[width=\robot]{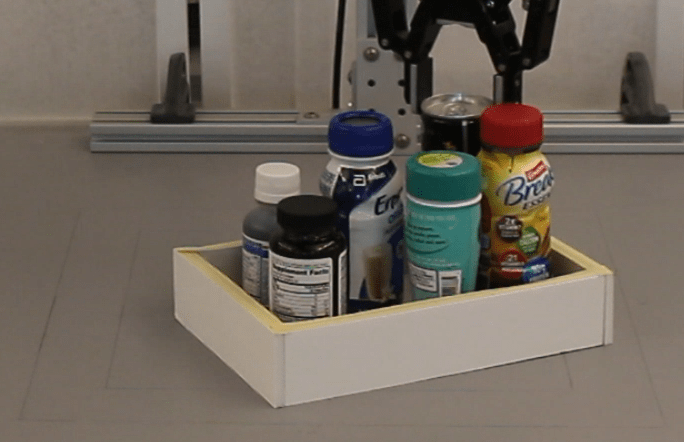}
}\\
\vspace{-0.25cm}
\subfloat{
\includegraphics[width=\robot]{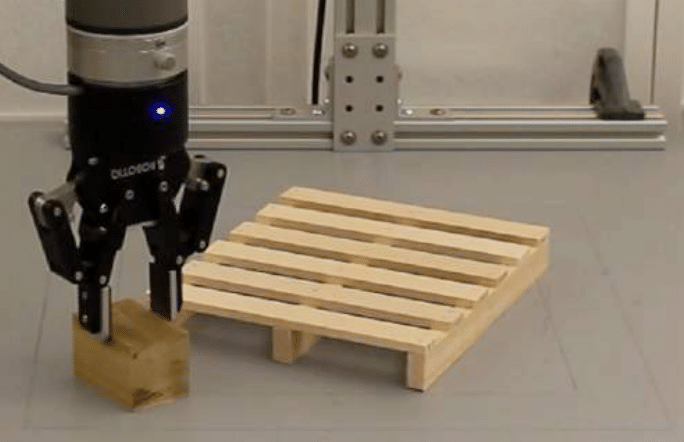}
}
\subfloat{
\includegraphics[width=\robot]{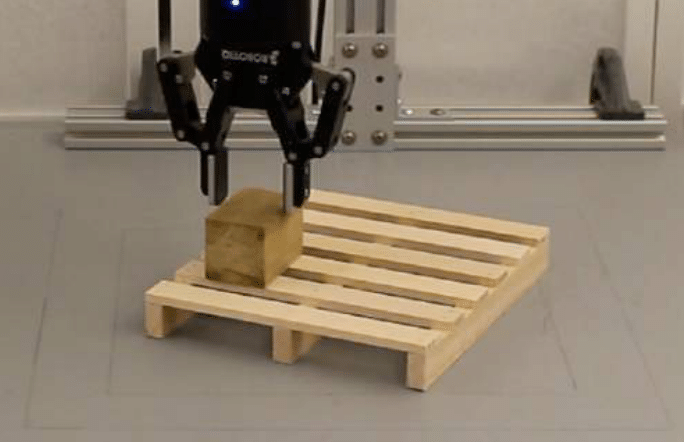}
}
\subfloat{
\includegraphics[width=\robot]{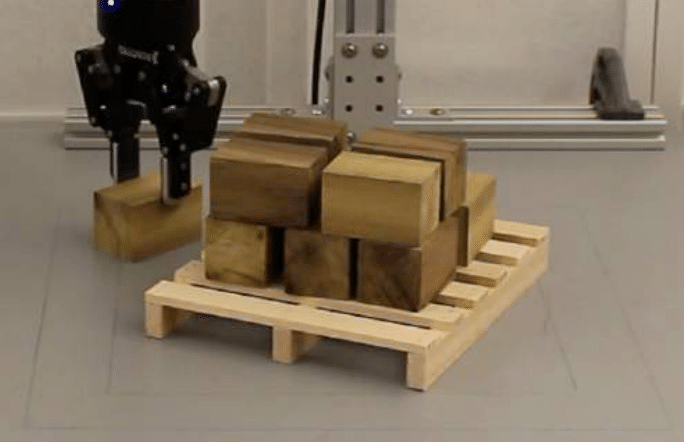}
}
\subfloat{
\includegraphics[width=\robot]{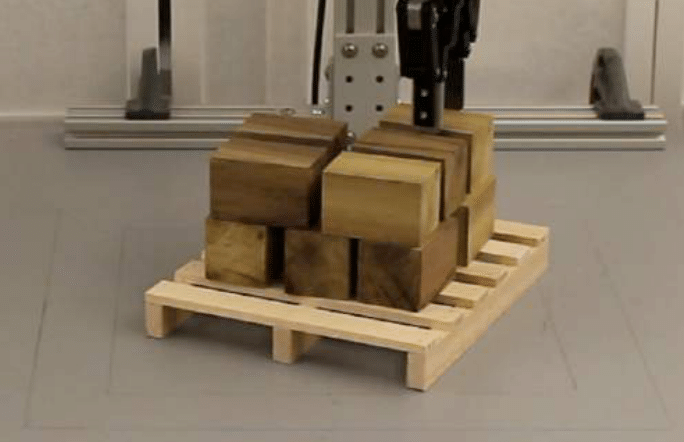}
}
\subfloat{
\includegraphics[width=\robot]{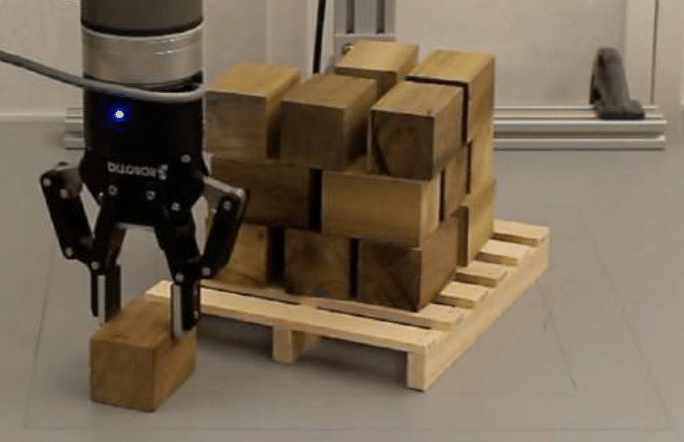}
}
\subfloat{
\includegraphics[width=\robot]{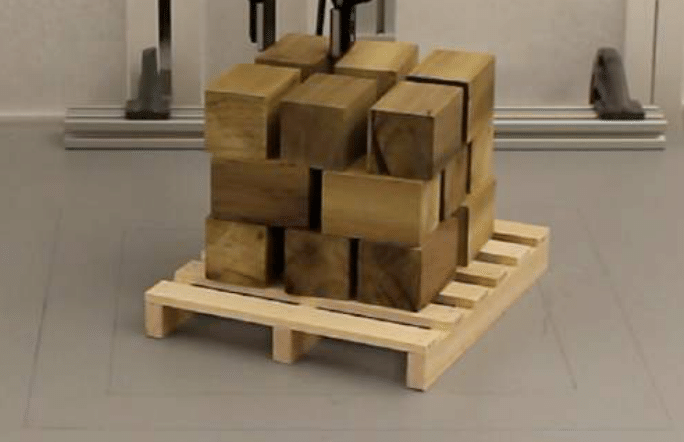}
}
\caption{Top row: the robot finishing the Bottle Arrangement task. Bottom row: the robot finishing the Box Palletizing task. Full episodes are in Appendix~\ref{appendix:robot_exp}.}
\vspace{-0.2cm}
\label{fig:robot_box}
\end{figure}

\begin{wraptable}[8]{r}{0.33\textwidth}
\centering
\scriptsize{
\vspace{-0.6cm}
\begin{tabular}{cc} \\\toprule  
Environment & SR \\\midrule
Bottle Arrangement & 90\%(18/20) \\\midrule
House Building & 100\%(20/20) \\\midrule
Box Palletizing & 95\%(19/20)\\\bottomrule
\end{tabular}
\caption{Robot experiment result}
\label{tab:robot_exp}
}
\end{wraptable}

\underline{Robot Experiment:} We evaluate the trained equivariant ASR models for Bottle Arrangement, House Building, and Box Palletizing on a Universal Robots UR5 arm equipped with a Robotiq 2F-85 gripper. The observation is provided by an Occipital Structure sensor mounted on top of the workspace. Table~\ref{tab:robot_exp} shows the results. In Bottle Arrangement, the robot shows a 90\% success rate. In one of the two failures, the arrangement is not compact enough, leaving no enough space left for the last bottle. In the other failure, the robot arranges the bottles outside of the tray. In the House Building task, the robot succeeds in all 20 episodes. In the Box Palletizing task, the robot demonstrates a 95\% success rate. In the failure, the robot correctly stacks 16 of 18 boxes, but the 17th box's placement position is offset slightly from the the rest of the stack and there is no room to place the last box. The same problem happens in another successful episode, where the fingers squeeze the boxes and make room for the last box. Fig~\ref{fig:robot_box} shows two example episodes in the robot experiment.

\subsection{Equivariant Augmented State $Q$ Functions in $\SE(3)$}

\begin{wrapfigure}[21]{r}{0.46\textwidth}
\centering
\vspace{-0.6cm}
\subfloat[House Building]{
\includegraphics[width=0.1\textwidth]{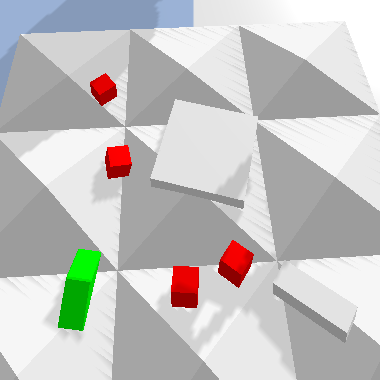}
\includegraphics[width=0.1\textwidth]{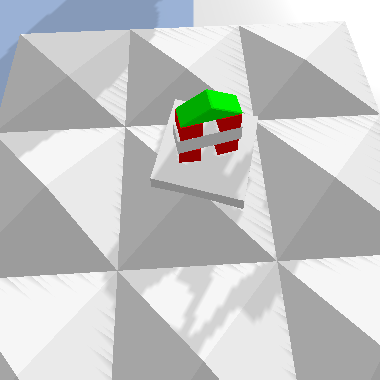}
\label{fig:env_bumpy_h4}
}
\subfloat[Box Palletizing]{
\includegraphics[width=0.1\textwidth]{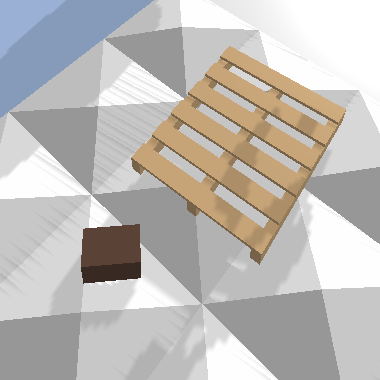}
\includegraphics[width=0.1\textwidth]{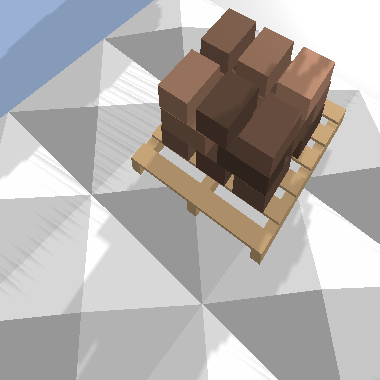}
\label{fig:env_bumpy_box}
}\\
\caption{The 6DOF experimental domains.\label{fig:envs_6d}}
\subfloat[House Building]{\includegraphics[width=0.23\textwidth]{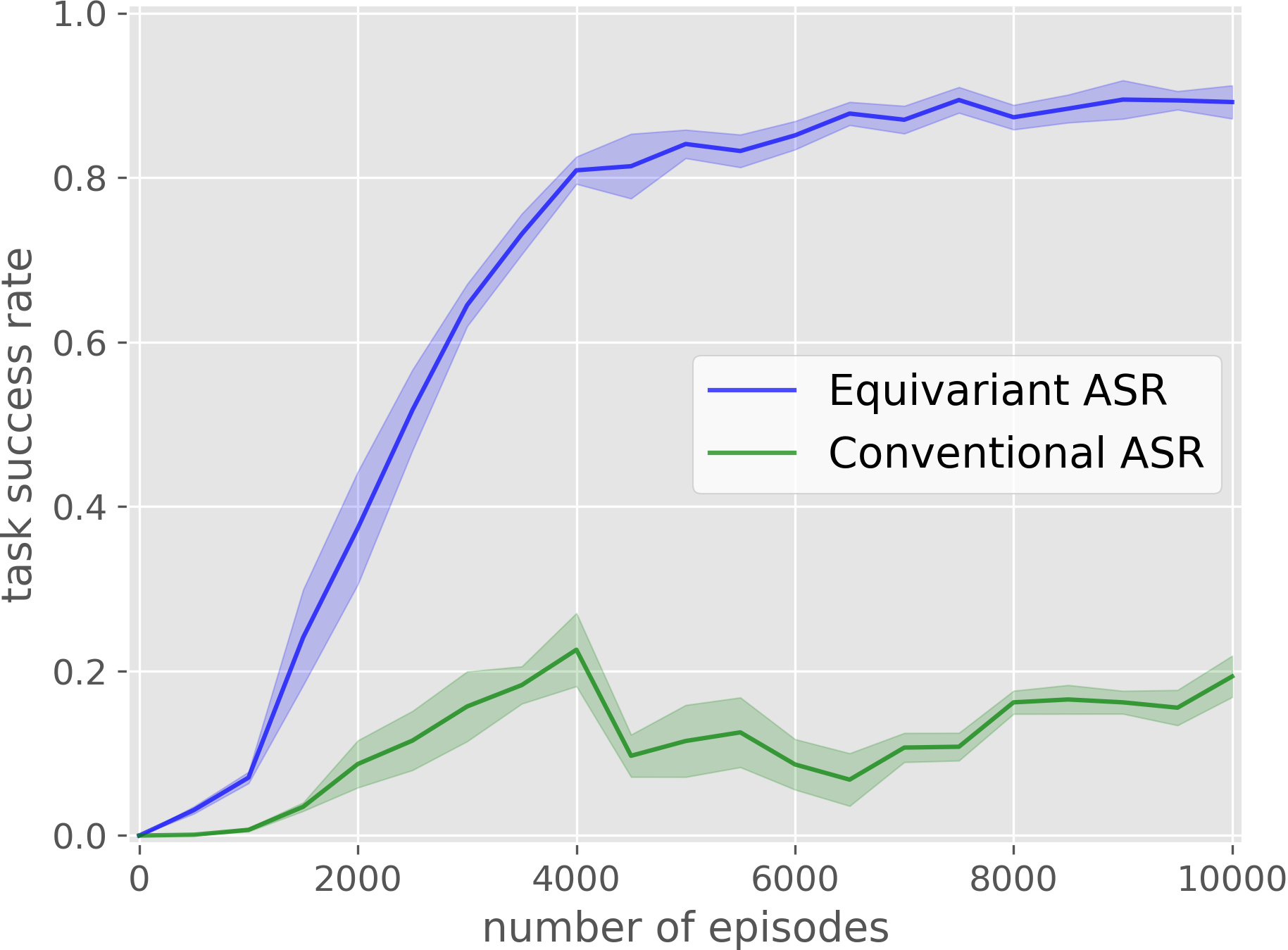}}
\subfloat[Box Palletizing 18]{\includegraphics[width=0.23\textwidth]{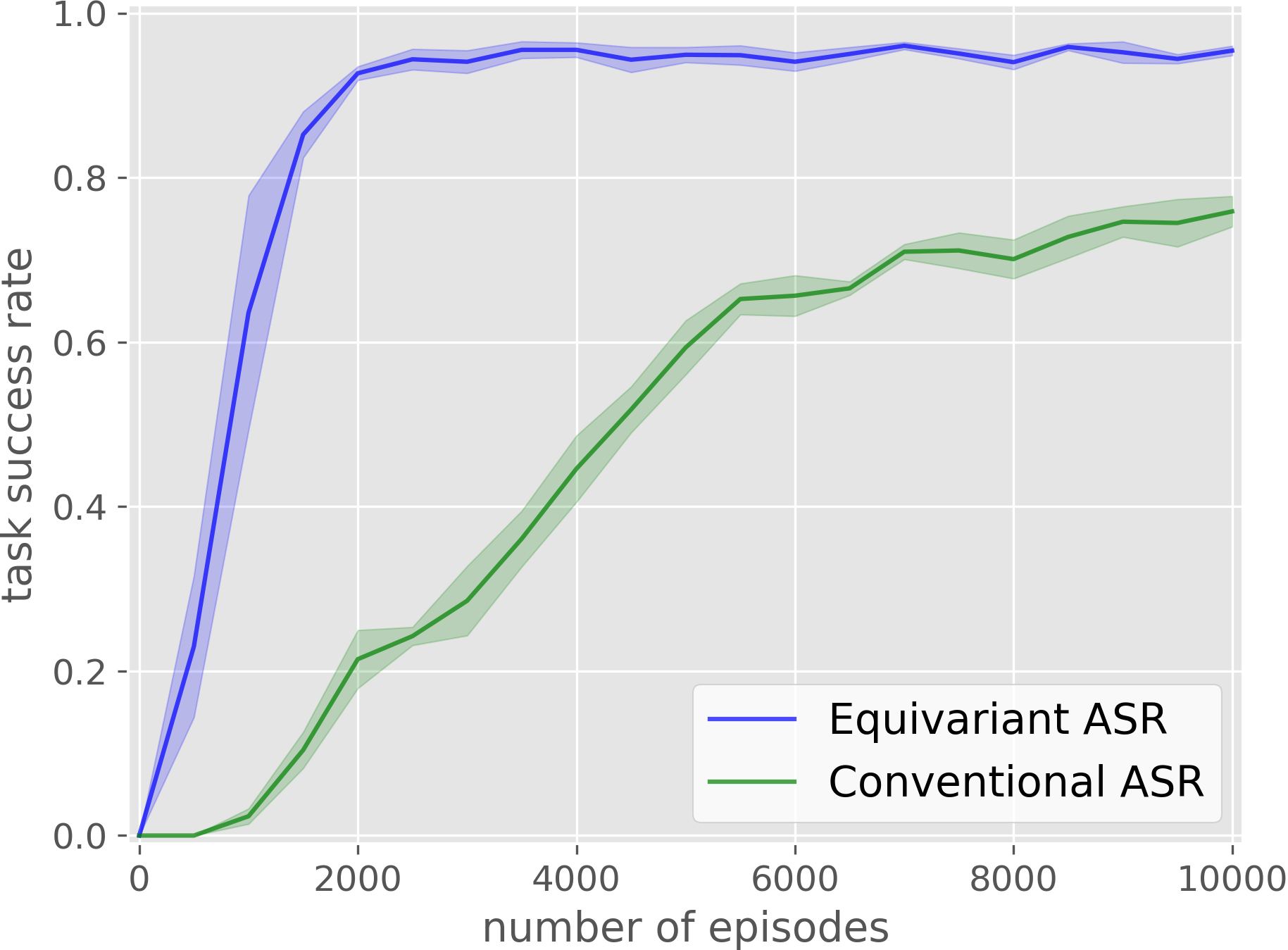}}
\caption{Comparison of the equivariant network with the baseline in 6DOF tasks. Results averaged over four runs. Shading denotes standard error.\label{fig:exp_6d}}
\end{wrapfigure}

A strength of the ASR method is that it can be extended into $\SE(3)$ by adding networks similar to $q_2$ that encode $Q$ values for additional dimensions of the action space~\cite{asrse3}.  Specifically, we add three networks to the $\SE(2)$-equivariant architecture described in Section~\ref{sect:asrse2}, \edit{$q_3$, $q_4$, and $q_5$ encoding $Q$ values for $Z$ (height above the plane), and angles $\Phi$ (rotation in XZ plane) and $\Psi$ (rotation in YZ plane) dimensions of $\SE(3)$. Each of these networks takes as input a stack of 3 orthographic projections of a point cloud along the coordinate planes. The point cloud is re-centered and rotated to encode the partial $\SE(3)$ actions. (see~\cite{asrse3} for details).}

Unfortunately, we cannot easily make these additional networks equivariant using the same methods we have proposed so far because they encode variation outside of $\SE(2)$. Instead, we create an encoding that is approximately equivariant by explicitly transforming the input to these networks in a way that corresponds to a set of candidate robot positions or orientations (called a ``deictic'' encoding in~\cite{deictic}). We will describe this idea using $q_2$ as an example. \edit{Define $q'_2(P) \in \mathbb{R}$ to output the single $Q$ value of the rotation encoded in the input image $P$. Then $q_2$ can be defined as a vector-valued function: $\hat{q}_2(P) = (q'_2(g_1^{-1} P), \dots,  q'_2(g_u^{-1} P))$, where $g_1, \dots, g_u \in C_u$. $\hat{q}_2$ is approximately equivariant because everything learned by $q'_2$ is automatically replicated in all orientations. We design deictic $q_3$, $q_4$, and $q_5$ similarly by selecting a finite subset of $\lbrace g_1,\ldots,g_K \rbrace \subset \SE(3)$ corresponding to the dimension of the action space encoded by each $q_i$. $q_i$ can then be defined by evaluating a network $q'_i$ over input $P$ transformed by $(g_i)_{i=1}^K$. For $q_3$, we evaluate over 36 translations $g_k(z) = z + k (0.18/36) + 0.02$ where $0 \leq k \leq 35$. For $q_4$ and $q_5$, we use rotations $g_k \in \{-\frac{\pi}{8}, -\frac{\pi}{12}, -\frac{\pi}{24}, 0, \frac{\pi}{24}, \frac{\pi}{12}, \frac{\pi}{8}\}$. Note we use $q_2$ for explanation, while our model uses equivariant $q_1$, $q_2$ and deictic $q_3$-$q_5$.  For an ablated version using deictic $q_2$, see Appendix~\ref{appendix:exp_deictic_se2} and~\ref{appendix:exp_deictic_se3}.}
\underline{Comparison to Non-Equivariant Approaches:} We evaluate ASR in $\SE(3)$ in the House Building and Box Palletizing domains. We modified those environments so that objects are presented randomly on a bumpy surface with a maximum out of plane orientation of 15 degrees (Fig~\ref{fig:envs_6d}). In order to succeed, the agent must correctly perform pick and place actions with the needed height and out of plane orientation. 
We evaluated the Equivariant ASR in comparison with a baseline Conventional ASR (same as~\cite{asrse3}). Both methods use SDQfD with 2000 expert demonstration steps. The results are shown in Fig~\ref{fig:exp_6d}. Our proposed approach outperforms the baseline by a significant margin. 


\section{Discussion}

In this paper, we show that equivariant neural network architectures can be used to improve $Q$ learning in spatial action spaces. We propose multiple approaches and model architectures that can be used to accomplish this and demonstrate improved sample efficiency and performance on several robotic manipulation applications both in simulation and on a physical system.
This work has several limitations and directions for future research. First, our methods apply directly only to problems in spatial action spaces. While many robotics problems can be expressed this way, it would clearly be useful to develop equivariant models for policy learning that can be used in other settings. Second, although we extend our ASR approach from $\SE(2)$ to $\SE(3)$ in the last section of this paper, this solution is not fully equivariant in $\SE(3)$ and it may be possible to do better by exploiting methods that are directly equivariant in $\SE(3)$.

\clearpage
\acknowledgments{This work is supported in part by NSF 1724257, NSF 1724191, NSF 
1763878, NSF 1750649, and NASA 80NSSC19K1474. R. Walters is supported by a Postdoctoral Fellowship from the Roux Institute and NSF grants 2107256 and 2134178.}


\bibliography{main}  

\clearpage

\appendix

\section{Proof of Proposition~\ref{prop:qstar}}
\label{appendix:proof}

We need the following Lemma regarding the visual state space $S$ and spatial action space $A$ described in Section~\ref{sect:problem}. We use the following notation: $gS = \{ gs | s \in S\}$ and $gA = \{ ga | a \in A \}$.

\begin{lemma}
\label{lemma:1}
Let $S$ be a visual state space and let $A$ by a spatial action space. Then, $\forall g \in \SE(2)$, we have that $S = gS$ and $A = gA$.
\end{lemma}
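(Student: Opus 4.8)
The plan is to prove each set equality by double inclusion, exploiting the fact that the action of every $g \in \SE(2)$ is a bijection on $S$ (and on $A$). I would treat $S = gS$ in full and then note that $A = gA$ follows by the identical argument with the image component $I$ replaced by the spatial action component $a_{\rm{sp}}$, since the action on $a = (a_{\rm{sp}}, a_{\rm{arb}})$ affects only $a_{\rm{sp}}$ in exactly the way it affects $I$ in a state.

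The forward inclusion $gS \subseteq S$ is exactly the closure property already observed in Section~\ref{sect:problem}: writing $s = (I, s_{\rm{rbt}})$, we have $gs = (gI, s_{\rm{rbt}})$, and since rotating and translating an $n$-channel $h \times w$ image produces another element of $S_{\rm{world}} = \mathbb{R}^{n\times h \times w}$ while $s_{\rm{rbt}}$ is left untouched, $gs \in S$. For the reverse inclusion $S \subseteq gS$, I would fix an arbitrary $s \in S$ and exhibit a preimage. Since $\SE(2)$ is a group, $g^{-1} \in \SE(2)$, so applying the same closure property to $g^{-1}$ gives $g^{-1}s \in S$. Assumption~\ref{assumption:invertability}, invoked for the element $g^{-1}$, then yields $g(g^{-1}s) = s$, displaying $s$ as the image under $g$ of the element $g^{-1}s \in S$. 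Hence $s \in gS$, and combining the two inclusions gives $S = gS$. Conceptually, closure says each $g$ restricts to a map $S \to S$ whose two-sided inverse is the map induced by $g^{-1}$, so $g$ is a bijection of $S$ onto itself, which is precisely the assertion $gS = S$.

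I expect no substantive obstacle: the statement is essentially the general fact that a group action restricts to a bijection of any invariant set. The only points demanding a little care are bookkeeping, namely that both closure and invertibility must be applied to the element $g^{-1}$ rather than to $g$ itself. Both applications are legitimate because $g^{-1} \in \SE(2)$, and this is exactly why the lemma needs the invertibility hypothesis of Assumption~\ref{assumption:invertability} in addition to closure; closure alone would give only $gS \subseteq S$.
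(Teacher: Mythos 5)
Your proof is correct and follows essentially the same double-inclusion strategy as the paper's, combining closure of $S$ (and $A$) under the $\SE(2)$ action with the invertibility of Assumption~\ref{assumption:invertability}. If anything, your version is slightly more careful: the paper asserts $S \subseteq gS$ ``follows from closure'' without spelling out that one must apply closure to $g^{-1}$ and then invertibility to recover $s = g(g^{-1}s) \in gS$, which is exactly the bookkeeping you make explicit.
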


\begin{proof}
First, consider the claim that $S = gS$. We will show 1) $S \subseteq gS$ and 2) $gS \subseteq S$. 1) $S \subseteq gS$: This follows from the closure of state under $g \in \SE(2)$. 2) $gS \subseteq S$: Let $s' \in gS$. By the definition of $gS$, $\exists s \in S$ such that $gs = s'$ and $gs \in gS$. Multiplying both sides by $g^{-1}$, we have $g^{-1}(gs) \in g^{-1}(gS)$. Using Assumption~\ref{assumption:invertability}, we have $s \in S$. Using the closure of state under $g$, we have $gs \in S$ or $s' \in S$. A parallel argument can be used to show $A = gA$.
\end{proof}

\begin{customprop}{4.1}
Given an MDP $\mathcal{M} = (S,A,T,R,\gamma)$ for which Assumptions~\ref{assumption:goalinv}, \ref{assumption:transinv}, and \ref{assumption:invertability} are satisfied, the optimal $Q$ function is invariant to translation and rotation, i.e. $Q^*(s,a) = Q^*(gs,ga)$, for all $g \in \SE(2)$.
\end{customprop}

\begin{proof}[Proof of Proposition~\ref{prop:qstar}]

The Bellman optimality equations for $Q^*(s,a)$ and $Q^*(gs,ga)$ are, respectively:
\begin{equation}
\label{eqn:pf1}
Q^*(s,a) = R(s,a) + \gamma \sup_{a' \in A} \int_{s' \in S} T(s,a,s') Q^*(s',a'),
\end{equation}
and
\begin{equation}
\label{eqn:pf2}
Q^*(gs,ga) = R(gs,ga) + \gamma \sup_{a' \in A} \int_{s' \in S} T(gs,ga,s') Q^*(s',a').
\end{equation}
Using Lemma~\ref{lemma:1}, we can rewrite Eq.~\ref{eqn:pf2} as:
\begin{align}
\label{eqn:pf3}
Q^*(gs,ga) & = R(gs,ga) + \gamma \sup_{\bar{a}' \in gA} \int_{\bar{s}' \in gS} T(gs,ga,\bar{s}') Q^*(\bar{s}',\bar{a}') \\
& = R(gs,ga) + \gamma \sup_{a' \in A} \int_{s' \in S} T(gs,ga,gs') Q^*(gs',ga').
\end{align}
Using Assumptions~\ref{assumption:goalinv} and~\ref{assumption:transinv}, this can be written:
\begin{equation}
\label{eqn:pf4}
Q^*(gs,ga) = R(s,a) + \gamma \sup_{a' \in A} \int_{s' \in S} T(s,a,s') Q^*(gs',ga').
\end{equation}
Now, define a new function $\bar{Q}$ such that $\forall s,a \in S \times A$, $\bar{Q}(s,a) = Q(gs,ga)$ and substitute into Eq.~\ref{eqn:pf4}, resulting in:
\begin{equation}
\label{eqn:pf5}
\bar{Q}^*(s,a) = R(s,a) + \gamma \sup_{a' \in A} \int_{s' \in S} T(s,a,s') \bar{Q}^*(s',a').
\end{equation}
Notice that Eq.~\ref{eqn:pf5} and Eq.~\ref{eqn:pf1} are the same Bellman equation. Since solutions to the Bellman equation are unique, we have that $\forall s,a \in S \times A$, $Q^*(s,a) = \bar{Q}^*(s,a) = Q^*(gs,ga)$.


\end{proof}

\section{Equivariant Kernel Constraint} 
\label{appendix:equi_kernel_constraint}
Consider a standard convolutional layer that takes an $n \times h \times w$ feature map as input and produces an $m \times h \times w$ map as output. It computes $h_i(x) = \sum_{y,j} K_{ij}(y) I_j(x+y)$, where $j \in \{1 \dots n\}$, $i \in \{1 \dots m\}$, $I_j(x)$ is the value of the input at the $x$ pixel and the $j$ channel, $h_i(x)$ is the output at pixel $x$ and channel $i$, and $K_{ij}(y)$ is the kernel value at $y$ for the $j$ input and $i$ output channels. For a standard convolutional layer, $I_j(x)$, $h_i(x)$, and $K_{ij}(y)$ are all scalars. However, for an equivariant network over $C_u$, $h_i(x)$ becomes a $u$-element vector and $K_{ij}(y)$ becomes a $u \times u$ matrix. The $u$ elements of $h_i(x)$ encode the feature values of pixel $x$ at channel $i$ at each orientation in $C_u$. The kernel constraint is~\cite{cohen_equicnn_theory}:
\begin{equation}
K_{ij}(g_\theta y) = \rho_{\rm{out}}(g_\theta) K_{ij}(y) \rho_{\rm{in}}(g_\theta)^{-1},
\end{equation}
where $\rho_{\rm{in}}(g_\theta)$ and $\rho_{\rm{out}}(g_\theta)$ are the permutation matrix of the group element $g_\theta$ (note that for the first layer, $K_{ij}(y)$ will be a $1\times u$ matrix, and $\rho_{\rm{in}}(g_\theta)$ will be 1).

\section{Experimental Domains}
\label{appendix:envs}

\begin{figure}[t]
\centering
\subfloat[]{
\includegraphics[height=2cm]{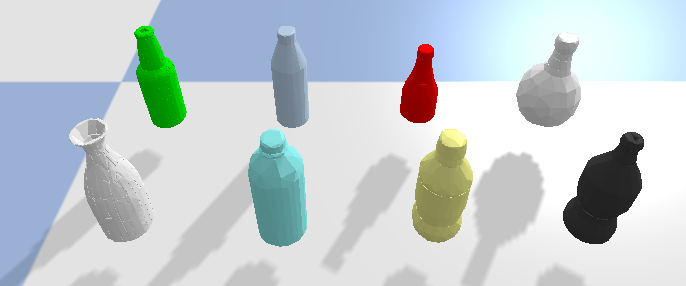}
\label{fig:env_all_bottle}
}
\subfloat[]{
\includegraphics[height=2cm]{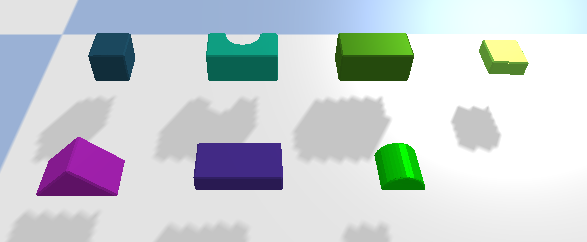}
\label{fig:env_all_packing}
}
\caption{(a) All eight bottle models in the Bottle Arrangement task. (b) All seven object models in the Bin Packing task.}
\end{figure}
\subsection{Block Stacking}
In the Block Stacking task (Fig~\ref{fig:envs_4s}), there are four cubic blocks with a fixed size of $3cm\times 3cm\times 3cm$ randomly placed in the workspace. The goal is stacking all four blocks in a stack. An optimal policy requires six steps to finish this task, and the maximal number of steps per episode is 10.
\subsection{Bottle Arrangement}
In the Bottle Arrangement task (Fig~\ref{fig:envs_bt}), six bottles with random shapes (sampled from 8 different shapes shown in Fig~\ref{fig:env_all_bottle}. The bottle shapes are generated from the 3DNet dataset~\cite{3dnet}. The sizes of each bottle are around $5cm\times 5cm\times 14cm$) and a tray with a size of $24cm\times 16cm\times 5cm$ are randomly placed in the workspace. The agent needs to arrange all six bottles in the tray. An optimal policy requires 12 steps to finish this task, and the maximal number of steps per episode is 20.
\subsection{House Building}
In the House Building task (Fig~\ref{fig:envs_h4}), there are four cubes with a size of $3cm\times 3cm\times 3cm$, a brick with a size of $12cm\times 3cm\times 3cm$, and a triangle block with a bounding box size of around $12cm\times 3cm\times 3cm$. The agent needs to stack those blocks in a specific way to build a house-like block structure as shown in Fig~\ref{fig:envs_h4}. An optimal policy requires 10 steps to finish this task, and the maximal number of steps per episode is 20.
\subsection{Covid Test}
In the Covid Test task (Fig~\ref{fig:envs_covid}), there is a new tube box (purple), a test area (gray), and a used tube box (yellow) placed arbitrarily in the workspace but adjacent to one another. Three swabs with a size of $7cm \times 1cm \times 1cm$ and three tubes with a size of $8cm \times 1.7cm \times 1.7cm$ are initialized in the new tube box. To supervise a COVID test, the robot needs to present a pair of a new swab and a new tube from the new tube box to the test area (see the middle figure in Fig~\ref{fig:envs_covid}). The simulator simulates the user testing COVID by putting the swab into the tube and randomly place the used tube in the test area. Then the robot needs to re-collect the used tube into the used tube box. Each episode includes three rounds of COVID test. An optimal policy requires 18 steps to finish this task, and the maximal number of steps per episode is 30.


\subsection{Box Palletizing}
In the Box Palletizing task (Fig~\ref{fig:envs_box18}) (some object models are derived from~\citet{transporter}), a pallet with a size of $23.2cm\times 19.2cm\times 3cm$ is randomly placed in the workspace. The agent needs to stack 18 boxes with a size of $7.2cm\times 4.5cm\times 4.5cm$ as shown in Fig~\ref{fig:envs_box18}. At the beginning of each episode and after the agent correctly places a box on the pallet, a new box will be randomly placed in the empty workspace. An optimal policy requires 36 steps to finish this task, and the maximal number of steps per episode is 40.

\subsection{Bin Packing}
In the Bin Packing task (Fig~\ref{fig:envs_bin}), eight objects (the shape of each is randomly sampled from seven different object in Fig~\ref{fig:env_all_packing}. Object models are derived from~\citet{zeng_pushing}) with a maximum size of $8cm\times 4cm\times 4cm$ and a minimum size of $4cm\times 4cm \times 2cm$ and a bin with a size of $17.6cm\times 14.4cm\times 8cm$ are randomly placed in the workspace. The agent needs to pack all eight objects in the bin while minimizing the highest point ($h_{max}$ cm) of all objects in the bin. The Bin Packing task has real value sparse rewards: a reward of $8-h_{max}$ is given when all objects are placed in the bin. An optimal policy requires 16 steps to finish this task, and the maximal number of steps per episode is 20.

\subsection{$\SE(3)$ House Building and Box Palletizing}
In the $\SE(3)$ House Building (Fig~\ref{fig:env_bumpy_h4}) and the Box Palletizing (Fig~\ref{fig:env_bumpy_box}) tasks, a bumpy surface is generated by nine pyramid shapes with a random angle sampled from 0 to 15 degrees. The orientation of the bumpy surface along the z axis is randomly sampled at the beginning of each episode. In the Bumpy House Building task, a flat platform with a size of $13cm\times 13cm$ and a height same as the highest bump is randomly placed in the workspace. The agent needs to build the house on top of the platform. In the Bumpy Box Palletizing task, the pallet is raised by the same height as the highest bump (so that it will be horizontal to the ground). All other parameters mirror the original House Building task and the original Box Palletizing task.

\section{Network Architecture}
\label{appendix:network}
All of our network architectures are implemented using PyTorch~\cite{pytorch}. We use the e2cnn~\cite{e2cnn} library to implement the steerable convolutional layers. Appendix~\ref{appendix:network_fcn} and Appendix~\ref{appendix:network_asr} respectively show the network architectures of the equivariant FCN and equivariant ASR using the dynamic filter for partial equivariance. Appendix~\ref{appendix:network_exp} shows the architecture of lift expansion partial equivariance. Appendix~\ref{appendix:network_deictic} shows the architecture of the deictic encoding.
\begin{figure}[t]
\centering
\subfloat[Equivariant FCN Network Architecture]{\includegraphics[width=0.85\linewidth]{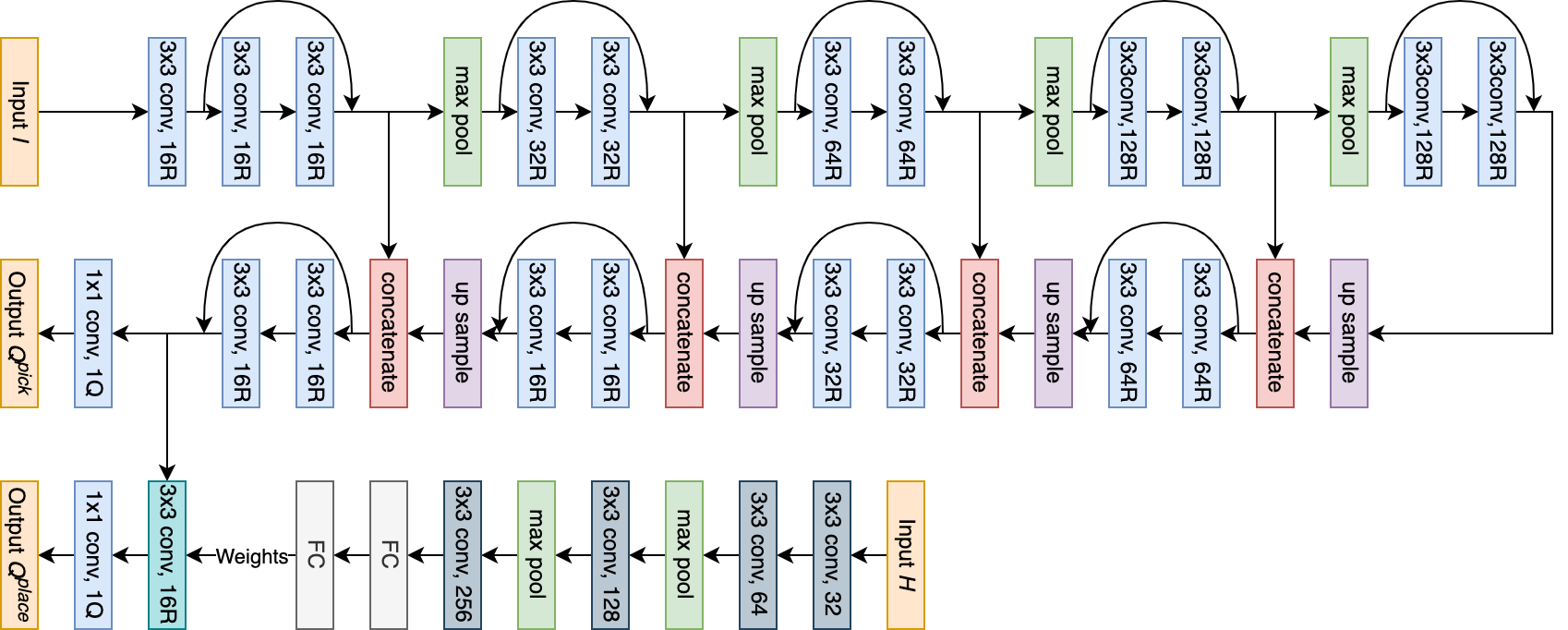}\label{fig:network_equi_fcn}}\\
\subfloat[Equivariant ASR Network Architecture]{\includegraphics[width=\linewidth]{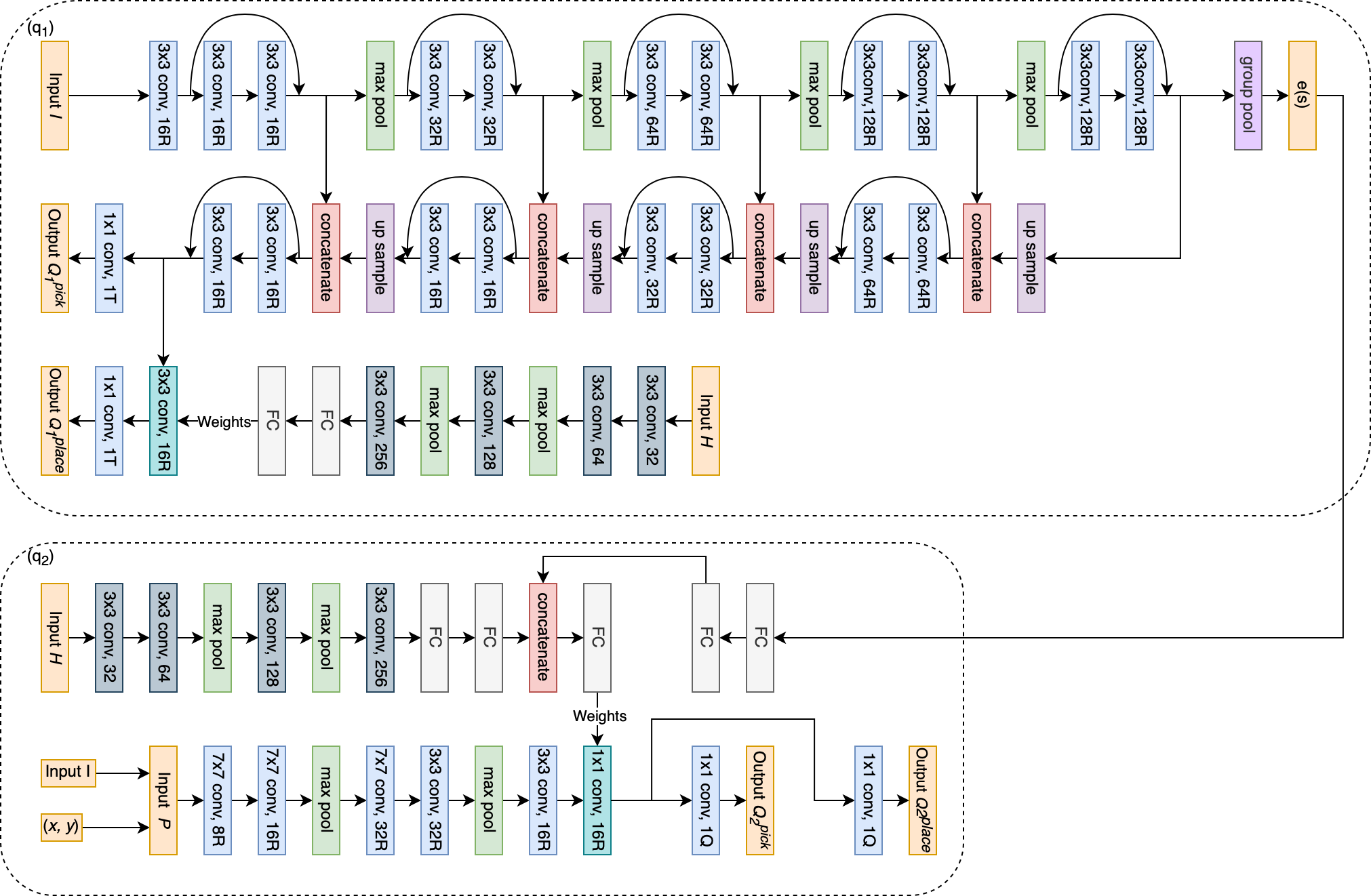}\label{fig:network_equi_asr}}
\caption{The architecture of the Equivariant FCN Network (a) and the equivariant ASR Network (b). ReLU nonlinearity is omitted in the figure. A convolutional layer with a suffix of R indicates a regular representation layer (e.g., 16R is a 16-channel regular representation layer); a convolution layer with a suffix of Q indicates a quotient representation layer (e.g., 1Q is a 1-channel quotient representation layer); a convolution layer with a suffix of T indicates a trivial representation layer (e.g., 1T is a 1-channel trivial representation layer); a convolutional layer with a suffix of a number indicates a conventional convolutional layer. The convolutional layer colored in cyan is the dynamic filter layer whose weights are from the FC layer pointing to it.}
\label{fig:network}
\end{figure}

\subsection{Equivariant FCN Architecture}
\label{appendix:network_fcn}
In the Equivariant FCN architecture (Fig~\ref{fig:network_equi_fcn}), we use we use a 16-stride UNet~\cite{unet} backbone where all layers are steerable layers. The input is viewed as a trivial representation and is turned into a 16-channel regular representation feature map after the first layer. Every layer afterward in the UNet uses the regular representation, and the output of the UNet is a 16-channel regular representation feature map. This feature map is sent to a quotient representation layer to generate the pick $Q$ value maps for each $\theta$. For the place $Q$ values, the non-equivariant information from $H$ must be Incorporated. $H$ is sent to 4 conventional convolutional layers followed by 2 FC layers. The output is a vector with the same size as the number of the free weights in a 16-channel regular representation steerable layer with a kernel size of $3\times 3$. This output vector is expanded into a steerable convolutional kernel and is convolved with the output of the UNet. The result is sent to a quotient representation layer to generate the place $Q$ value maps for each $\theta$.

\subsection{Equivariant ASR Architecture}
\label{appendix:network_asr}
Fig~\ref{fig:network_equi_asr} shows the Equivariant ASR network architecture. The $q_1$ architecture is very similar to the Equivariant FCN network. Its output is a trivial representation instead of a regular representation to generate only one $Q$ map for the $x, y$ positions. The bottleneck feature map is passed through a group pooling layer (a max pooling over the group's dimension) to form $e(s)$, a state encoding that is used by $q_2$. $q_2$ uses $e(s)$ and the feature vector from $H$ to generate the weights for a steerable dynamic filter. $q_2$ processes $P$ using a set of steerable convolution layers in the regular representation, then convolves the feature map with the dynamic filter. The result of the dynamic filter layer is sent to two separate quotient representation layers to generate pick and place values for each $\theta$.

\subsection{Lift Expansion Architecture}
\label{appendix:network_exp}
\begin{figure}[t]
\centering
\includegraphics[width=0.9\linewidth]{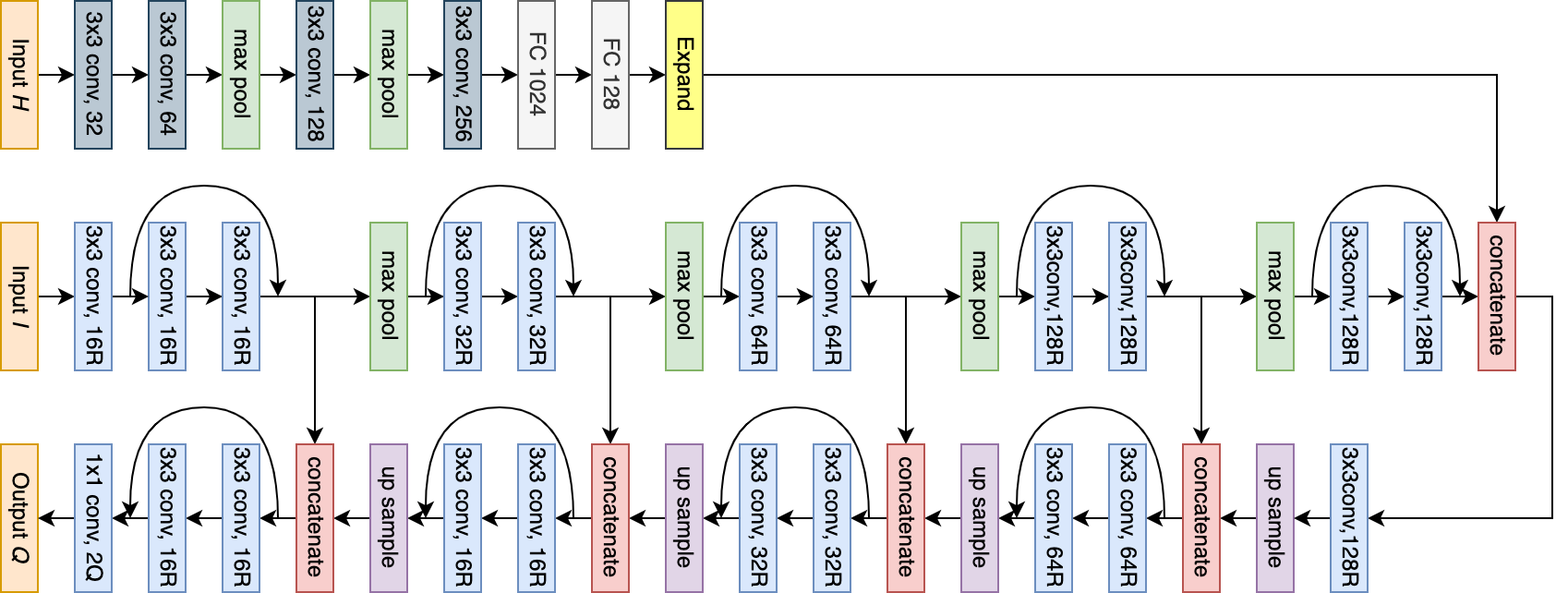}
\caption{The Equivariant FCN with Lift Expansion}
\label{fig:network_exp_fcn}
\end{figure}
Fig~\ref{fig:network_exp_fcn} shows the equivariant FCN using lift expansion for encoding the partial equivariance property. The 128-vector (the output of FC 128 in the top row) is tiled to the same size as the 128-channel regular representation feature map (the output of the rightmost convolutional layer in the middle row) and concatenated. In ASR, the same Lift Expansion network can be used in $q_1$.

\subsection{Deictic Encoding Architecture}
\label{appendix:network_deictic}
\begin{figure}[t]
\centering
\includegraphics[width=0.6\linewidth]{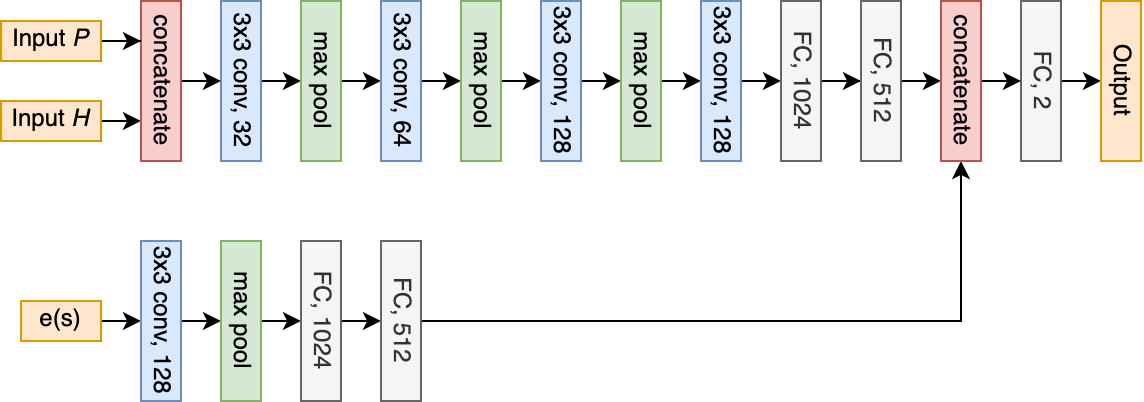}
\caption{Deictic Encoding Network Architecture}
\label{fig:network_deictic}
\end{figure}
Fig~\ref{fig:network_deictic} shows the network architecture of the Deictic Encoding network. Its output is a 2-vector, representing the values for pick and place with respect to the action (e.g., top-down rotation $\theta$ in $q_2$) encoded in the input patch $P$.

\section{Baseline Details}
\subsection{FCN Baselines}
\label{appendix:baseline_fcn}
\begin{figure}[t]
\centering
\includegraphics[width=0.8\linewidth]{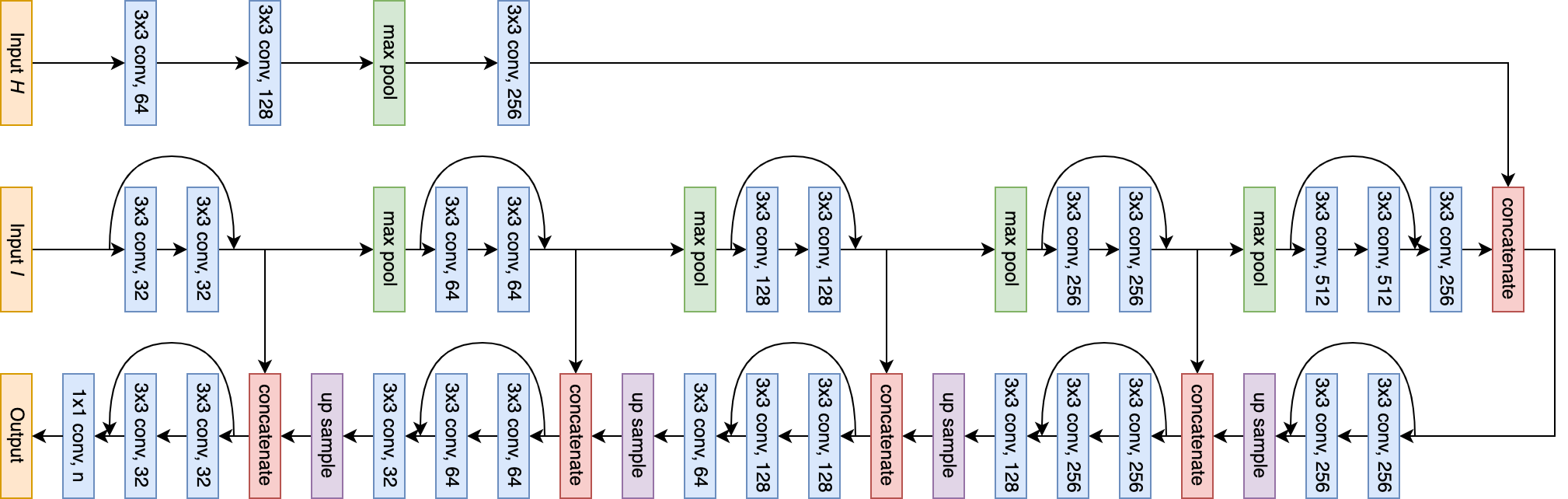}
\caption{The baseline FCN architecture}
\label{fig:network_fcn_baseline}
\end{figure}

Fig~\ref{fig:network_fcn_baseline} shows the baseline FCN architecture. For the Conventional FCN baseline, the number of output channels $n=2\times |\Theta|=12$ (i.e., one pick and one place channel for each rotation). 
\edit{The RAD~\cite{rl_with_aug} baseline uses the same baseline architecture, but during the training, each transition in the minibatch is applied with a rotational augmentation randomly sampled from $C_{12}$. The DrQ~\cite{kostrikov2020image} baseline uses the same baseline architecture, but the $Q$ targets are calculated by averaging over $K$ augmented versions of the sampled transitions; the $Q$ estimates are calculated by averaging over $M$ augmented versions of the sampled transitions. Random rotation sampled from $C_{12}$ is used for the augmentation, and we use $K=M=2$ as in~\cite{kostrikov2020image}. Note that in RAD and DrQ, since we are learning an equivariant $Q$ network instead of an invariant $Q$ network, we apply the rotational augmentation on both the state and action, rather than only augmenting the state as in the prior works.} The Rot FCN baseline uses the same network backbone, but the number of output channels $n=2$ (for pick and place, respectively). Rotations are encoded by rotating the input and output accordingly for each $\theta$ in the action space~\cite{zeng_pushing}. The Transporter baseline uses three FCNs (one for picking and two for placing) with the same FCN backbone shown in Fig~\ref{fig:network_fcn_baseline}. For placing, there are two networks with the same architecture for features (with an input of $I$) and filters (with an input of $H$), and the outputs of both are 3-channel feature maps. The correlation between them forms the 1-channel output. Rotations are encoded by rotating the input $H$ for each $\theta$ in the action space. The pick network is the same as the Rot FCN baseline.

\subsection{ASR Baselines}
\label{appendix:baseline_asr}
\begin{figure}[t]
\centering
\includegraphics[width=0.8\linewidth]{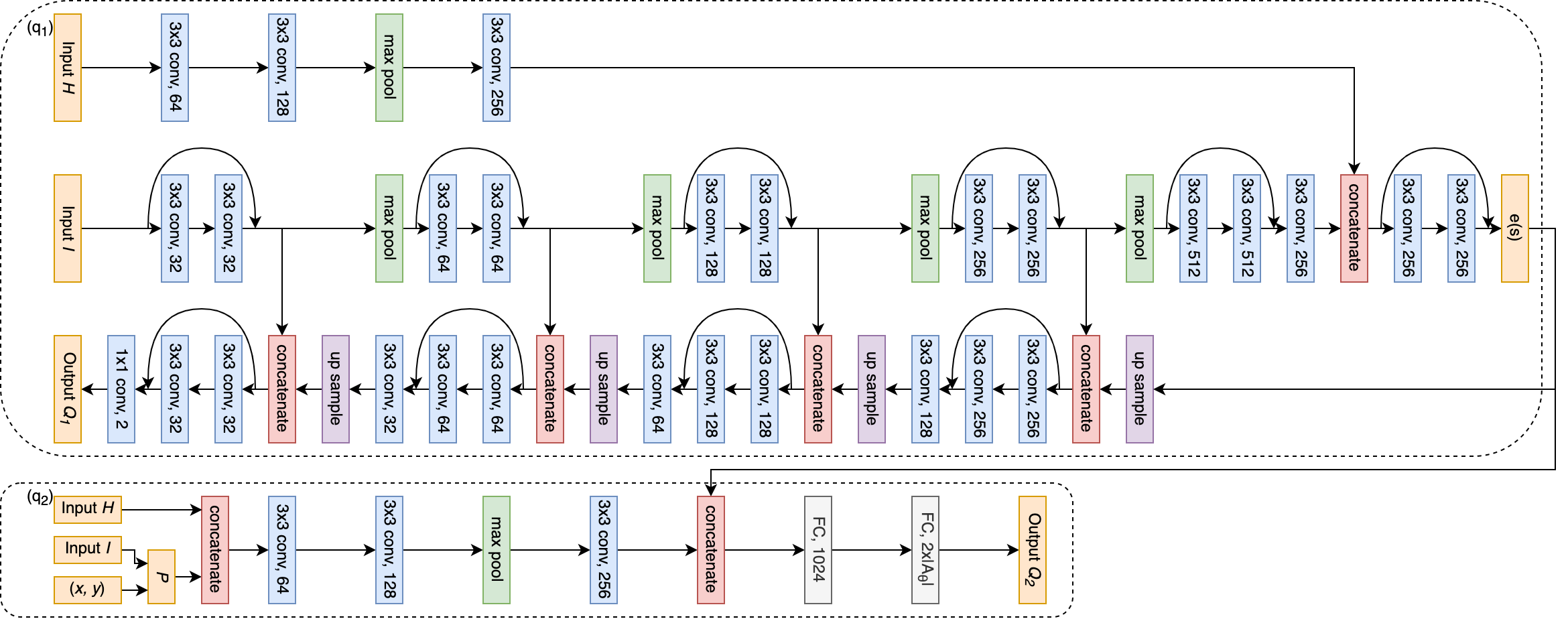}
\caption{The baseline ASR architecture}
\label{fig:network_asr_baseline}
\end{figure}
Fig~\ref{fig:network_equi_asr} shows the network architecture for the Conventional ASR baseline. \edit{The RAD~\cite{rl_with_aug} baseline uses the same baseline architecture, but during the training, each transition in the minibatch is applied with a rotational augmentation randomly sampled from $C_{32}$. The DrQ~\cite{kostrikov2020image} baseline uses the same baseline architecture, but the $Q$ targets are calculated by averaging over $K$ augmented versions of the sampled transitions; the $Q$ estimates are calculated by averaging over $M$ augmented versions of the sampled transitions. Random rotation sampled from $C_{32}$ is used for the augmentation, and we use $K=M=2$ as in~\cite{kostrikov2020image}.} The Transporter network baseline uses the same architecture as in Appendix~\ref{appendix:baseline_fcn}.

\section{Training Details}
\subsection{SDQfD}
\label{appendix:sdqfd}
SDQfD (Strict Deep $Q$ Learning from Demonstrations~\cite{asrse3}) is a variation of DQfD~\cite{dqfd} that is better suited for large action spaces. It penalizes all actions that have a $Q$ value larger than the expert action's $Q$ value minus a non-expert margin. Let $A^{s,a^e}$ be the set of actions to penalize, $A^{s,a^e}$ is defined as: 
\begin{equation}
A^{s,a^e} = \big\{ a \in A \big| Q(s,a) > Q(s, a^e)-l(a^e, a) \big\}
\end{equation}
where $l(a^e, a)=l$ if $a^e \neq a$ and $0$ otherwise. The margin loss term is defined as:
\begin{equation}
\mathcal{L}_{SLM} = \frac{1}{|A^{s,a^e}|}\sum_{a \in A^{s,a^e}} \Big[Q(s, a) + l(a^e, a) - Q(s, a^e) \Big]
\label{eqn:slm}
\end{equation}

$\mathcal{L}_{SLM}$ is combined with the TD loss $\mathcal{L}_{TD}$: $\mathcal{L}=\mathcal{L}_{TD} + w\mathcal{L}_{SLM}$ where $w$ is the weight for the margin loss. Note that $\mathcal{L}_{SLM}$ is only applied for expert transitions, while on-policy transitions only apply the TD loss term.

\subsection{Parameters}
\label{appendix:parameters}
We implement our experimental environments using the PyBullet simulator~\cite{pybullet}. The workspace has a size of $0.4m\times 0.4m$. In Section~\ref{sec:exp_equi_fcn}, $I$ covers the workspace with a size of $90\times90$ pixels, and is padded with 0 to $128\times128$ pixels (this padding is required for the Rot FCN baseline because it needs to rotate the image to encode $\theta$. To ensure a fair comparison, we apply the same padding to all methods). In Section~\ref{sec:exp_equi_asr}, $I$ covers the workspace with a size of $128\times128$ pixels. The in-hand image $H$ is a $24\times 24$ image crop centered and aligned with the previous pick in $\SE(2)$ experiments. In $\SE(3)$ experiments, $H$ is a three-channel orthographic projection image (with a size of $3\times 40\times40$) of a point cloud centered and aligned with the previous pick. The image patch $P$ has a size of $24\times 24$ in $\SE(2)$ experiments and a size of $40\times 40$ in $\SE(3)$ experiments. In $\SE(2)$ experiments, $z$ is selected by reading the height value of the area around the selected $xy$ position.

We train our models using PyTorch~\cite{pytorch} with the Adam optimizer~\cite{adam} with a learning rate of $10^{-4}$ and weight decay of $10^{-5}$. We use Huber loss~\cite{huberloss} for the TD loss and cross entropy loss for the behavior cloning loss. The discount factor $\gamma$ is $0.95$. The batch size is $16$ for SDQfD agents and $8$ for behavior cloning agents. In SDQfD, we use the prioritized replay buffer~\cite{per} with prioritized replay exponent $\alpha=0.6$ and prioritized importance sampling exponent $\beta_0=0.4$ as in~\citet{per}. The expert transitions are given a priority bonus of $\epsilon_d=1$ as in~\citet{dqfd}. The buffer has a size of 100,000 transitions. The weight $w$ for the margin loss term is $0.1$, and the margin $l=0.1$.

\section{Ablation Studies}
\subsection{\edit{RAD and DrQ with more data augmentation operators}}
\label{appendix:exp_rad_drq}
\begin{figure}[t]
\centering
\subfloat[Block Stacking]{\includegraphics[width=0.25\linewidth]{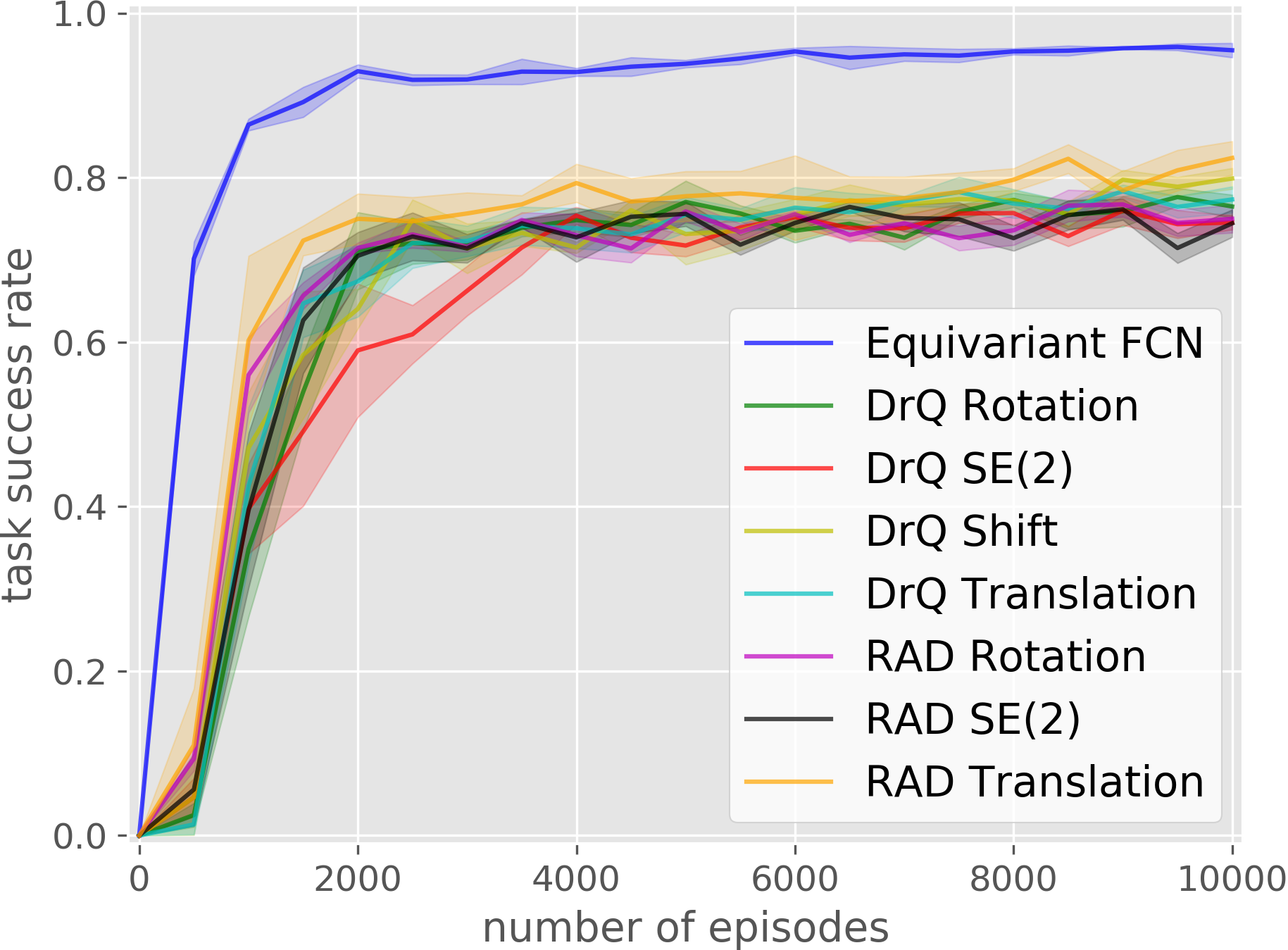}}
\subfloat[Bottle Arrangement]{\includegraphics[width=0.25\linewidth]{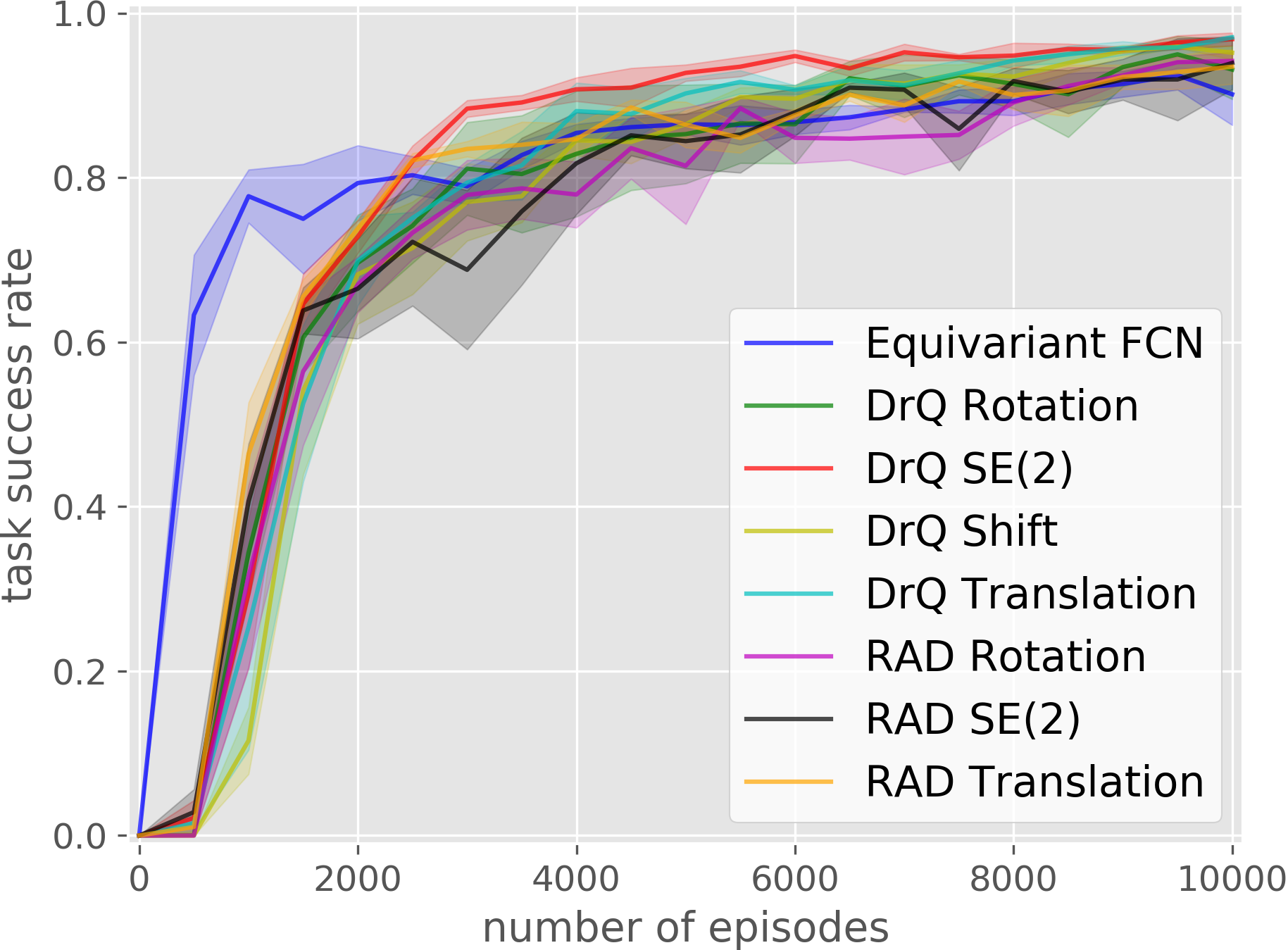}}\\
\subfloat[House Building]{\includegraphics[width=0.25\linewidth]{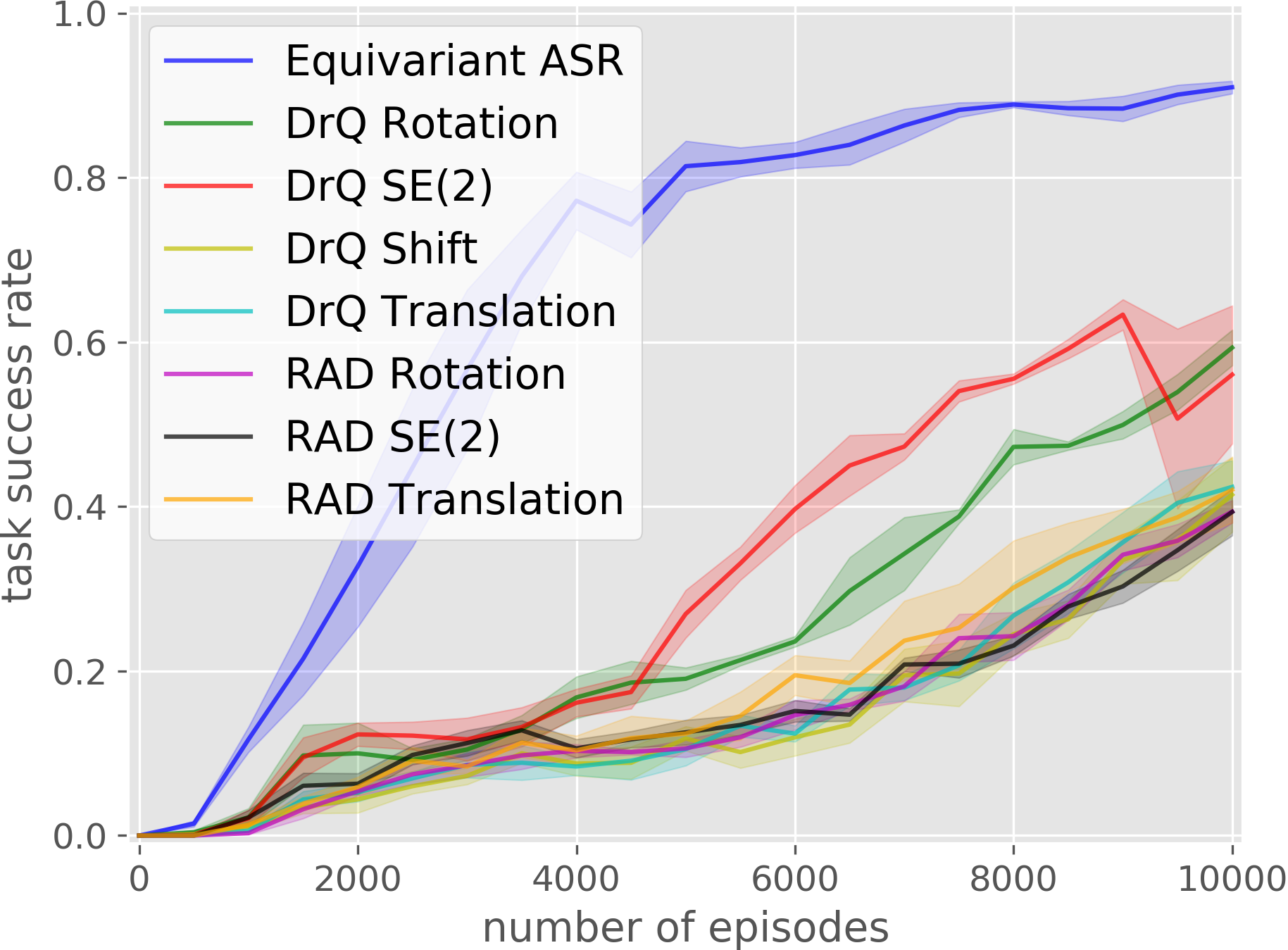}}
\subfloat[Covid Test]{\includegraphics[width=0.25\linewidth]{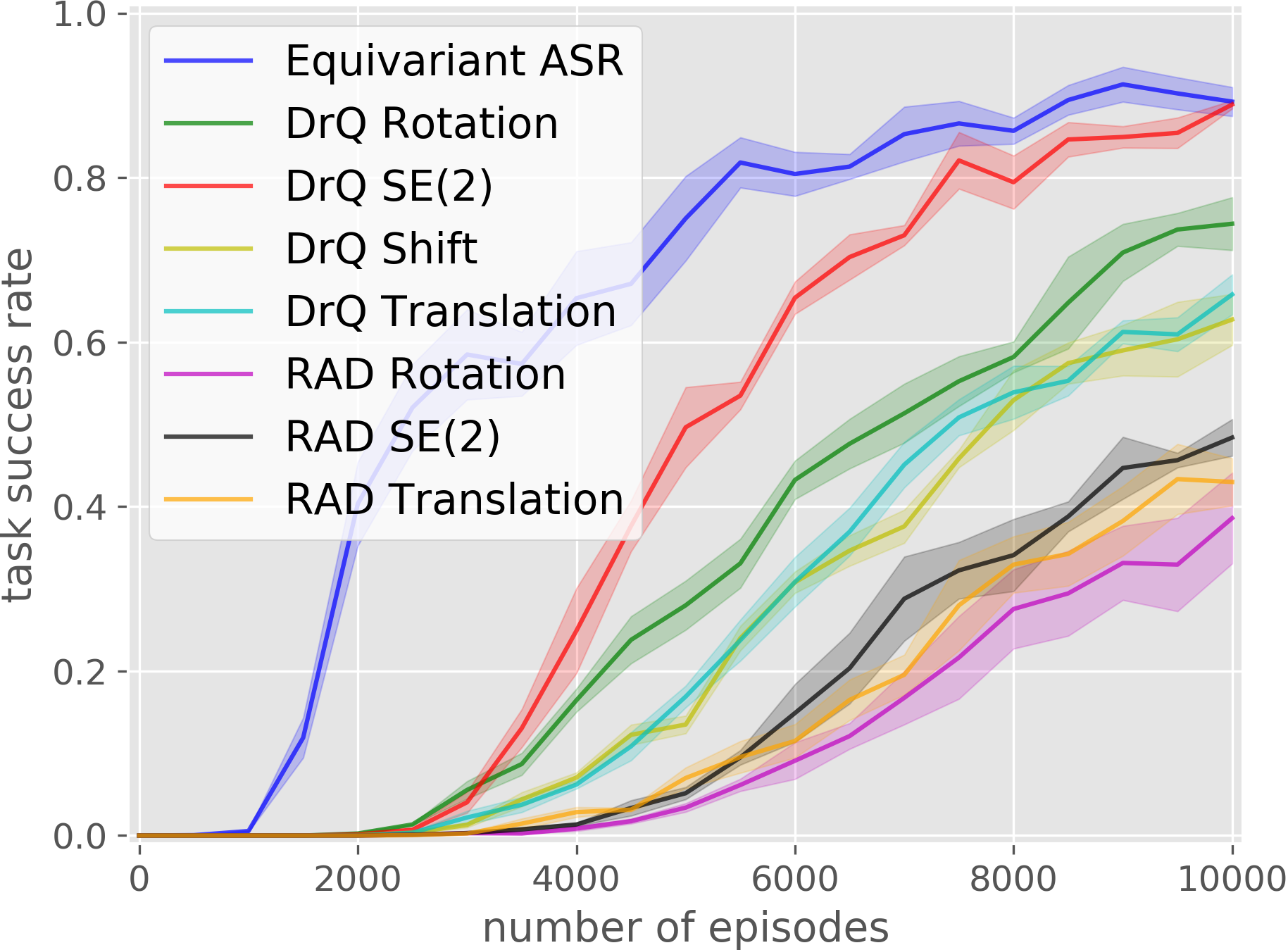}}
\subfloat[Box Palletizing]{\includegraphics[width=0.25\linewidth]{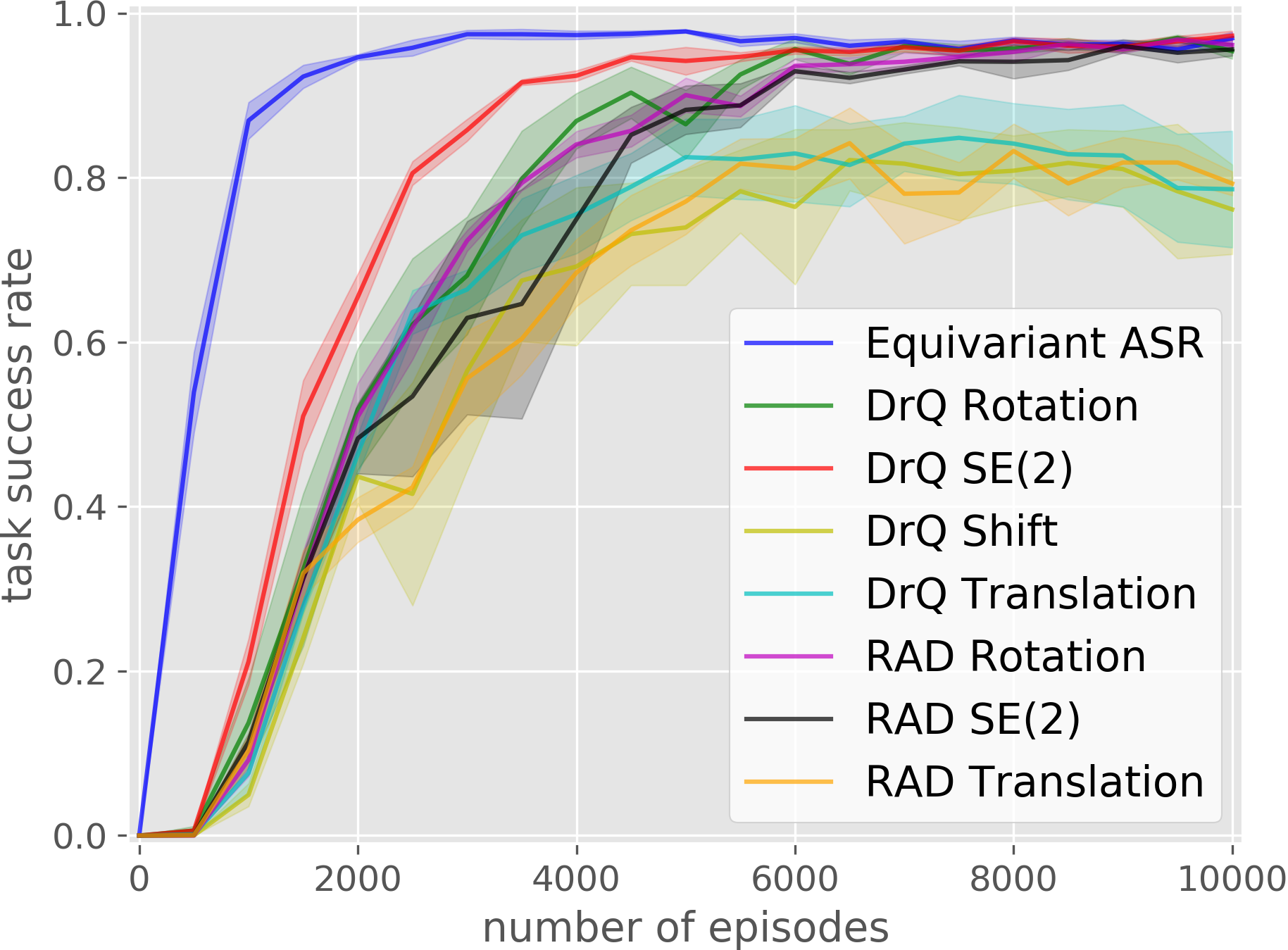}}
\subfloat[Bin Packing]{\includegraphics[width=0.25\linewidth]{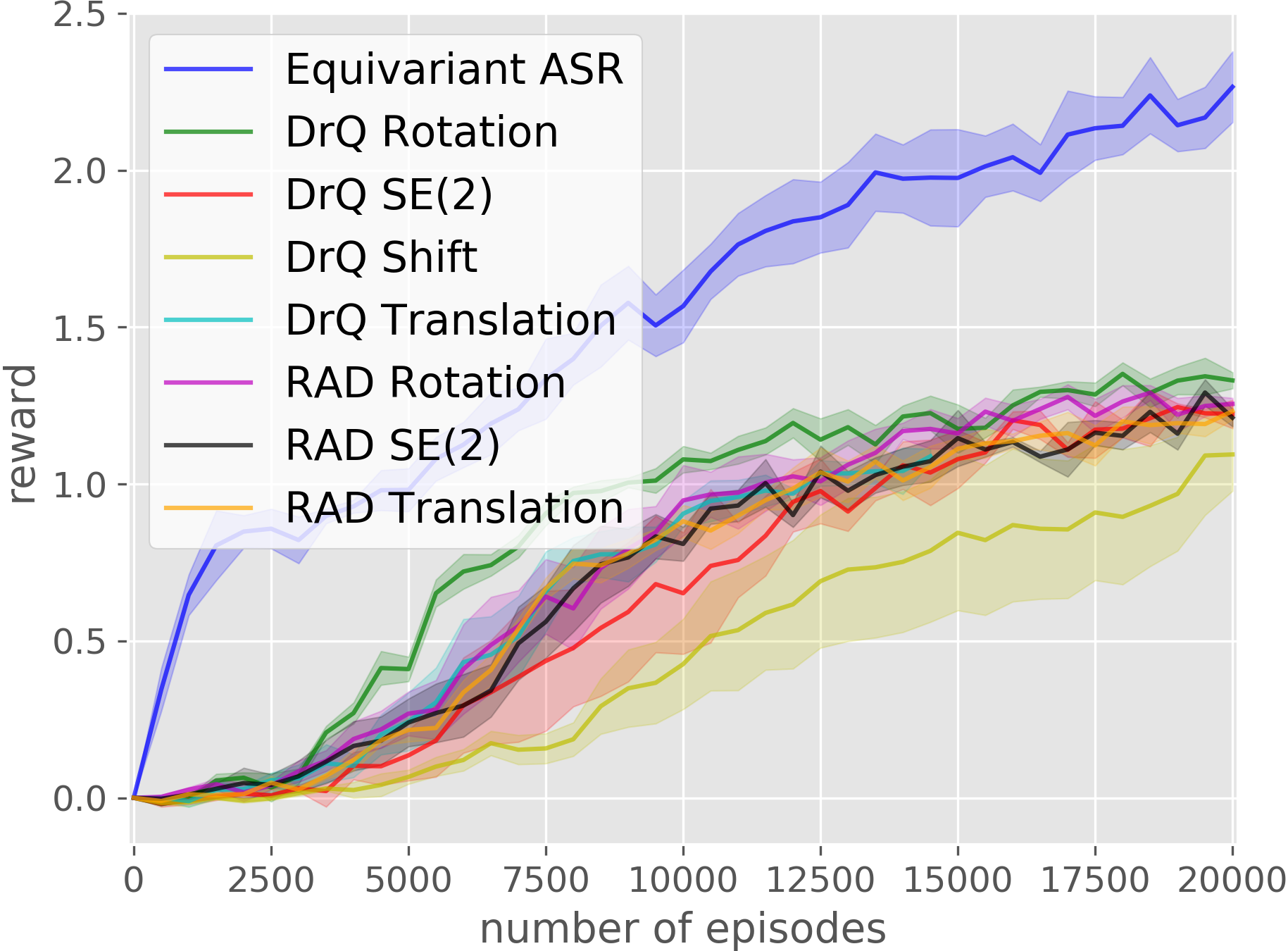}}
\caption{\edit{Comparison against RAD and DrQ with more data augmentation operators in equivariant FCN (a-b) and equivariant ASR (c-f). Results averaged over four runs. Shading denotes standard error.}}
\label{fig:exp_rad_drq}
\end{figure}

\edit{In Section~\ref{sec:exp_equi_fcn} and Section~\ref{sec:exp_equi_asr}, we compare the equivariant architectures with RAD~\cite{rl_with_aug} and DrQ~\cite{kostrikov2020image} with rotational data augmentation. In this experiment, we run the comparison with more data augmentation operators: 1) Rotation: random rotation in $C_{12}$ and $C_{32}$, same as in Section~\ref{sec:exp_equi_fcn} and Section~\ref{sec:exp_equi_asr}. 2) Translation: random translation. 3) SE(2): the combination of 1) and 2). 4) Shift: random shift of $\pm$ 4 pixels as in~\cite{kostrikov2020image}. Note that only 1) is a fair comparison because our equivariant models do not inject extra translational knowledge into the network. Even though, the equivariant networks outperforms all data augmentation methods in five out of the six environments.}

\subsection{Dynamic Filter vs Lift Expansion}
\label{appendix:exp_df_vs_exp}
\begin{figure}[t]
\centering
\subfloat[Block Stacking]{\includegraphics[width=0.25\linewidth]{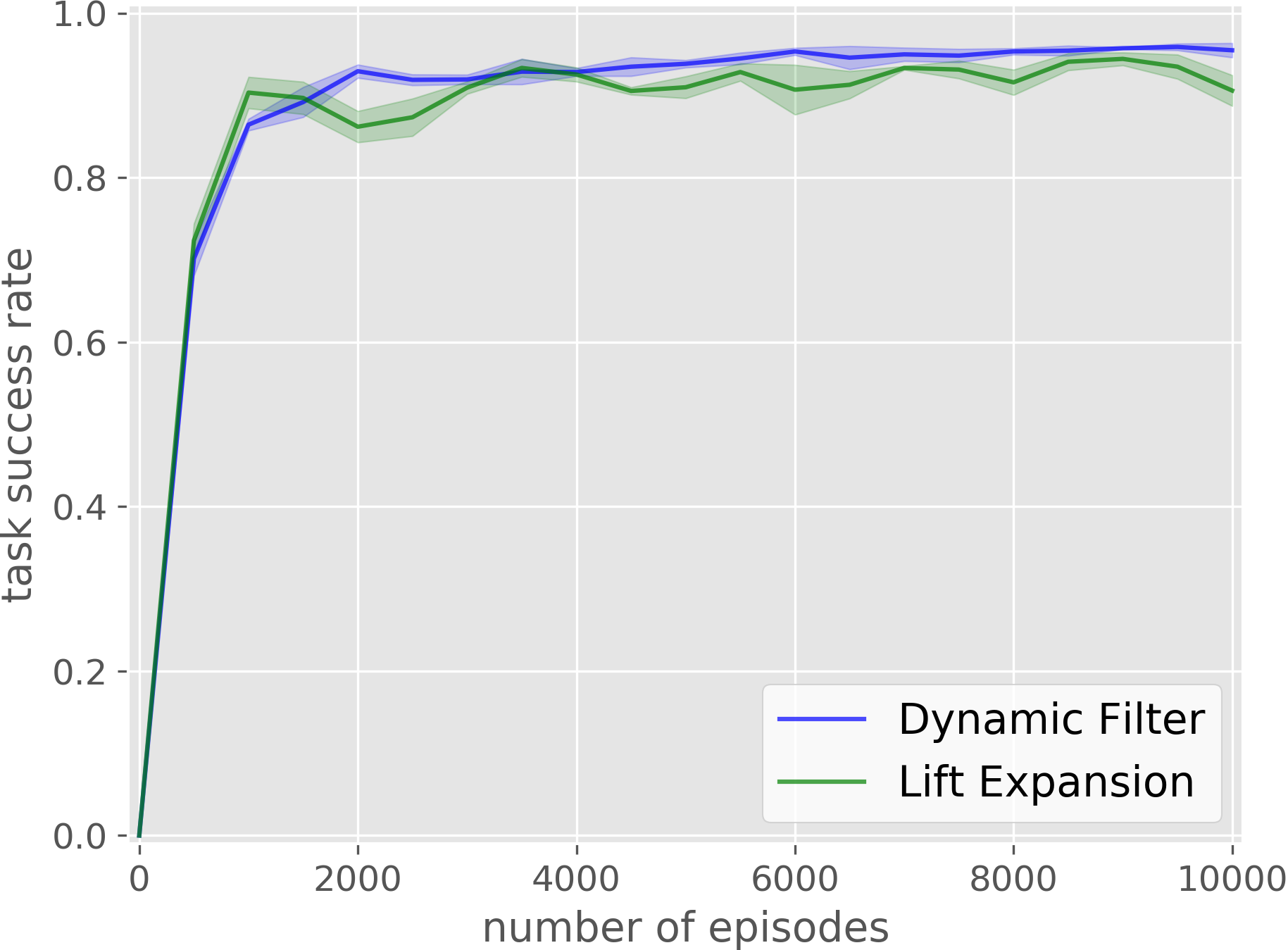}}
\subfloat[Bottle Arrangement]{\includegraphics[width=0.25\linewidth]{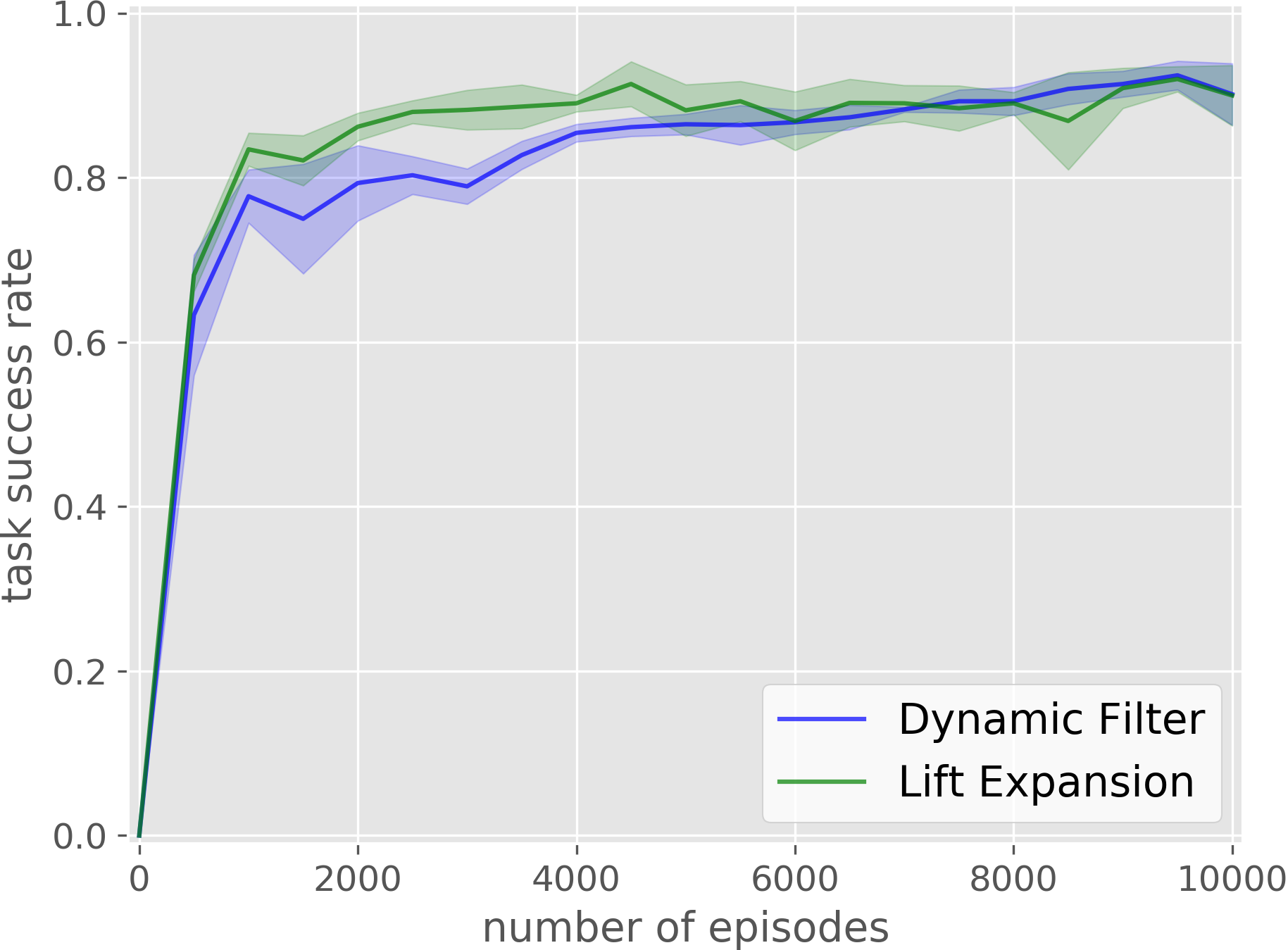}}\\
\subfloat[House Building]{\includegraphics[width=0.25\linewidth]{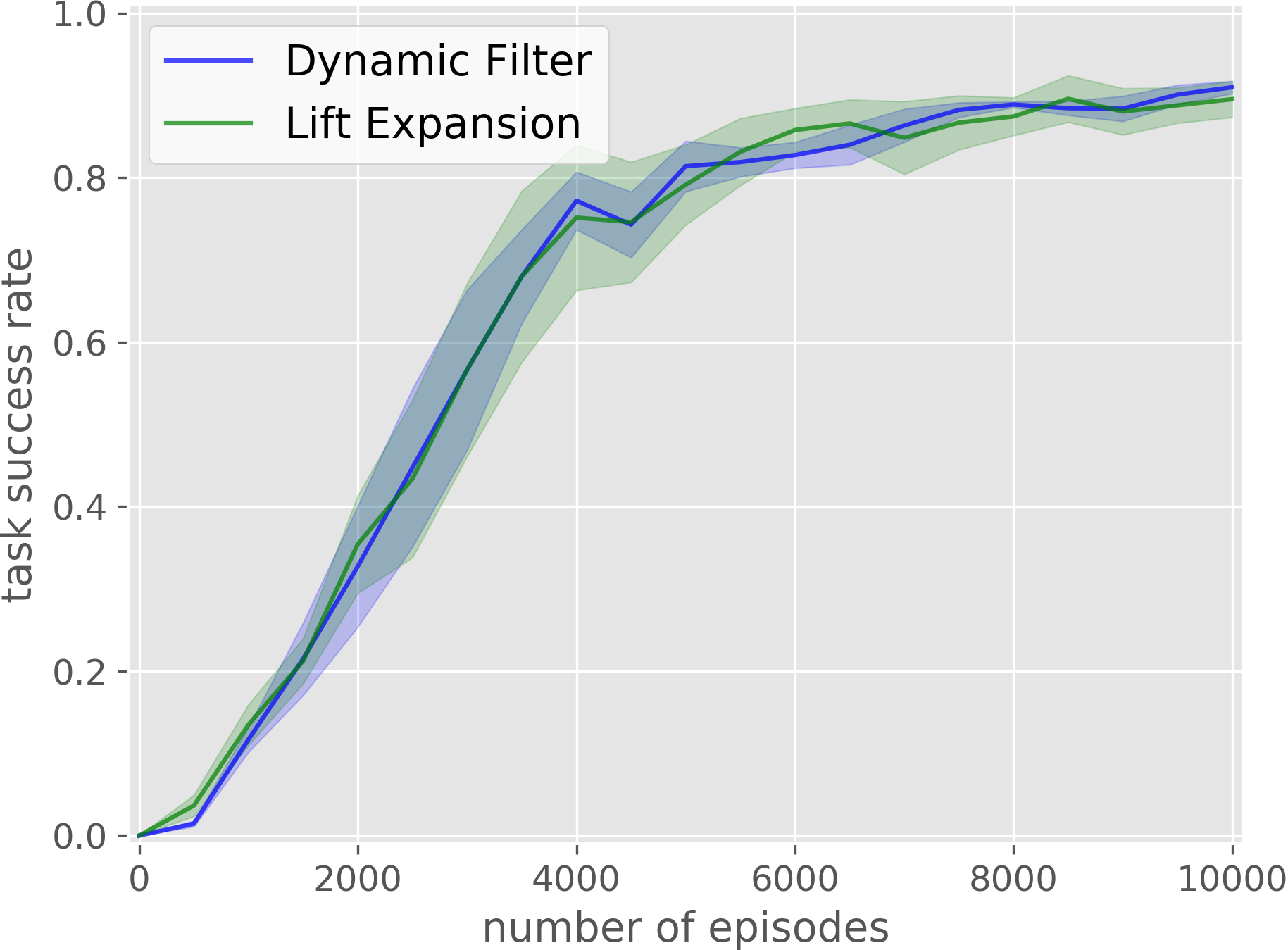}}
\subfloat[Covid Test]{\includegraphics[width=0.25\linewidth]{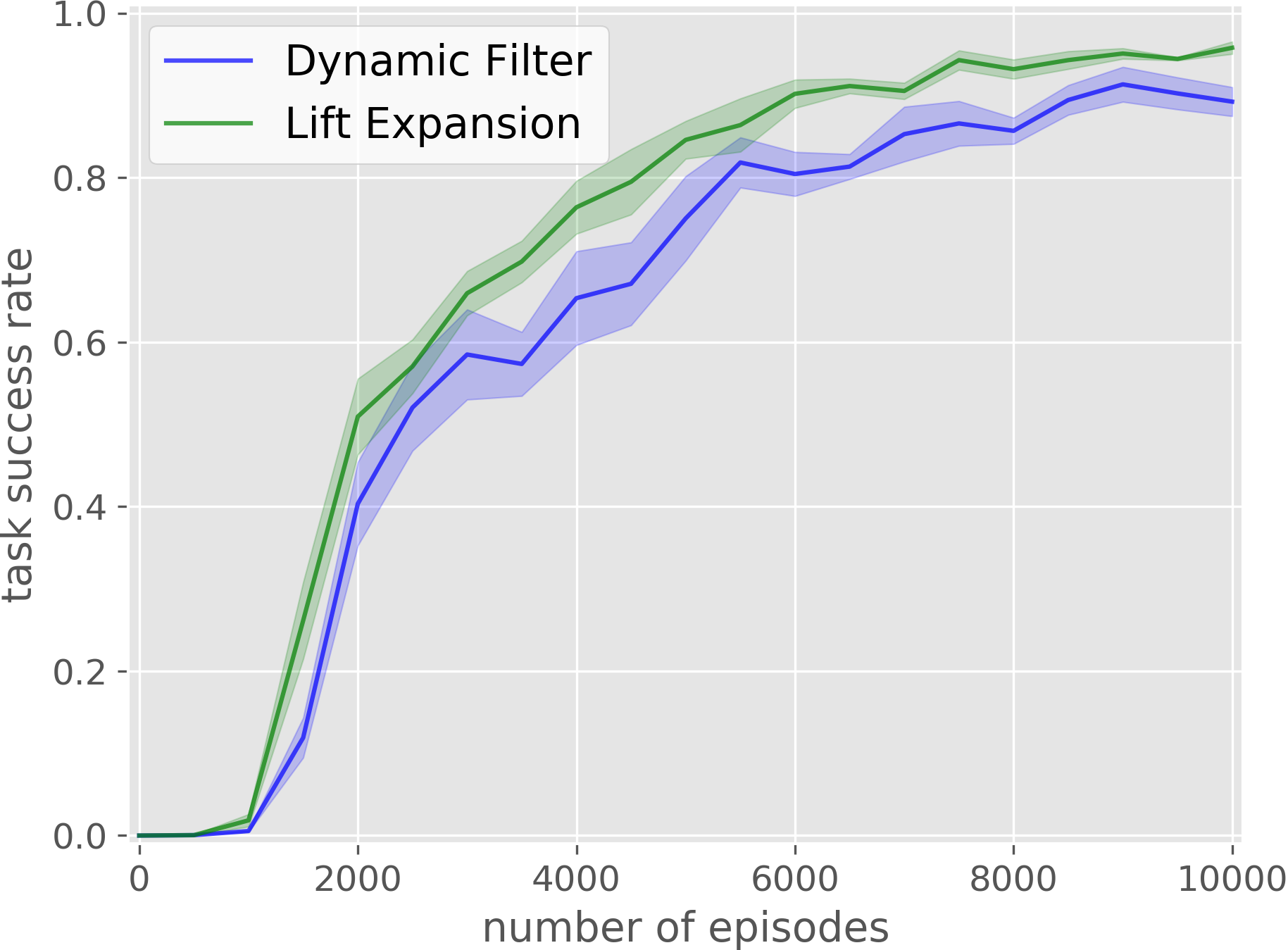}}
\subfloat[Box Palletizing]{\includegraphics[width=0.25\linewidth]{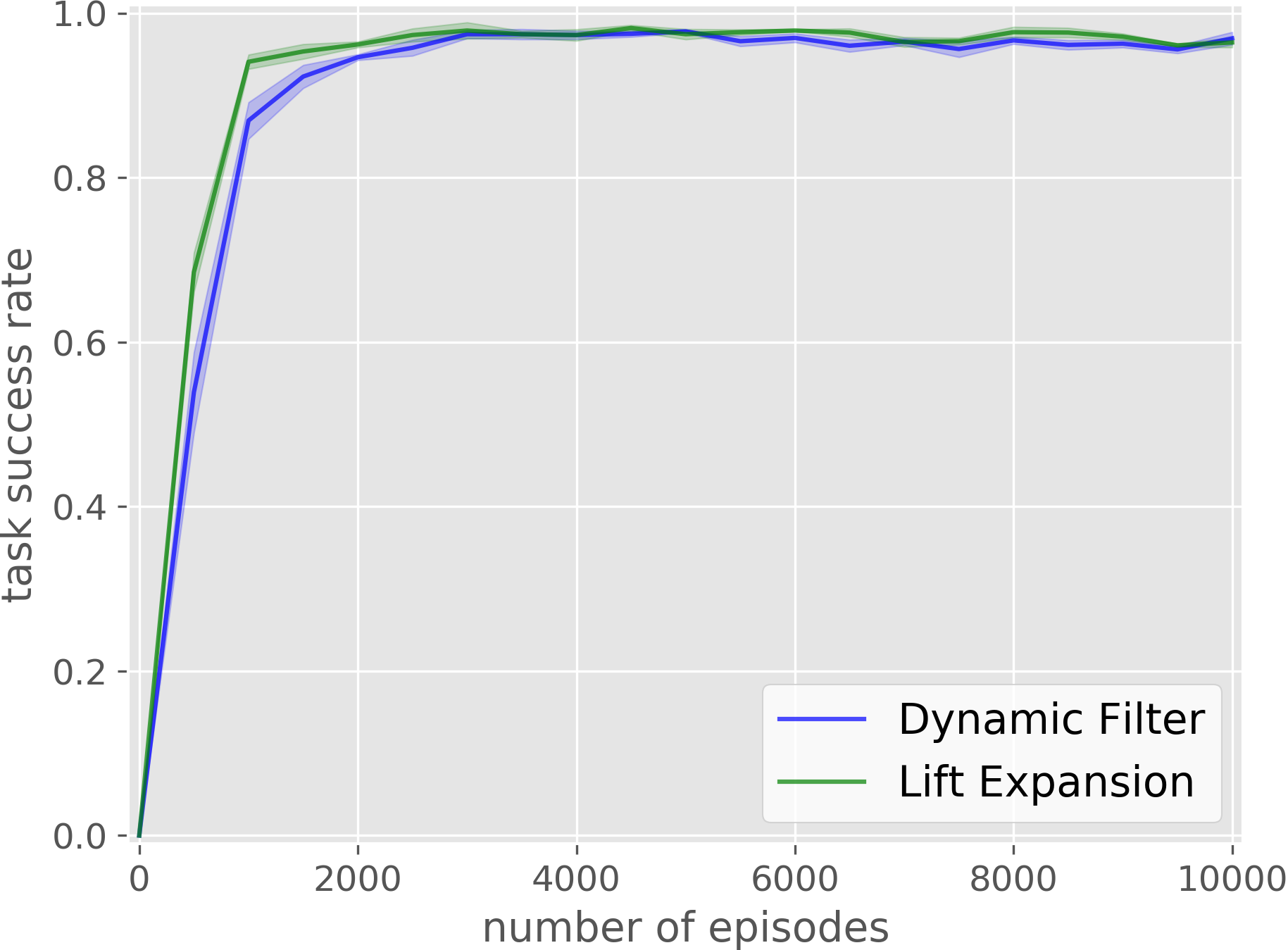}}
\subfloat[Bin Packing]{\includegraphics[width=0.25\linewidth]{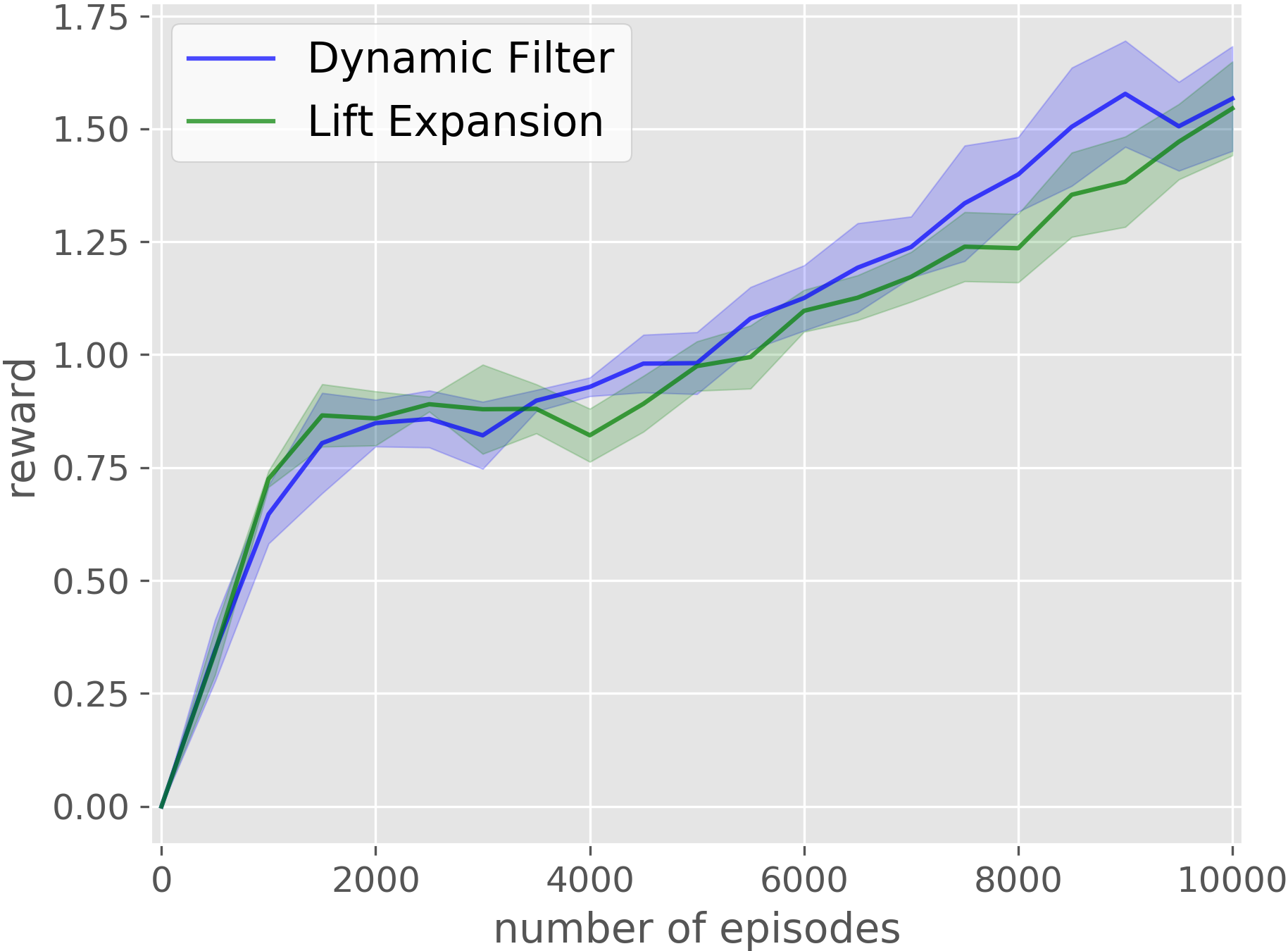}}
\caption{Comparison between Dynamic Filter and Lift Expansion in equivariant FCN (a-b) and equivariant ASR (c-f). Results averaged over four runs. Shading denotes standard error.}
\label{fig:exp_df_vs_exp}
\end{figure}


In this experiment, we compare the Dynamic Filter and Lift Expansion methods for encoding partial equivariance property. We evaluated both the equivariant FCN architecture and the equivariant ASR architecture (note that we only test this variation in $q_1$. $q_2$ uses the Dynamic Filter regardless of the architecture of $q_1$). The results are shown in Fig~\ref{fig:exp_df_vs_exp}. Both methods generally perform equally well.

\subsection{Equivariant Network in Behavior Cloning}
\begin{table}[t]
\centering
\begin{tabular}{|c|c|c|}
\hline
Environment & Block Stacking & Bottle Arrangement \\
\hline
Equivariant FCN & \textbf{0.881} & \textbf{0.781} \\
\hline
Transporter & 0.804 & 0.663 \\
\hline
\end{tabular}
\caption{Comparison between equivariant FCN and Transporter network. Results averaged over four runs.\label{tab:exp_fcn_bc}}
\end{table}


In this experiment, we evaluate the performance of our equivariant network in a behavior cloning setting compared with the Transporter network~\cite{transporter}. Both methods use the same cross entropy loss function and the same data augmentation strategy. The experimental parameters mirror Section~\ref{sec:exp_equi_fcn}. The results are shown in Table~\ref{tab:exp_fcn_bc}. The equivariant network outperforms the Transporter network in both environments.

\subsection{Equivariant ASR Ablations}
\subsubsection{Only Using Equivariant Network in $q_1$ or $q_2$}
\begin{figure}[t]
\centering
\subfloat[House Building]{\includegraphics[width=0.25\linewidth]{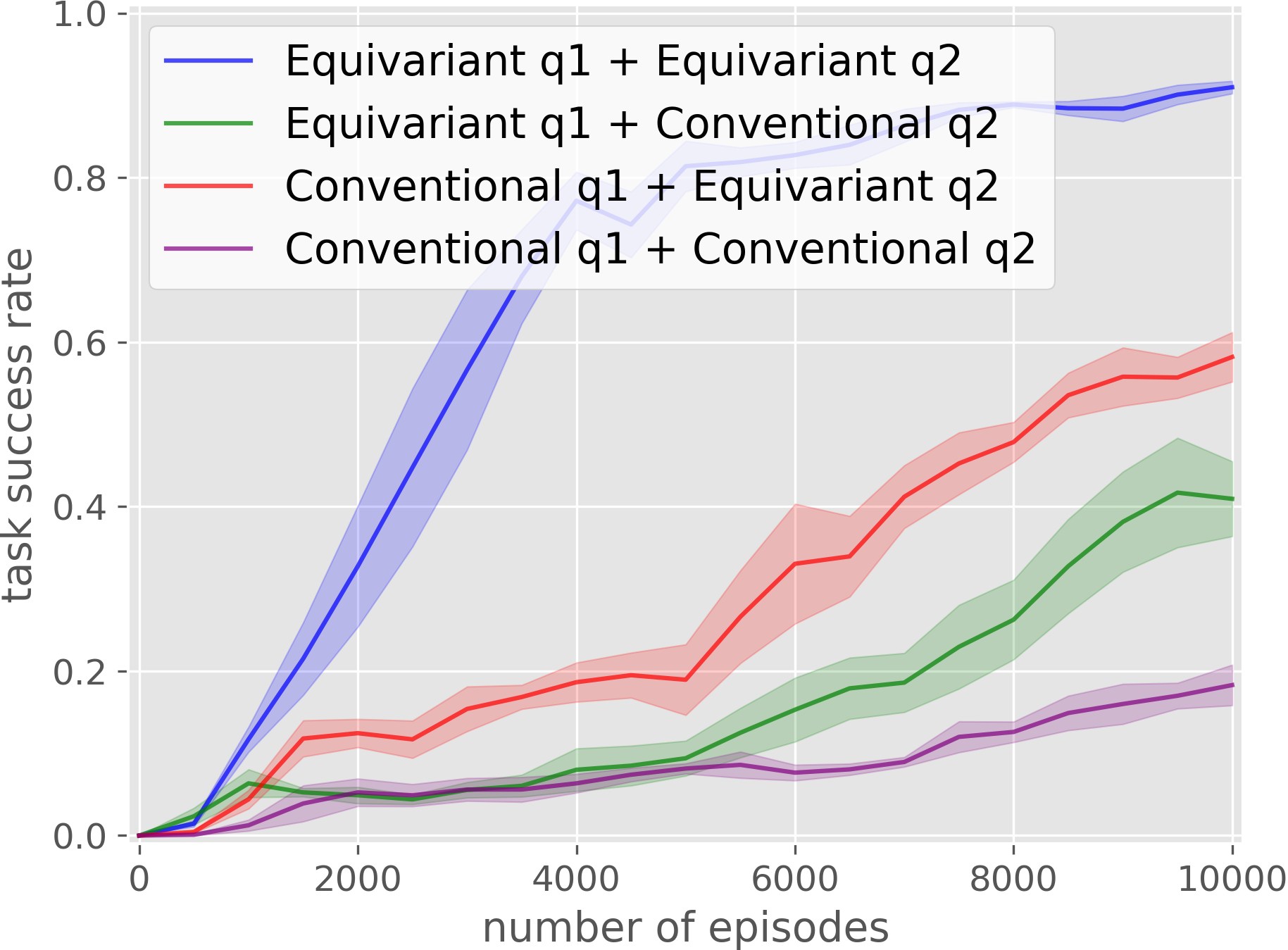}}
\subfloat[Covid Test]{\includegraphics[width=0.25\linewidth]{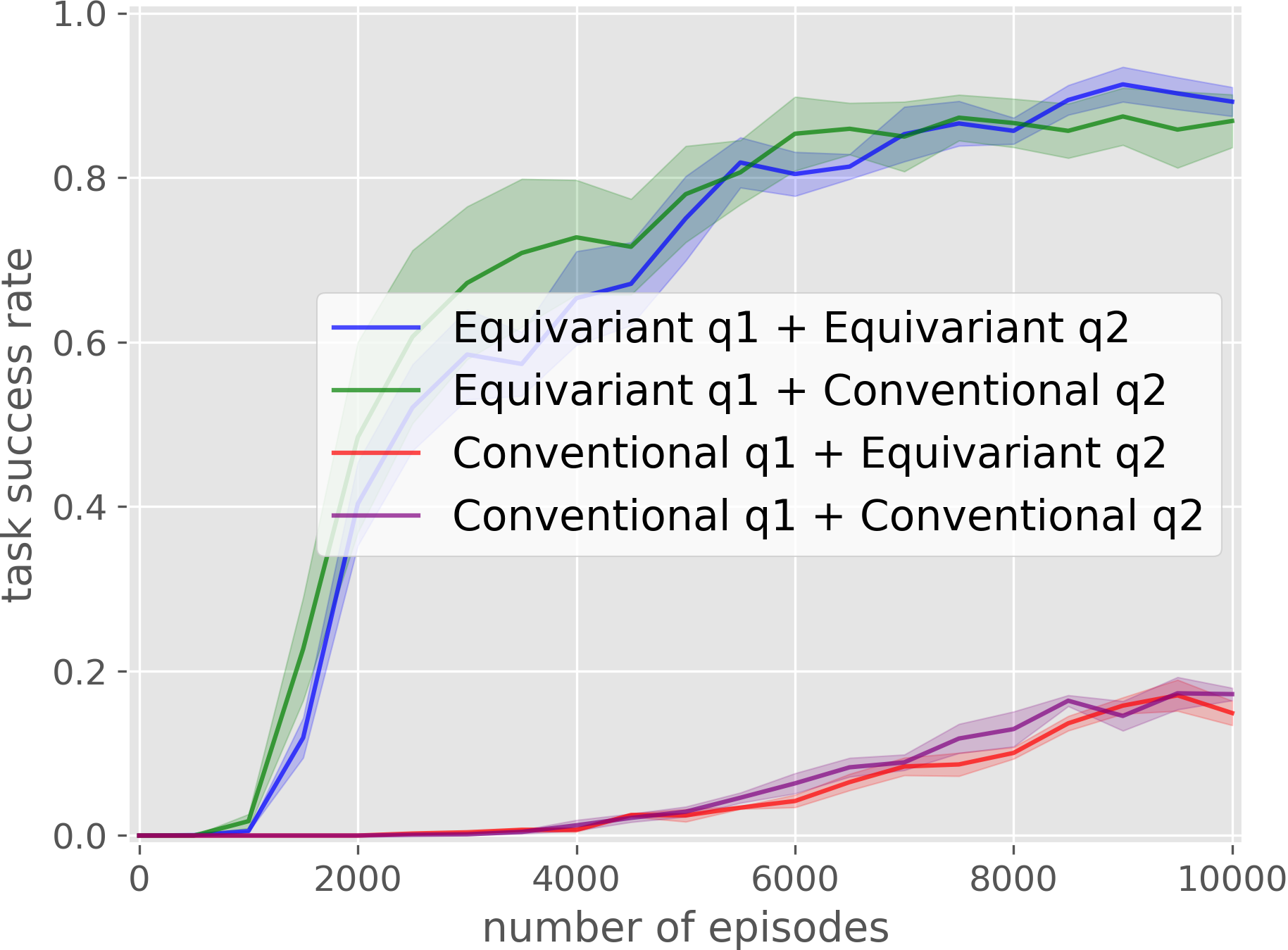}}
\subfloat[Box Palletizing]{\includegraphics[width=0.25\linewidth]{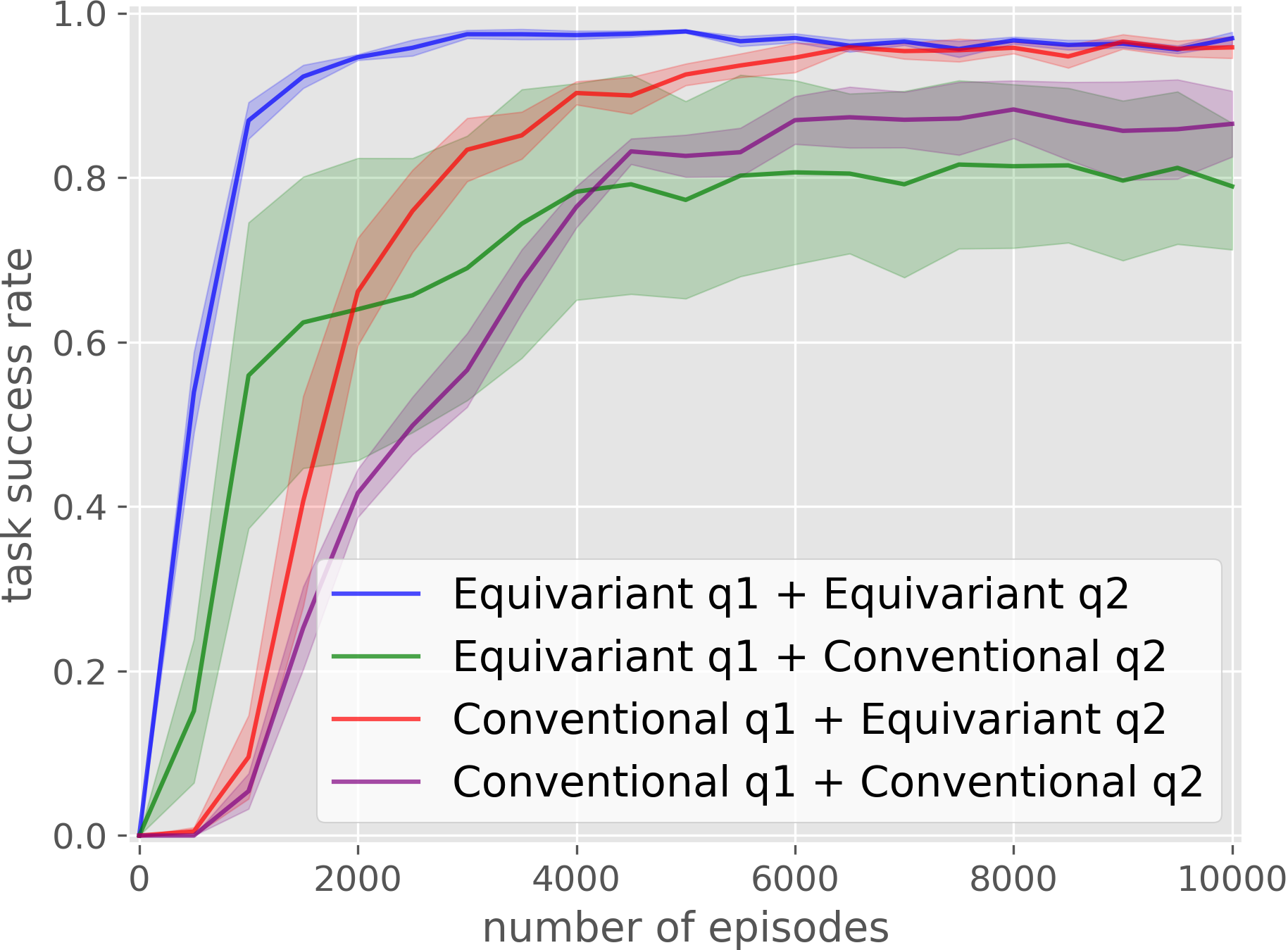}}
\subfloat[Bin Packing]{\includegraphics[width=0.25\linewidth]{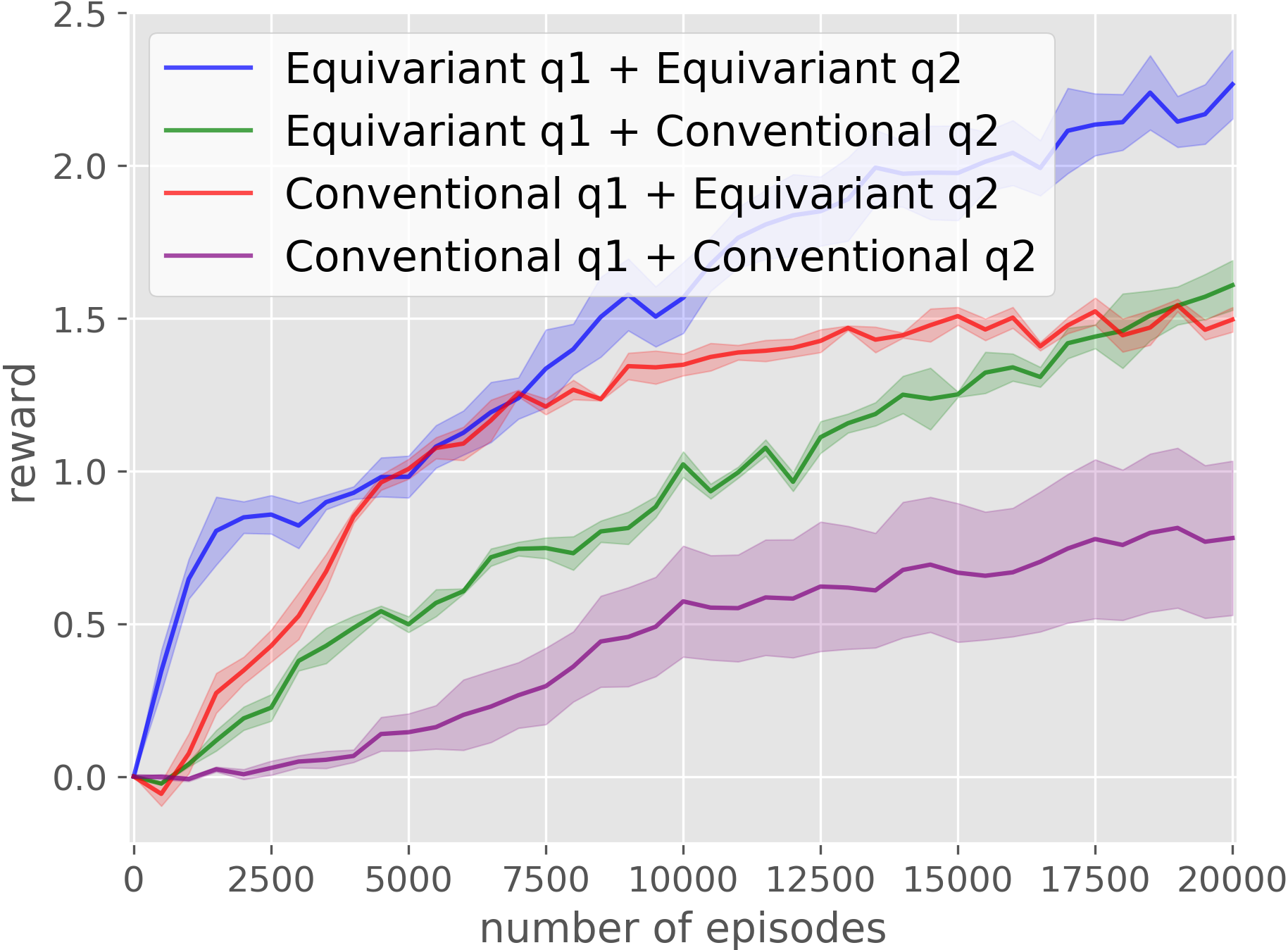}}
\caption{Comparison of four variations of equivariant/conventional and $q_1$/$q_2$ combinations. Results averaged over four runs. Shading denotes standard error.}
\label{fig:exp_asr_q1q2}
\end{figure}

In this ablation study, we evaluate the effect of the equivariant network by only applying it in $q_1$ or $q_2$. There are four variations: 1) Equivariant $q_1$ + Equivariant $q_2$: both $q_1$ and $q_2$ use the equivariant network; 2) Equivariant $q_1$ + Conventional $q_2$: $q_1$ uses the equivariant network, $q_2$ uses the conventional convolutional network; 3) Conventional $q_1$ + Equivariant $q_2$: $q_1$ uses the conventional convolutional network, $q_2$ uses the equivariant network; 4) Conventional $q_1$ + Conventional $q_2$: both $q_1$ and $q_2$ use the conventional convolutional network. The results are shown in Fig~\ref{fig:exp_asr_q1q2}, where Using the equivariant network in both $q_1$ and $q_2$ (blue) always shows the best performance. Note that only applying the equivariant network in $q_2$ (red) demonstrates a greater improvement compared with only applying the equivariant network in $q_1$ (green) in three out of four environments. This is because $q_2$ is responsible for providing the TD target for both $q_1$ and $q_2$~\cite{asrse3}, which raises its importance in the whole system.

\subsubsection{Symmetry Group in $q_1$}
\begin{figure}[t]
\centering
\subfloat[House Building]{\includegraphics[width=0.25\linewidth]{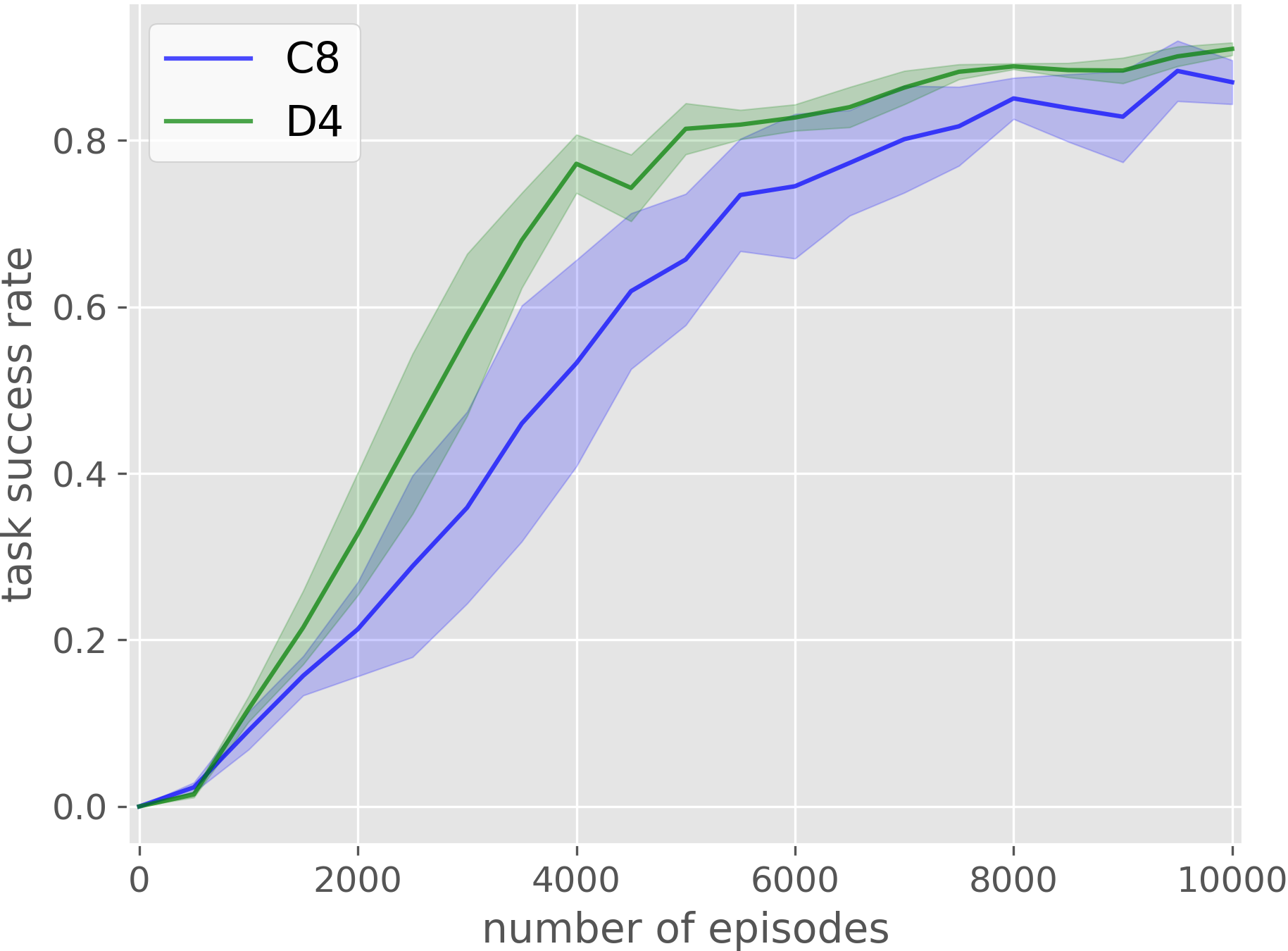}}
\subfloat[Covid Test]{\includegraphics[width=0.25\linewidth]{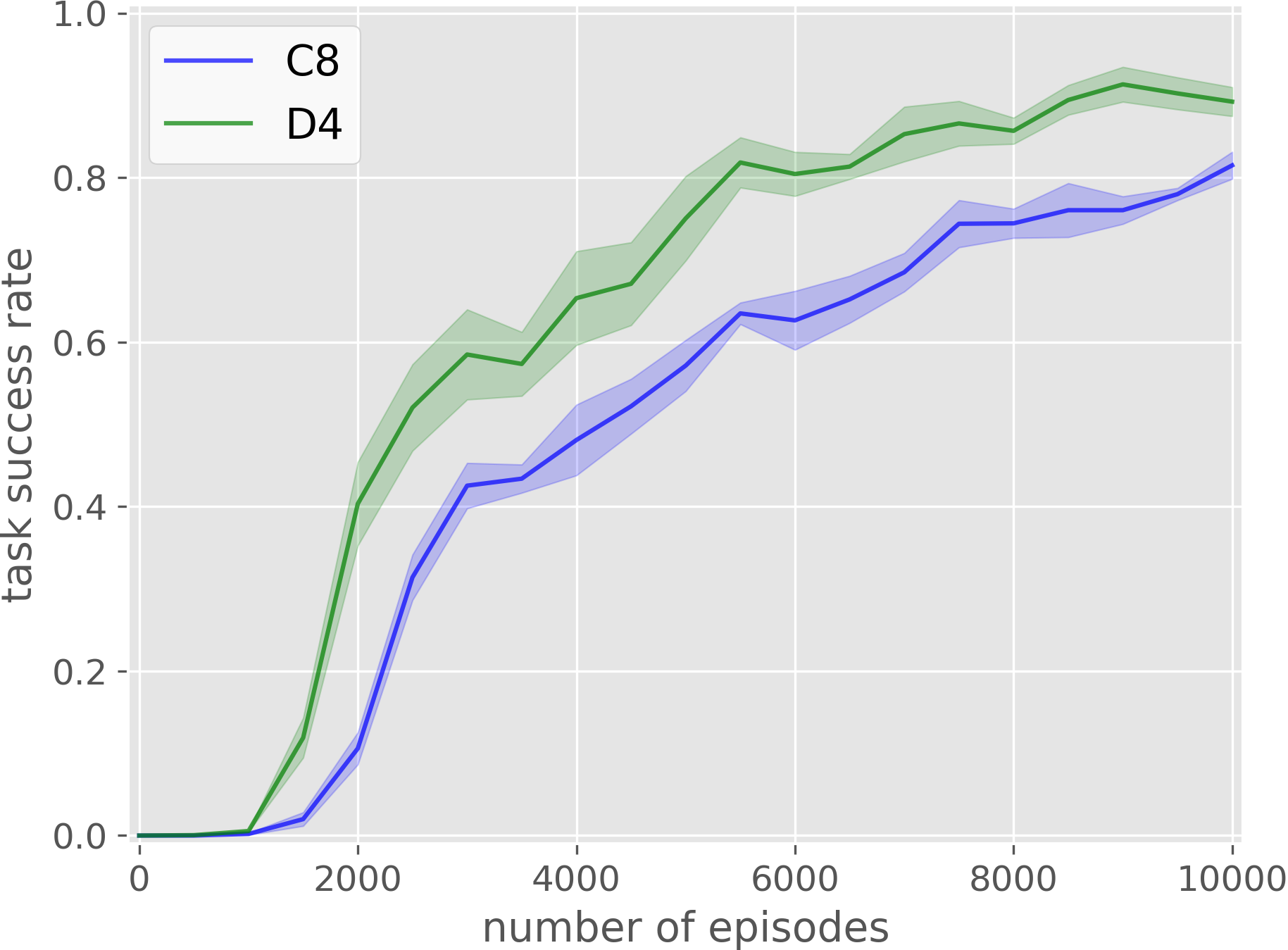}}
\subfloat[Box Palletizing]{\includegraphics[width=0.25\linewidth]{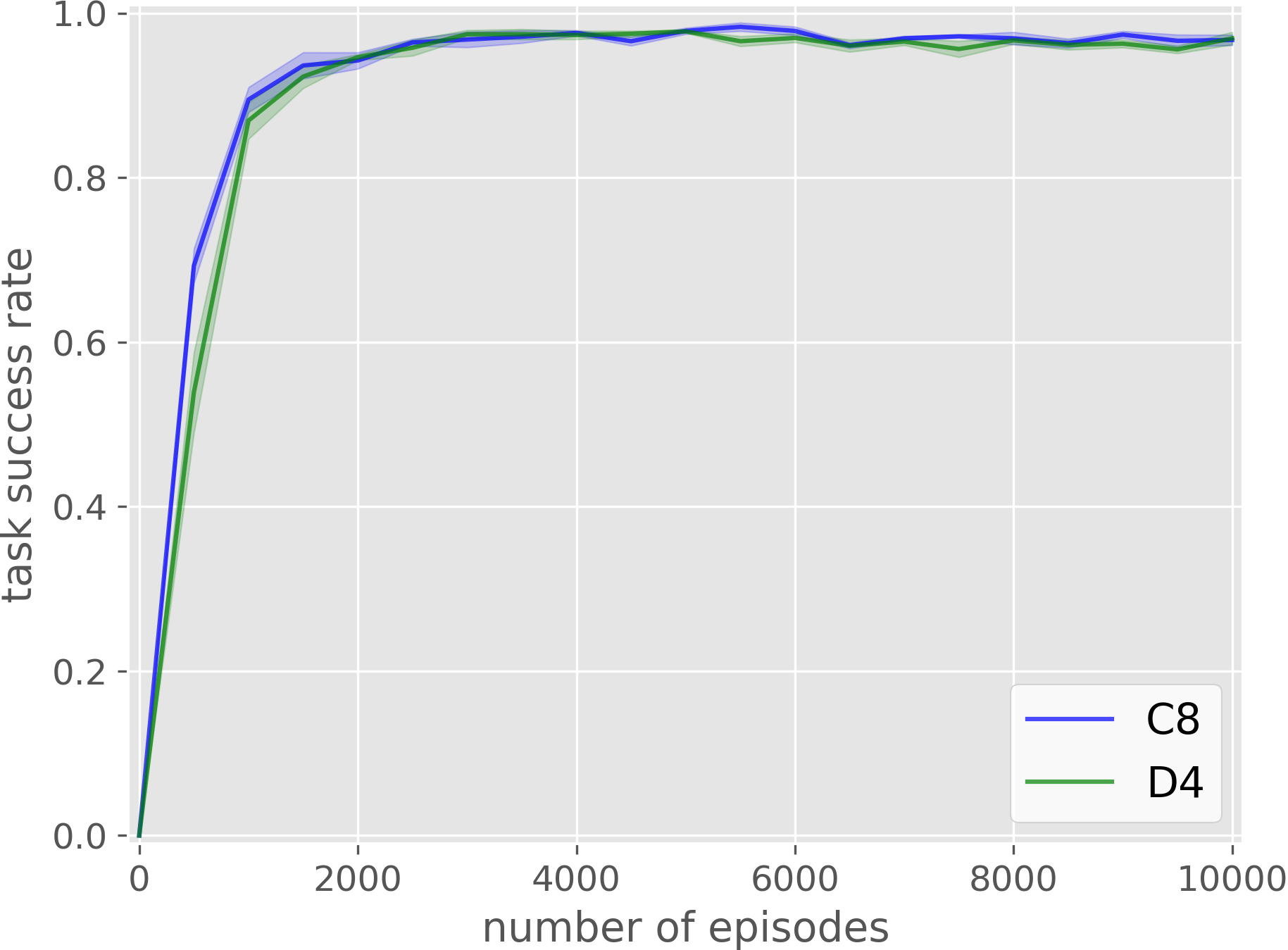}}
\subfloat[Bin Packing]{\includegraphics[width=0.25\linewidth]{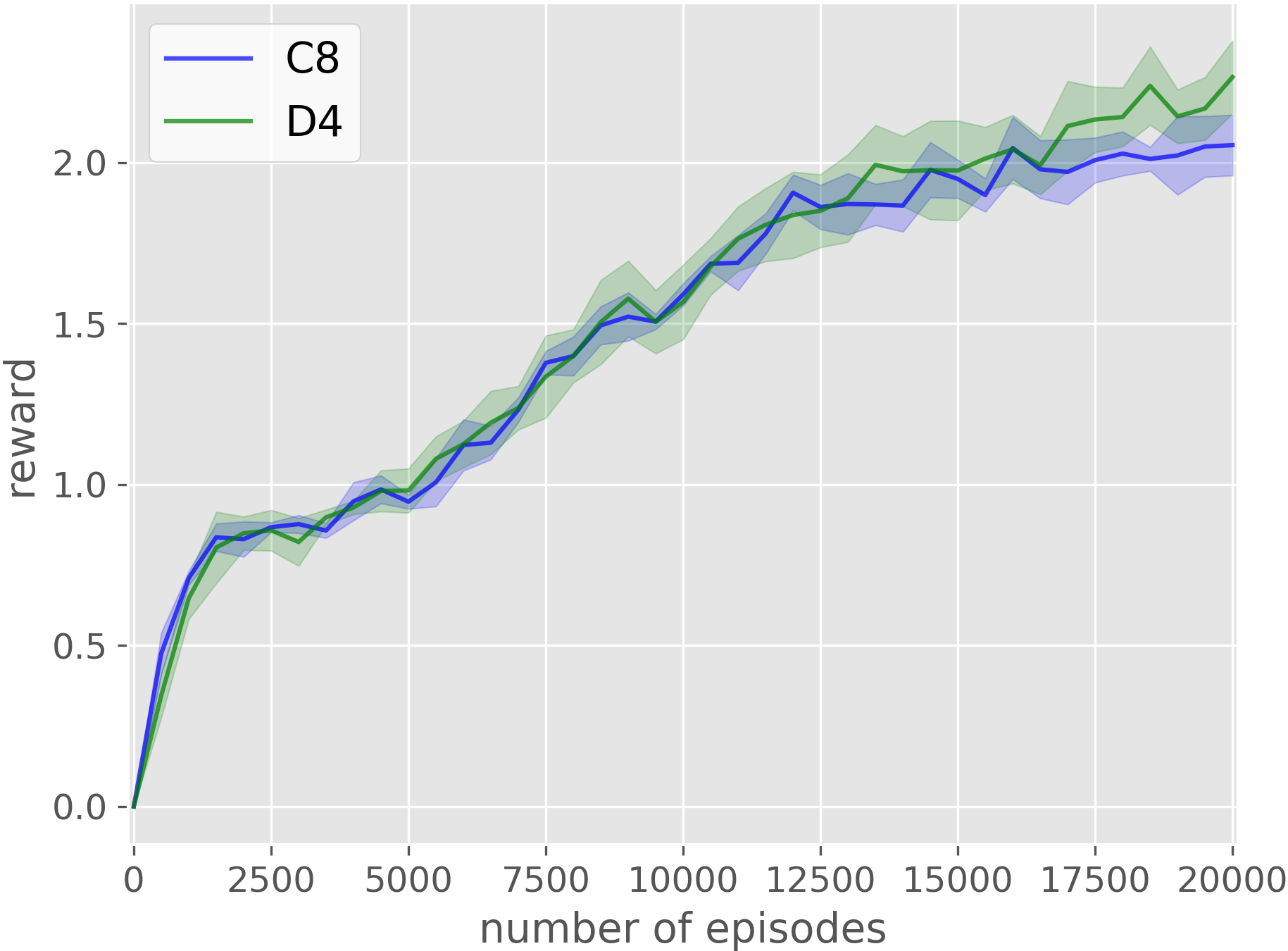}}
\caption{Comparison of two different symmetry groups for $q_1$. Results averaged over four runs. Shading denotes standard error.}
\label{fig:exp_asr_D4_vs_C8}
\end{figure}
In this experiment, we evaluate two different symmetry groups that $q_1$ can be defined upon: the Cyclic group $C_8$ that encodes eight rotations every 45 degrees, and the Dihedral group $D_4$ that encodes four rotations every 90 degrees and reflection. Both groups have an order of 8, i.e., the network will be equally heavy. As is shown in Fig~\ref{fig:exp_asr_D4_vs_C8}, $D_4$ has a minor advantage over $C_8$.

\subsubsection{Deictic Encoding in $\SE(2)$}
\label{appendix:exp_deictic_se2}
\begin{figure}[t]
\centering
\subfloat[House Building]{\includegraphics[width=0.25\linewidth]{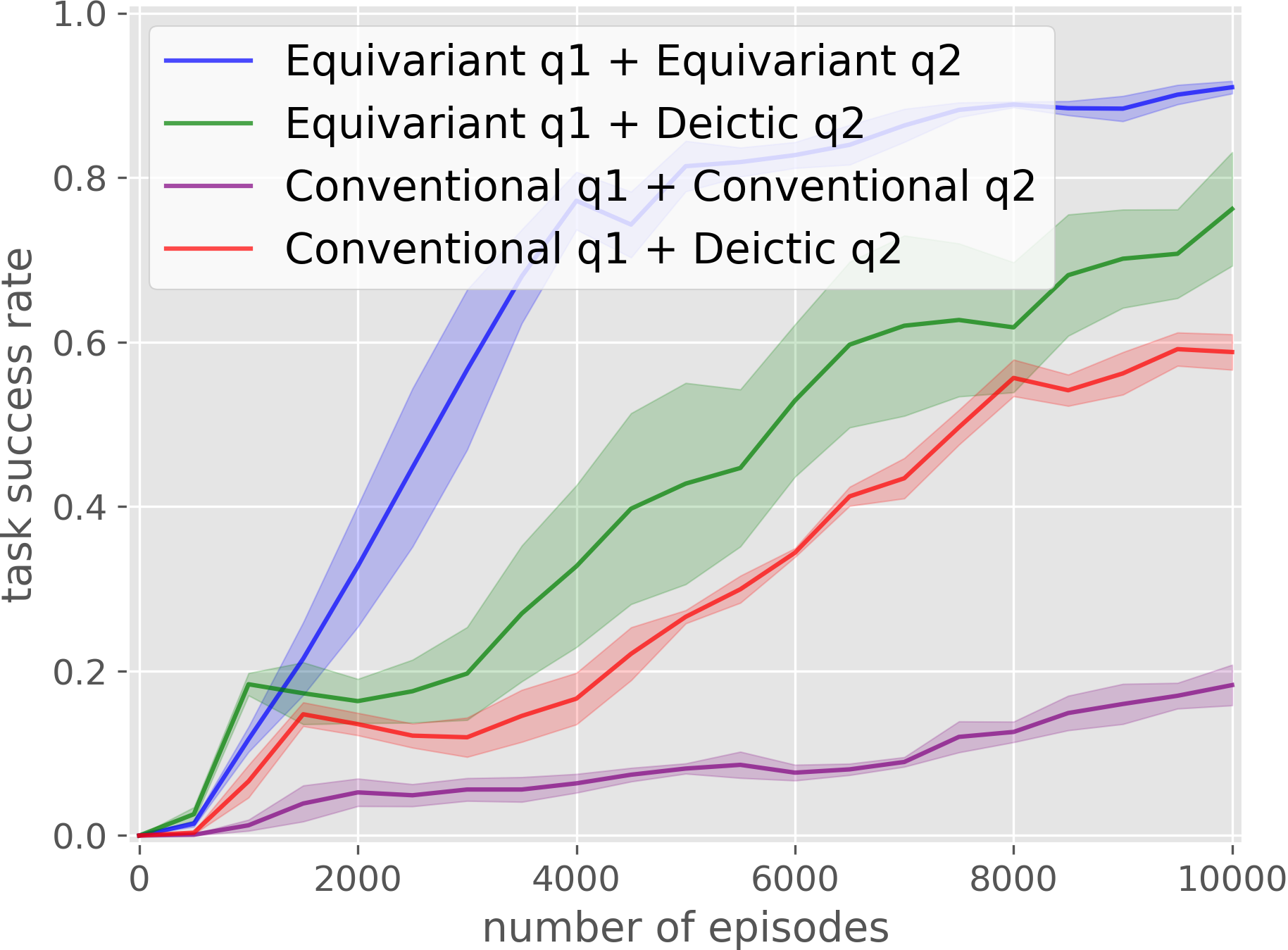}}
\subfloat[Covid Test]{\includegraphics[width=0.25\linewidth]{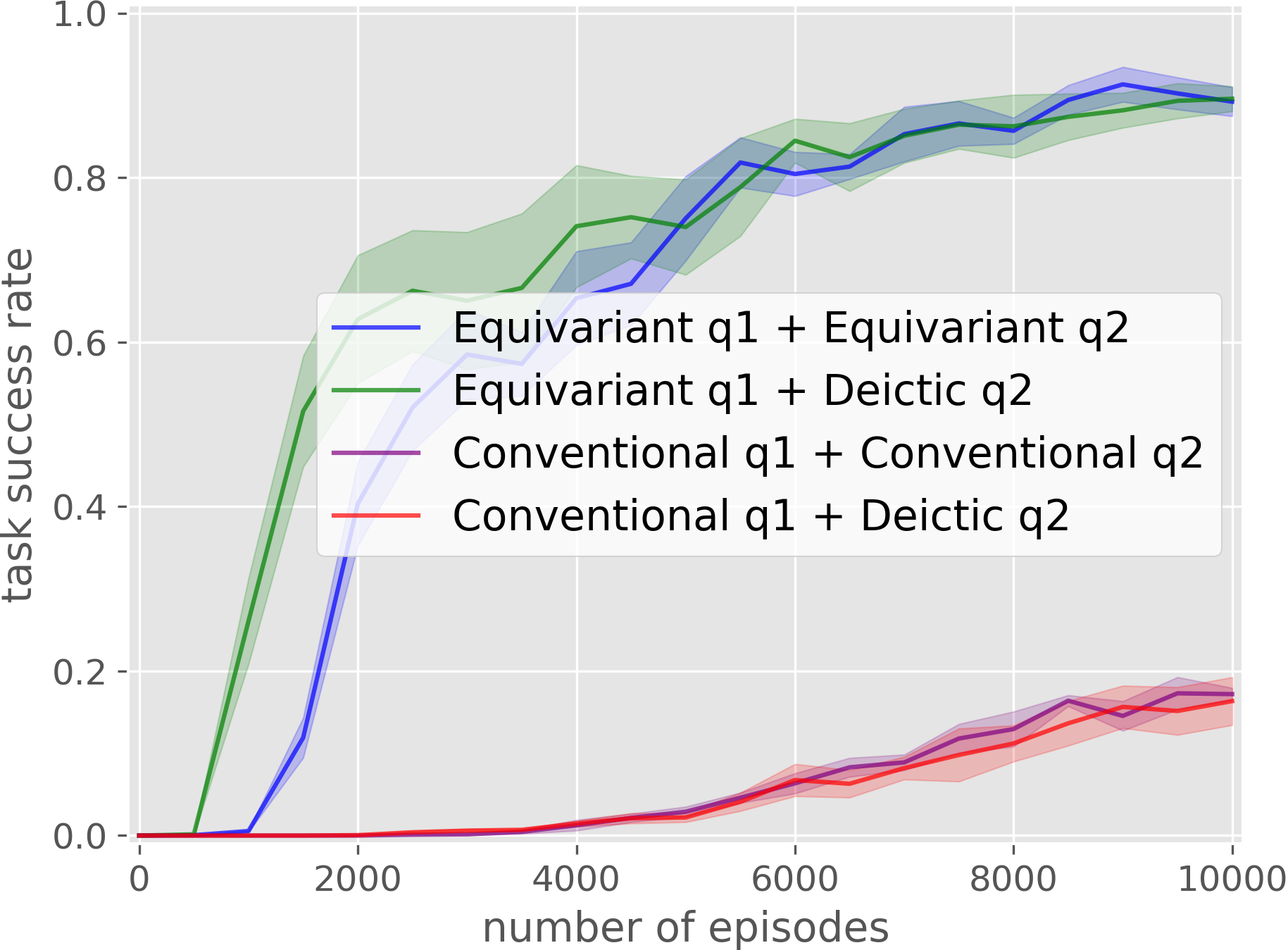}}
\subfloat[Box Palletizing]{\includegraphics[width=0.25\linewidth]{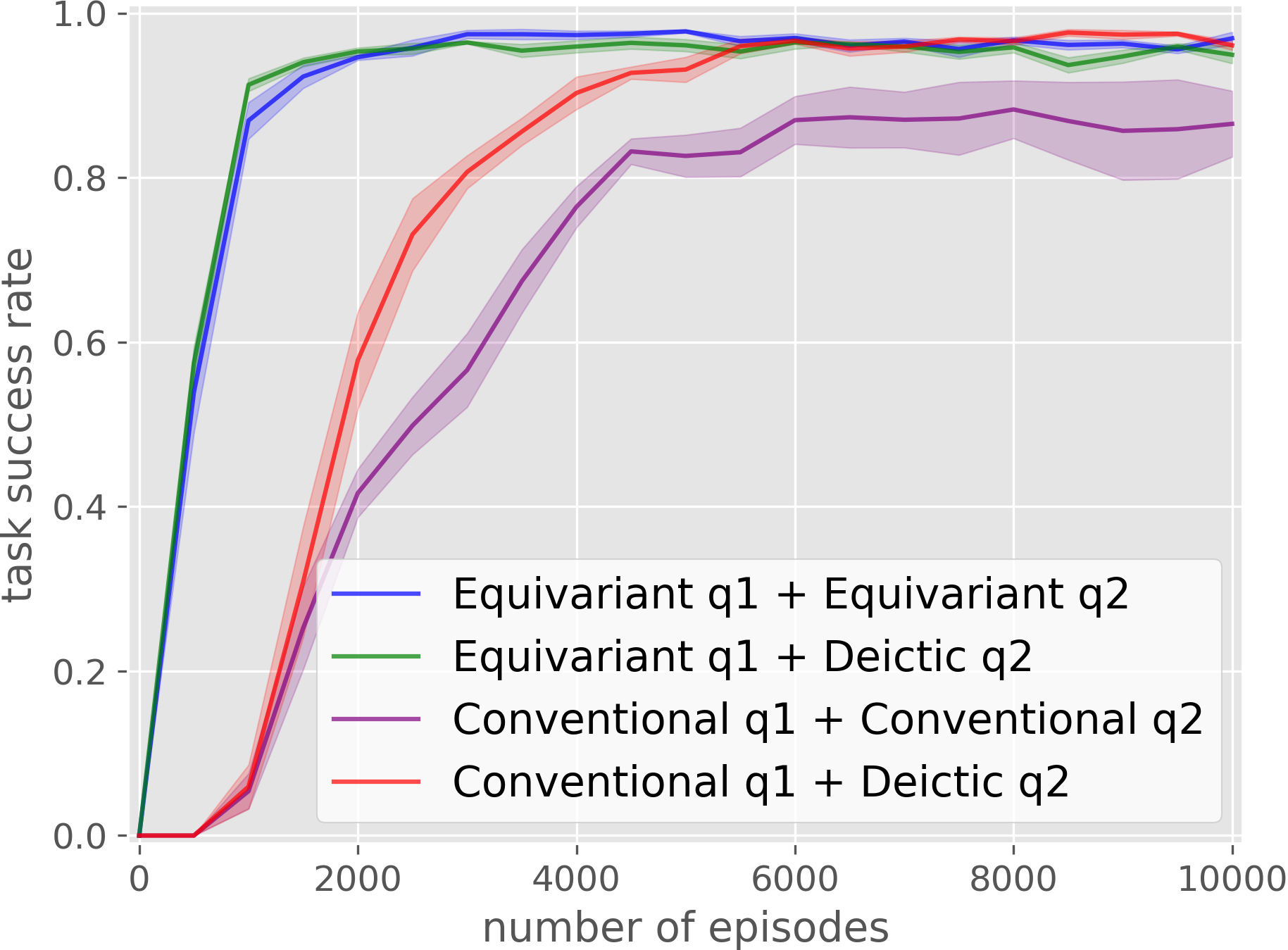}}
\subfloat[Bin Packing]{\includegraphics[width=0.25\linewidth]{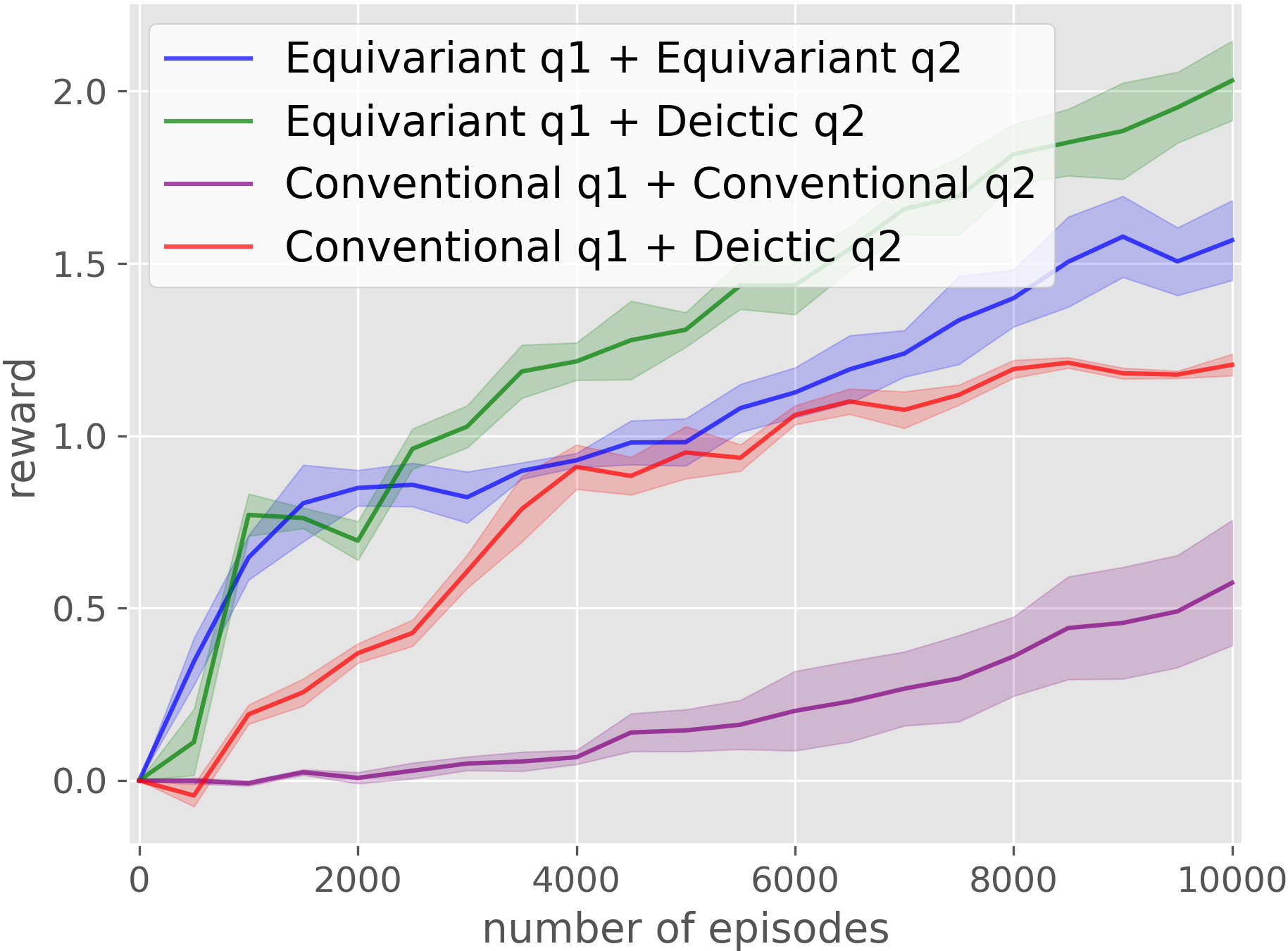}}
\caption{Comparison of the deictic encoding and baselines. Results averaged over four runs. Shading denotes standard error.}
\label{fig:exp_asr_deictic}
\end{figure}
This experiment compares the deictic encoding equipped with the equivariant ASR and the conventional ASR in $\SE(2)$. The comparison is conducted in the following four variations: 1) Equivariant $q_1$ + Equivariant $q_2$: both $q_1$ and $q_2$ use the equivariant network; 2) Equivariant $q_1$ + Deictic $q_2$: $q_1$ uses the equivariant network, $q_2$ uses the deictic encoding; 3) Conventional $q_1$ + Conventional $q_2$: both $q_1$ and $q_2$ use the conventional convolutional network; 4) Conventional $q_1$ + Deictic $q_2$: $q_1$ uses the equivariant network, $q_2$ uses the deictic encoding. The results are shown in Fig~\ref{fig:exp_asr_deictic}. When $q_1$ is using the equivariant network, using the deictic encoding in $q_2$ (green) outperforms using equivariant network in $q_2$ (blue) in Bin Packing, while the equivariant $q_2$ outperforms in House Building. In Covid Test and Box Palletizing, they tends to have similar performance. When $q_1$ uses conventional CNN, using deictic encoding in $q_2$ (red) generally provides a significant performance, compared with using conventional CNN in $q_2$ (purple). In Covid Test, the use of the deictic encoding does not make a big difference. We suspect that this is because in Covid Test the bottleneck of the whole system is $q_1$.

\subsubsection{Deictic Encoding in $\SE(3)$}
\label{appendix:exp_deictic_se3}
\begin{figure}[t]
\centering
\subfloat[Bumpy House Building]{\includegraphics[width=0.25\textwidth]{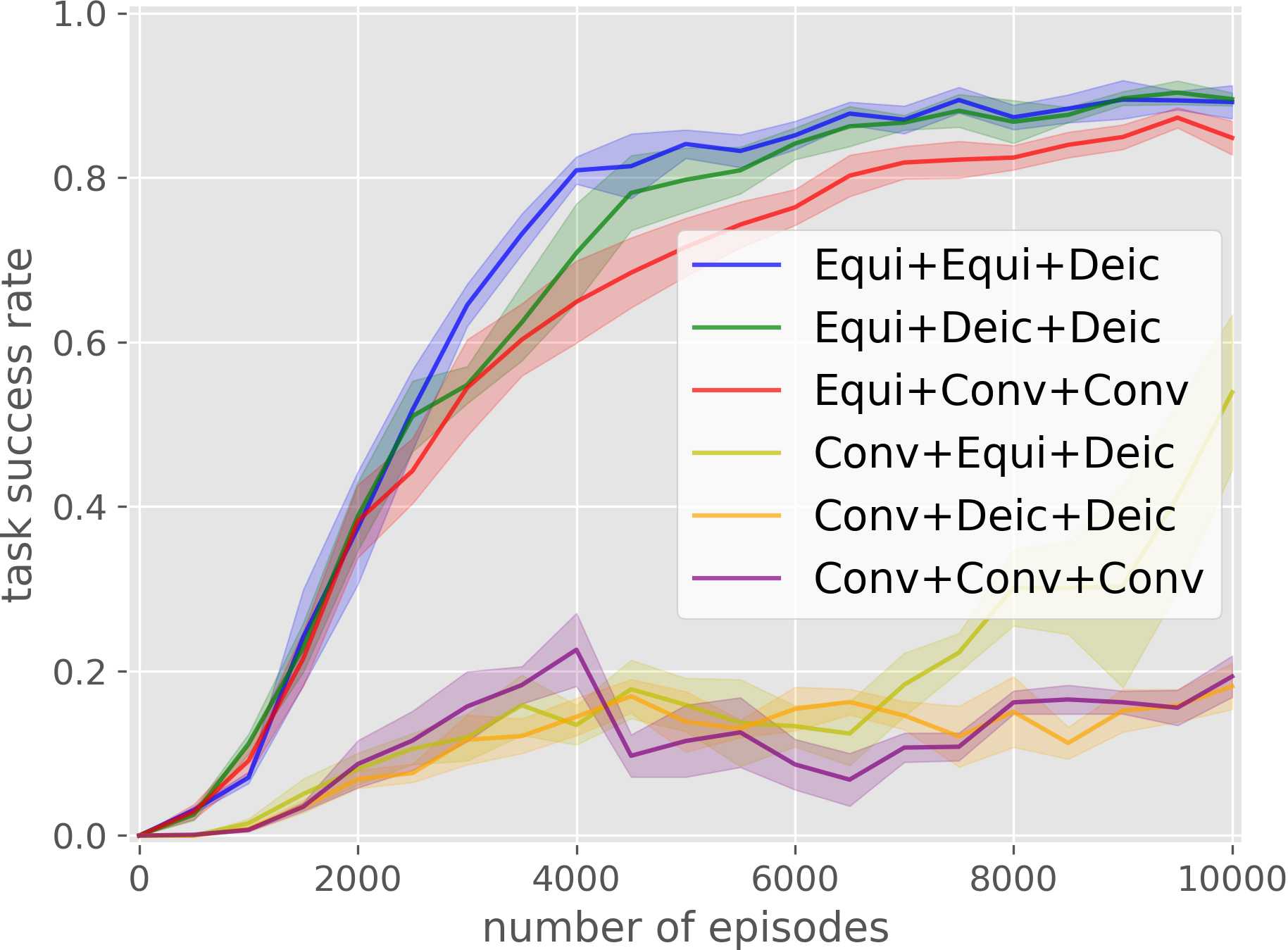}}
\subfloat[Bumpy Box Palletizing]{\includegraphics[width=0.25\textwidth]{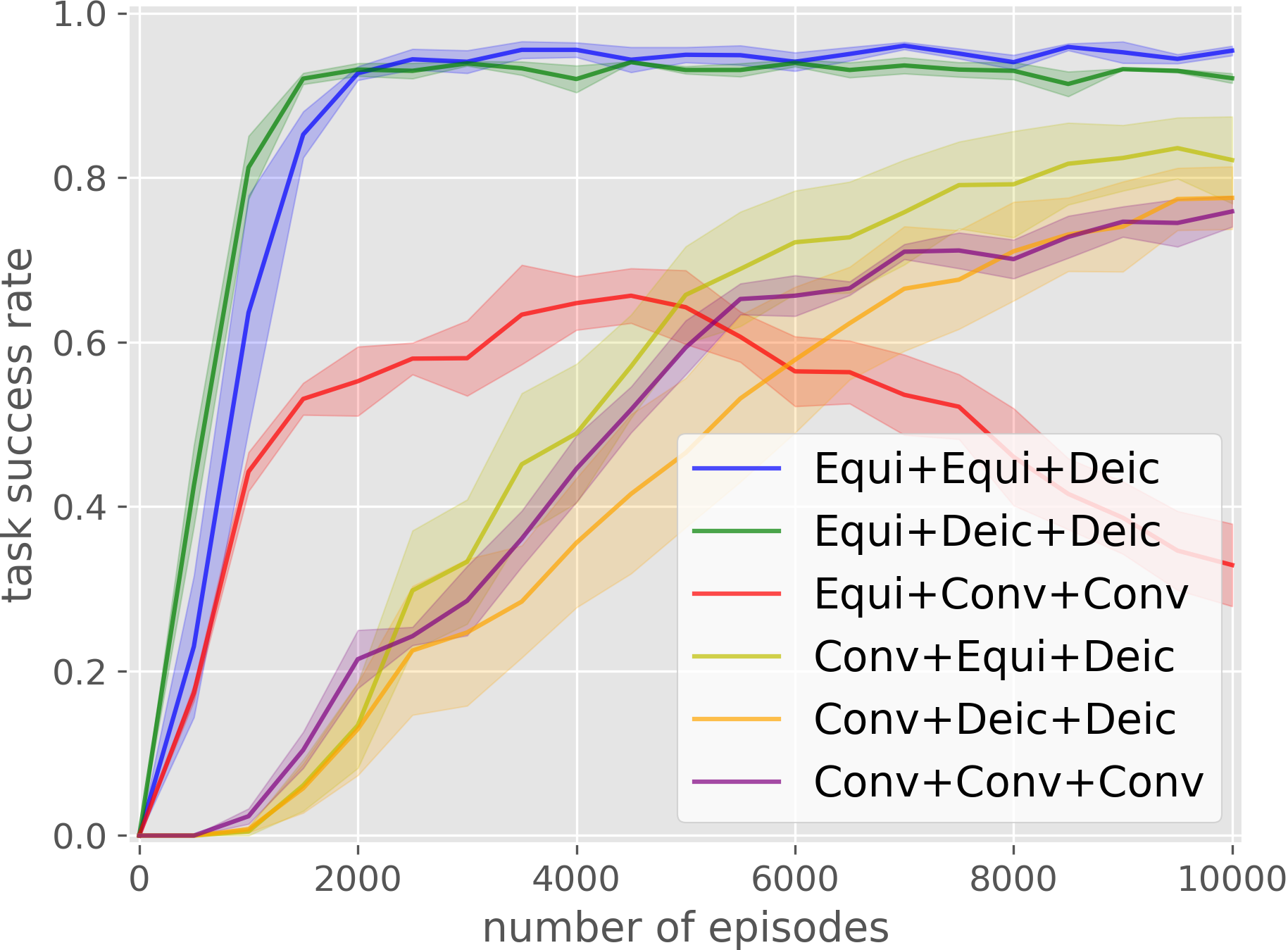}}
\caption{Comparison of different network choices in $SE(3)$. Results averaged over four runs. Shading denotes standard error.}
\label{fig:exp_6d_ablation}
\end{figure}
This experiment studies the different network choices for $q_1$ (equivariant network, conventional network), $q_2$ (equivariant network, deictic encoding, conventional network), and $q_3-q_5$ (deictic encoding, conventional network). We evaluate two proposed approaches: 1) Equi+Equi+Equi uses the equivariant network in $q_1$ and $q_2$ and deictic encoding in $q_3$ through $q_5$ (the three components in the name mean the architecture of $q_1$, $q_2$, and $q_3$-$q_5$); 2) Equi+Deic+Deic uses equivariant network in $q_1$, and deictic encoding in $q_2$ through $q_5$. We compare the proposal with the following baselines: 1) Equi+Conv+Conv uses equivariant network in $q_1$, and conventional convolutional network in $q_2$ through $q_5$; 2) Conv+Equi+Deic uses conventional convolutional network in $q_1$, equivariant network in $q_2$, and deictic encoding in $q_3$ through $q_5$; 3) Conv+Deic+Deic uses conventional convolutional network in $q_1$, and deictic encoding in $q_2$ through $q_5$; 4) Conv+Conv+Conv uses conventional convolutional network in $q_1$ through $q_5$. The results are shown in Fig~\ref{fig:exp_6d_ablation}, where our two proposed approaches outperform the baseline architectures in both environments. Note that swapping the conventional convolutional network with the equivariant network or the deictic encoding generally improves the performance, except that Equi+Conv+Conv in Bumpy Box Palletizing underperforms Conv+Conv+Conv. We suspect that this is because the target of $q_1$ given by the conventional convolutional networks is less stable.

\section{\edit{Runtime Analysis}}
\begin{table}[t]
\setlength\tabcolsep{2.5pt}%
\centering
\begin{tabular}{ccccccc}
\toprule
method & Conventional FCN & RAD FCN & DrQ FCN & Rot FCN & Equivariant FCN & Equivariant ASR \\
\midrule
time(s) & 0.08 & 0.09 & 0.22 & 0.42 & 0.72 & 0.45\\
\bottomrule
\end{tabular}
\caption{The average time for each training step in a rotation space of $C_{12}/C_2$}
\label{tab:runtime_r6}
\begin{tabular}{ccccc}
\toprule
method & Conventional ASR & RAD ASR & DrQ ASR & Equivariant ASR \\
\midrule
time(s) & 0.09 & 0.14 & 0.45 & 0.49\\
\bottomrule
\end{tabular}
\caption{The average time for each training step in a rotation space of $C_{32}/C_2$}
\label{tab:runtime_r16}
\end{table}

\edit{Table~\ref{tab:runtime_r6} and Table~\ref{tab:runtime_r16} shows the average runtime in the setting of the experiments in Section~\ref{sec:exp_equi_fcn} and Section~\ref{sec:exp_equi_asr}, respectively. The runtime is calculated by averaging over 500 training steps on a single Nvidia RTX 2080 Ti GPU. Both the Equivariant FCN and Equivariant ASR requires a longer time for each training step. However, Equivariant ASR is faster and similar to the best performing data augmentation method DrQ.}

\section{Robot Experiment}
\label{appendix:robot_exp}

\begin{figure}[t]
\centering
\includegraphics[width=0.2\linewidth]{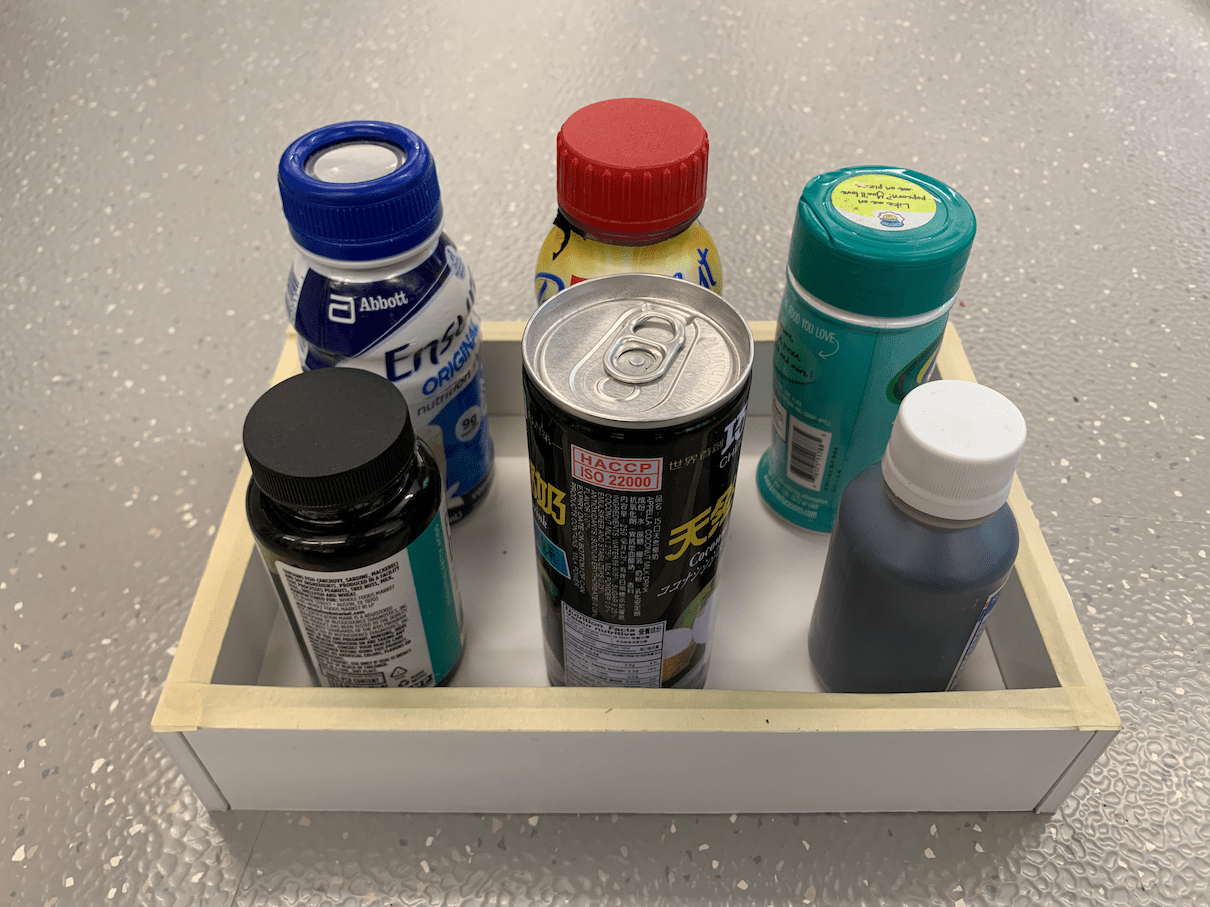}
\caption{The bottles used in the Bottle Arrangement robot experiment.}
\label{fig:robot_bottle}
\end{figure}

To ensure better sim to real transfer, we train our model used in the real world with a Perlin noise~\cite{perlin_noise} (with a maximum magnitude of 7mm) applied to the observations.

\underline{Bottle Arrangement:} In the Bottle Arrangement task, we use the bottles and tray shown in Fig~\ref{fig:robot_bottle} for testing.

\underline{House Building:} In the House Building task, we train the model with object size randomization within $\pm 8.3\%$. A Gaussian filter is applied after the Perlin noise during training to make the observation noisier. The model is trained for 20k episodes instead of 10k as in the simulation experiment.

\underline{Box Palletizing:} In the Box Palletizing task, we add an object size randomization within $\pm 3.75\%$ and increase the size of $H$ and $P$ from $24\times 24$ to $40\times 40$.

Fig~\ref{fig:robot_bottle_full} shows an example episode of the robot finishing the Bottle Arrangement task. Fig~\ref{fig:robot_box18_full} shows an example episode of the robot finishing the Box Palletizing task.

\begin{figure}[t]
\centering
\subfloat{
\includegraphics[width=0.15\textwidth]{image/exp/robot_bottle/1.png}
}
\subfloat{
\includegraphics[width=0.15\textwidth]{image/exp/robot_bottle/2.png}
}
\subfloat{
\includegraphics[width=0.15\textwidth]{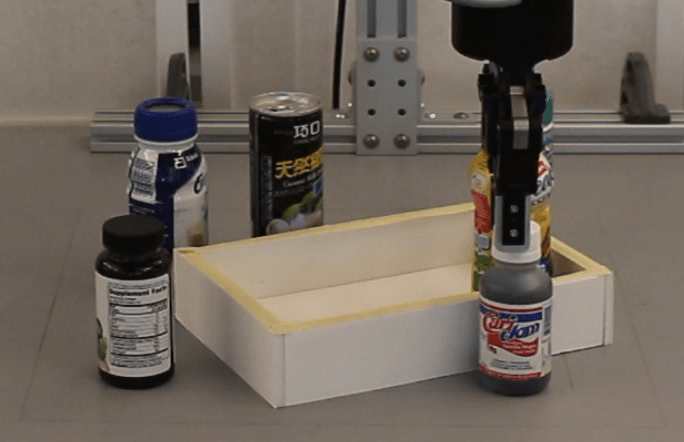}
}
\subfloat{
\includegraphics[width=0.15\textwidth]{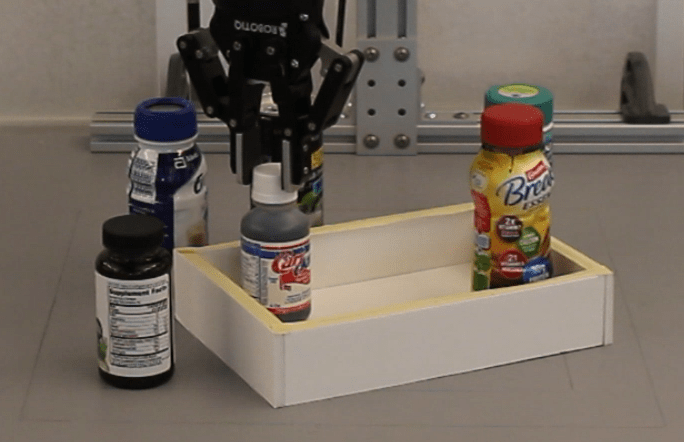}
}
\subfloat{
\includegraphics[width=0.15\textwidth]{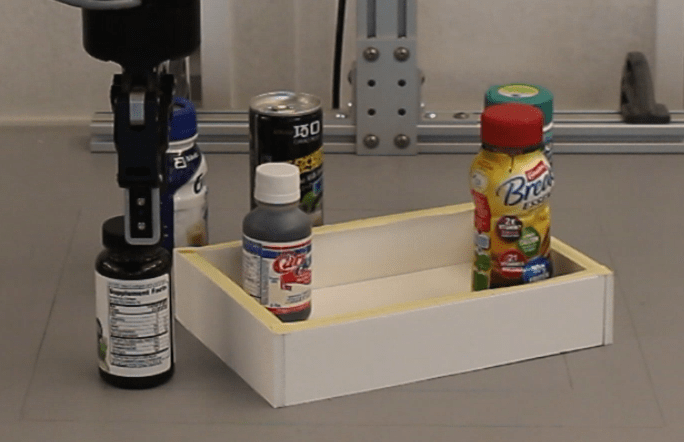}
}
\subfloat{
\includegraphics[width=0.15\textwidth]{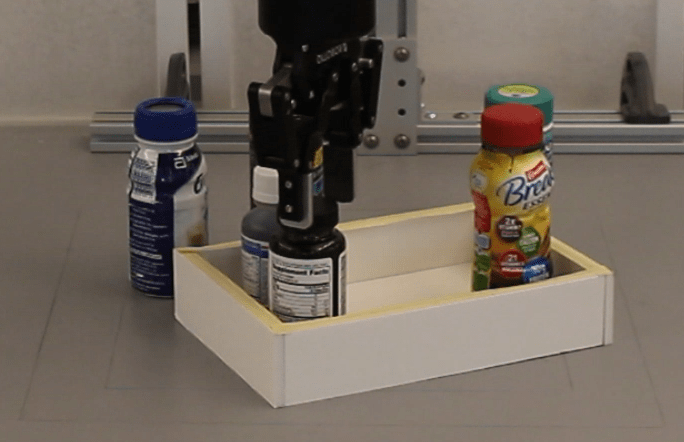}
}\\
\subfloat{
\includegraphics[width=0.15\textwidth]{image/exp/robot_bottle/7.png}
}
\subfloat{
\includegraphics[width=0.15\textwidth]{image/exp/robot_bottle/8.png}
}
\subfloat{
\includegraphics[width=0.15\textwidth]{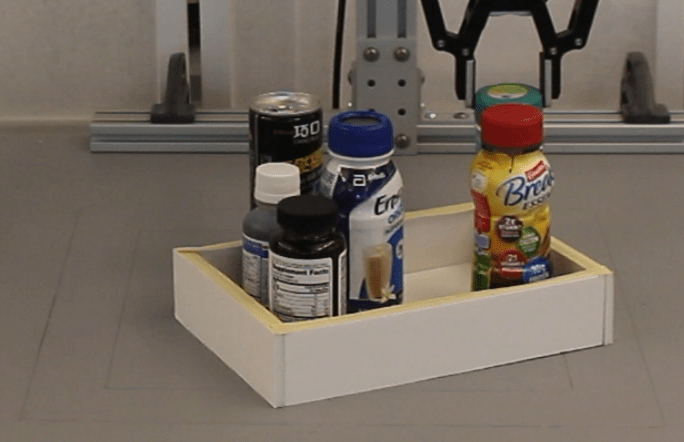}
}
\subfloat{
\includegraphics[width=0.15\textwidth]{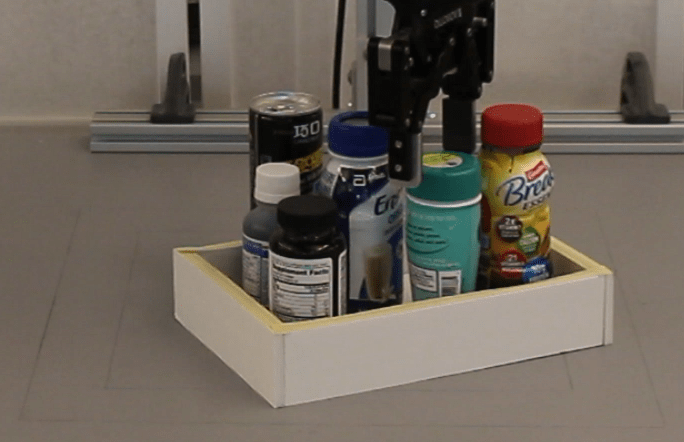}
}
\subfloat{
\includegraphics[width=0.15\textwidth]{image/exp/robot_bottle/11.png}
}
\subfloat{
\includegraphics[width=0.15\textwidth]{image/exp/robot_bottle/12.png}
}
\caption{An example episode of the Bottle Arrangement in the real world.}
\label{fig:robot_bottle_full}
\end{figure}

\begin{figure}[t]
\centering
\subfloat{
\includegraphics[width=0.15\textwidth]{image/exp/robot_box/01.png}
}
\subfloat{
\includegraphics[width=0.15\textwidth]{image/exp/robot_box/02.png}
}
\subfloat{
\includegraphics[width=0.15\textwidth]{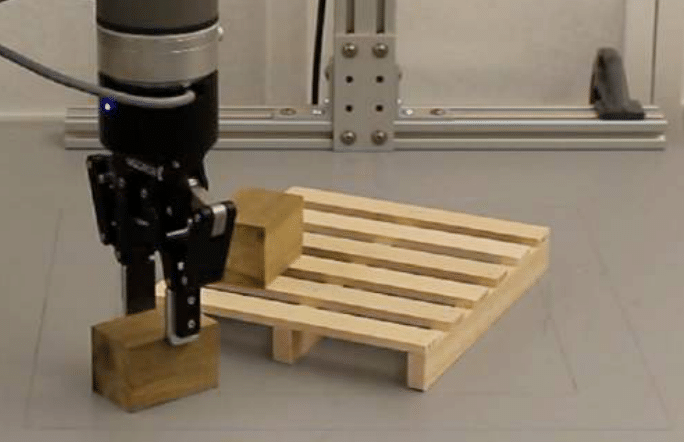}
}
\subfloat{
\includegraphics[width=0.15\textwidth]{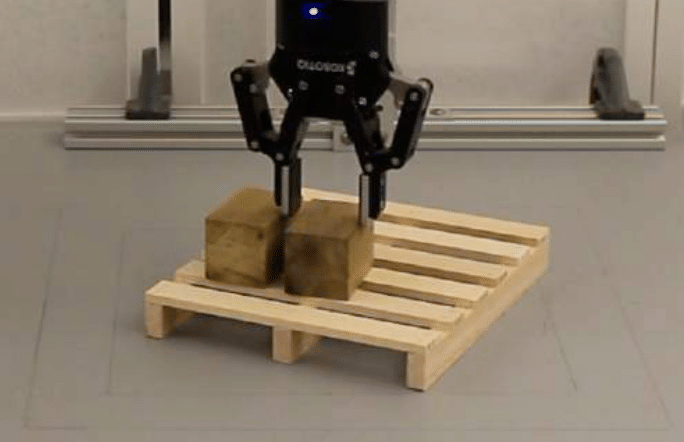}
}
\subfloat{
\includegraphics[width=0.15\textwidth]{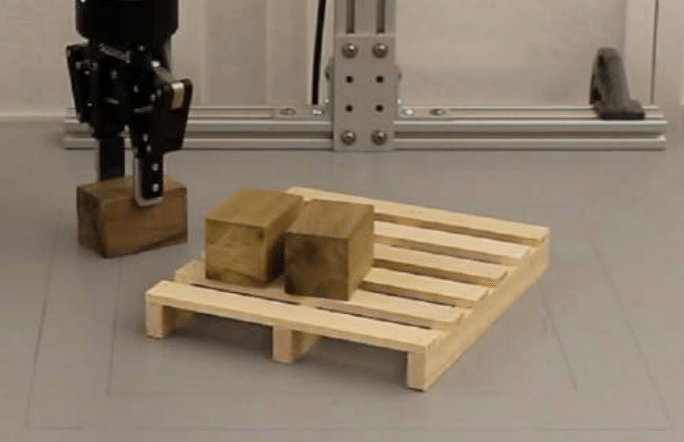}
}
\subfloat{
\includegraphics[width=0.15\textwidth]{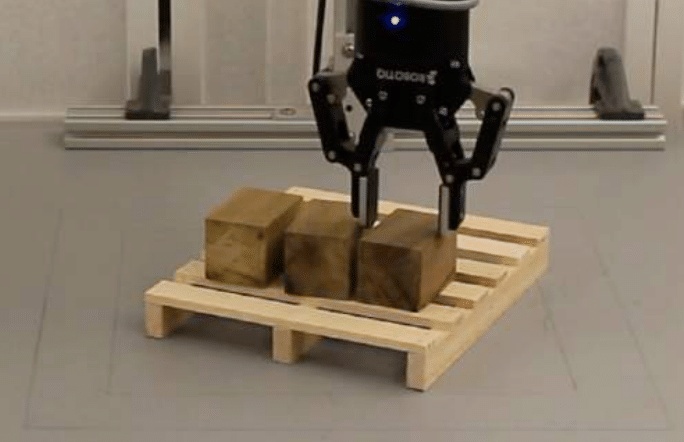}
}\\
\subfloat{
\includegraphics[width=0.15\textwidth]{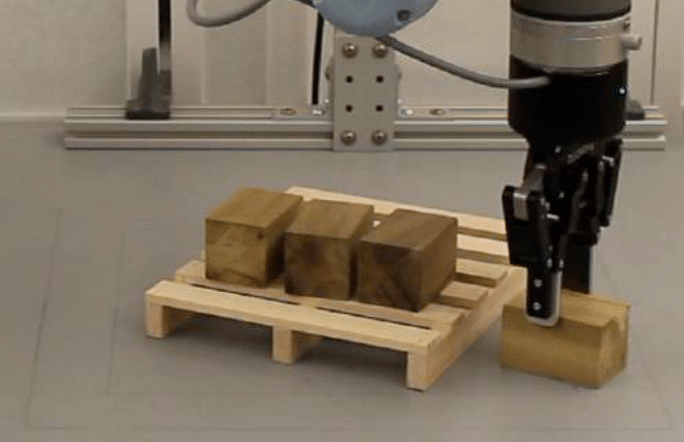}
}
\subfloat{
\includegraphics[width=0.15\textwidth]{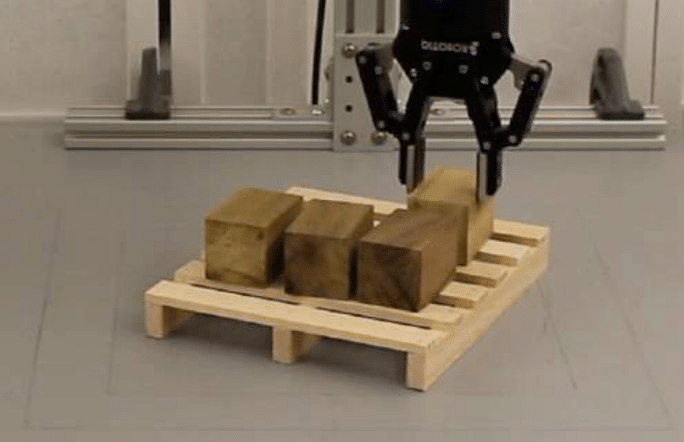}
}
\subfloat{
\includegraphics[width=0.15\textwidth]{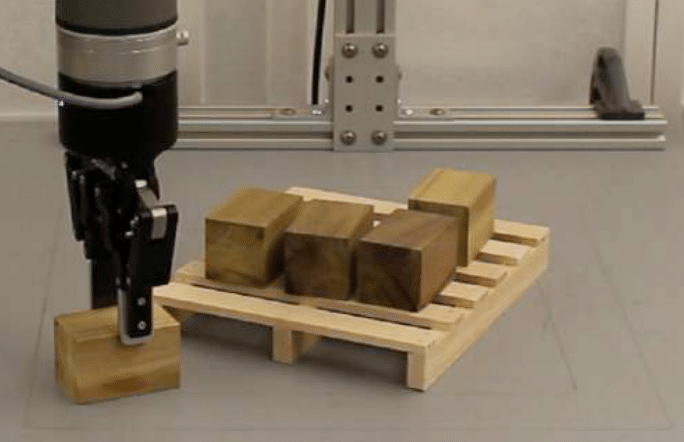}
}
\subfloat{
\includegraphics[width=0.15\textwidth]{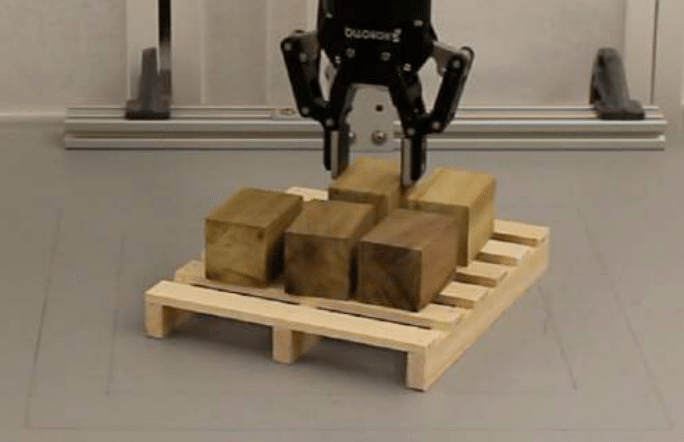}
}
\subfloat{
\includegraphics[width=0.15\textwidth]{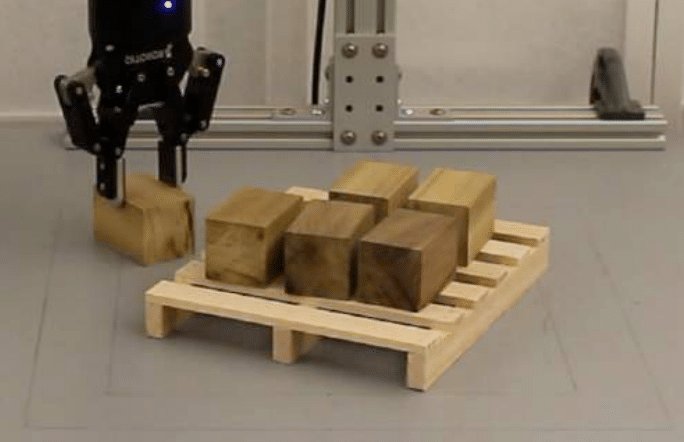}
}
\subfloat{
\includegraphics[width=0.15\textwidth]{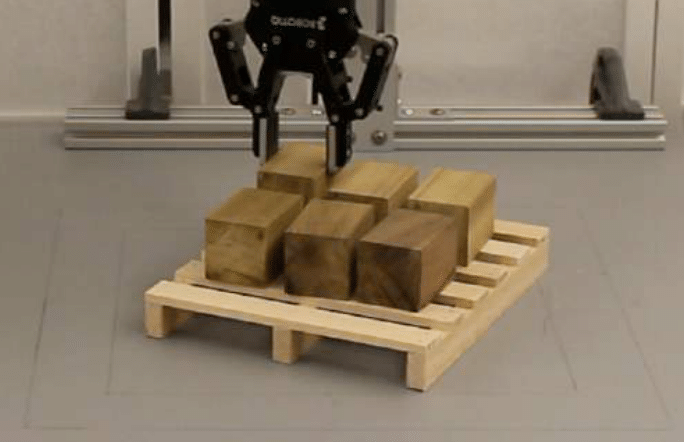}
}\\
\subfloat{
\includegraphics[width=0.15\textwidth]{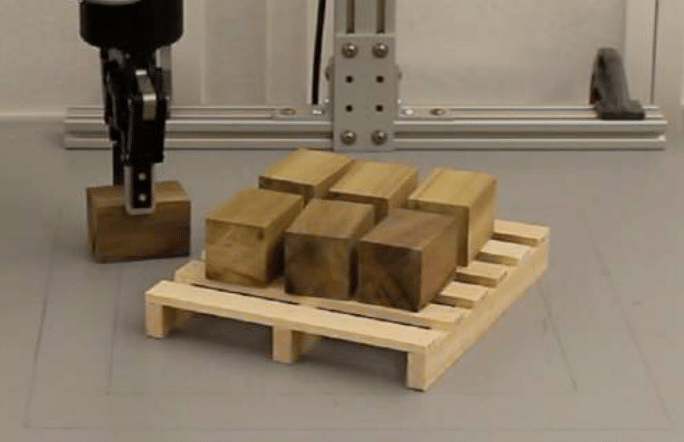}
}
\subfloat{
\includegraphics[width=0.15\textwidth]{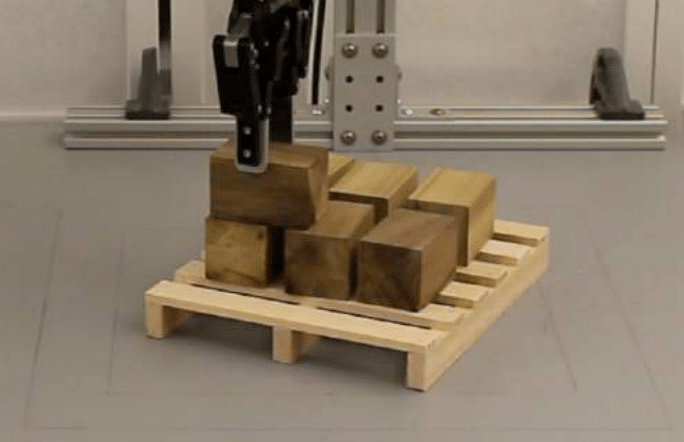}
}
\subfloat{
\includegraphics[width=0.15\textwidth]{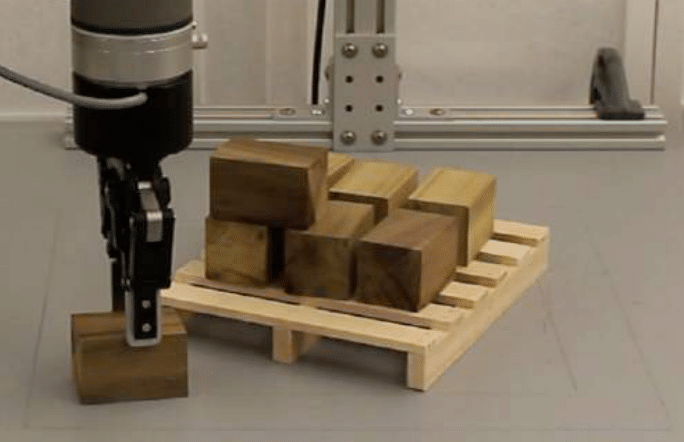}
}
\subfloat{
\includegraphics[width=0.15\textwidth]{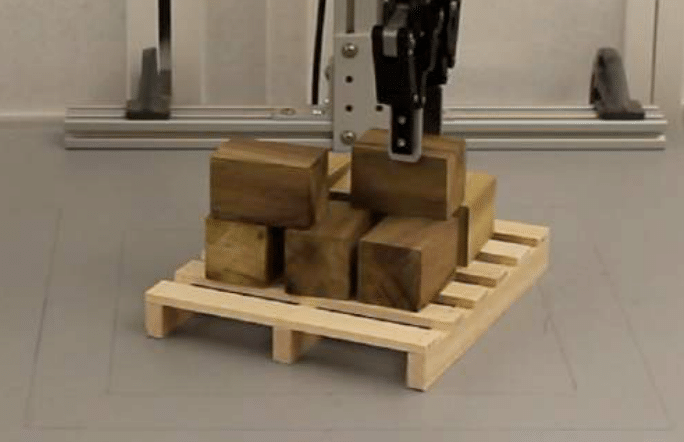}
}
\subfloat{
\includegraphics[width=0.15\textwidth]{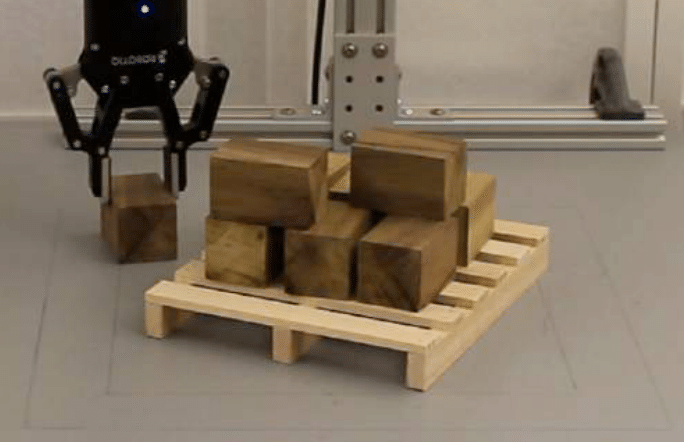}
}
\subfloat{
\includegraphics[width=0.15\textwidth]{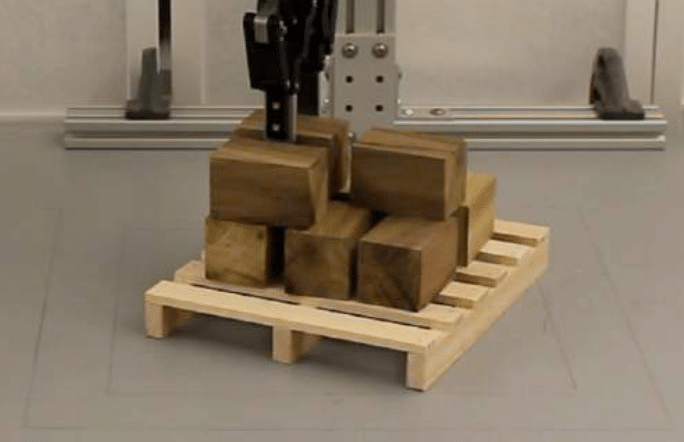}
}\\
\subfloat{
\includegraphics[width=0.15\textwidth]{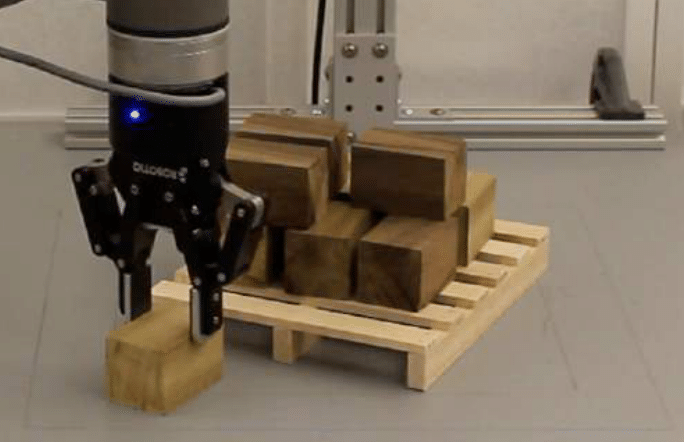}
}
\subfloat{
\includegraphics[width=0.15\textwidth]{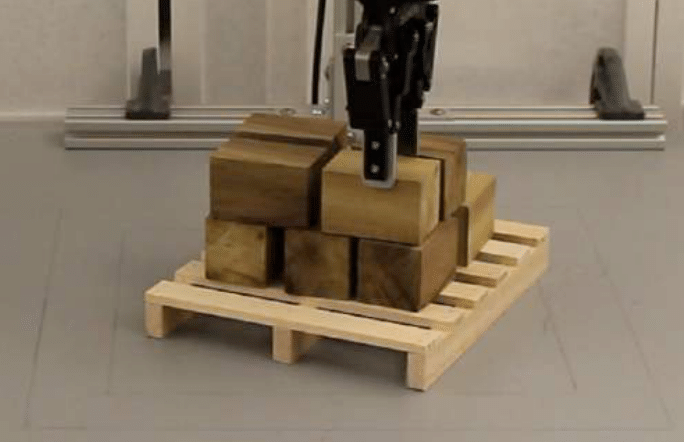}
}
\subfloat{
\includegraphics[width=0.15\textwidth]{image/exp/robot_box/21.png}
}
\subfloat{
\includegraphics[width=0.15\textwidth]{image/exp/robot_box/22.png}
}
\subfloat{
\includegraphics[width=0.15\textwidth]{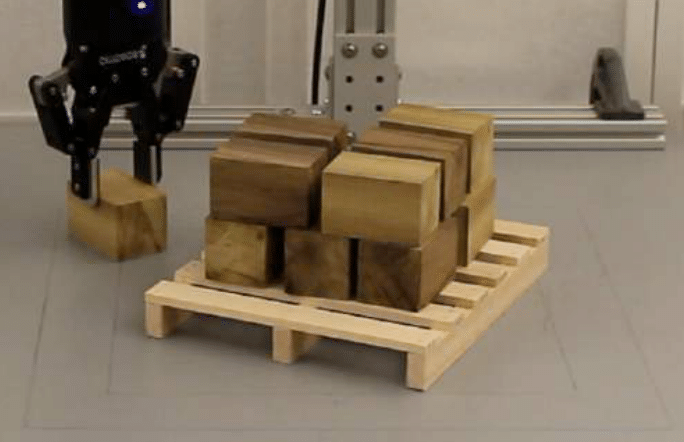}
}
\subfloat{
\includegraphics[width=0.15\textwidth]{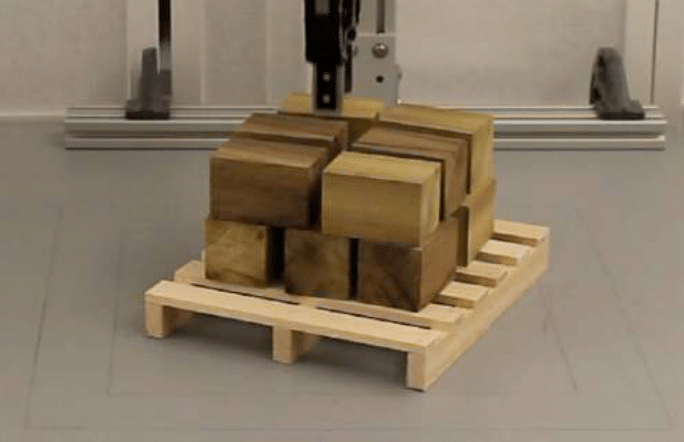}
}\\
\subfloat{
\includegraphics[width=0.15\textwidth]{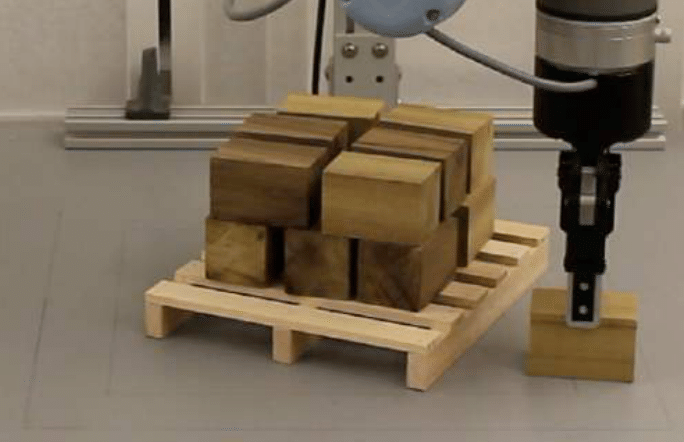}
}
\subfloat{
\includegraphics[width=0.15\textwidth]{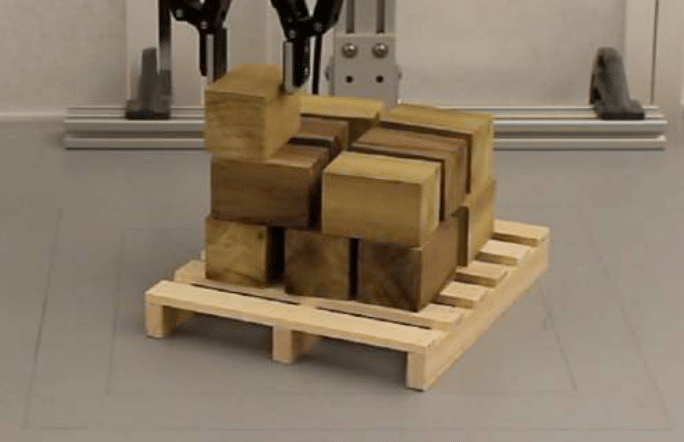}
}
\subfloat{
\includegraphics[width=0.15\textwidth]{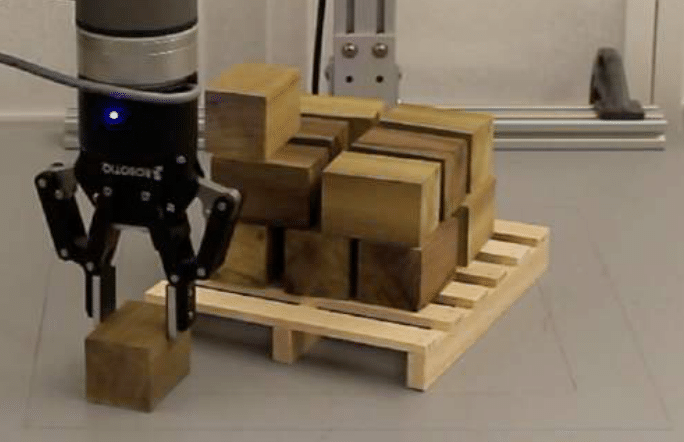}
}
\subfloat{
\includegraphics[width=0.15\textwidth]{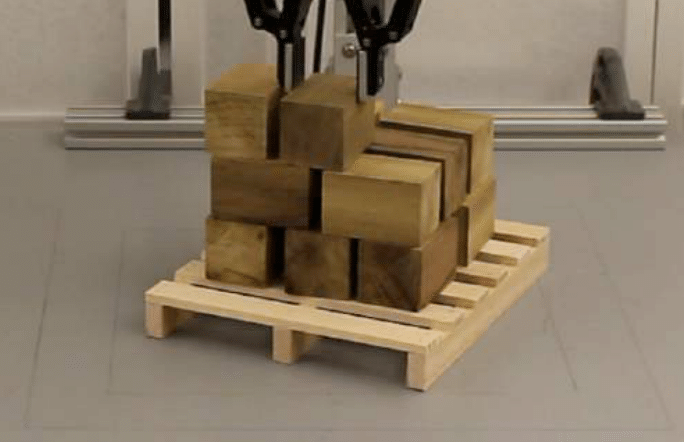}
}
\subfloat{
\includegraphics[width=0.15\textwidth]{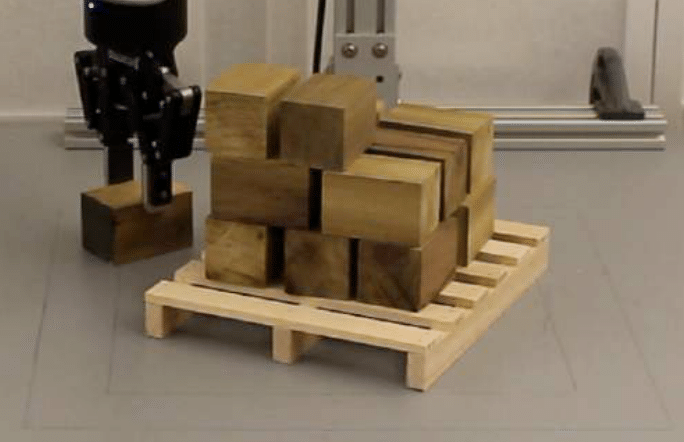}
}
\subfloat{
\includegraphics[width=0.15\textwidth]{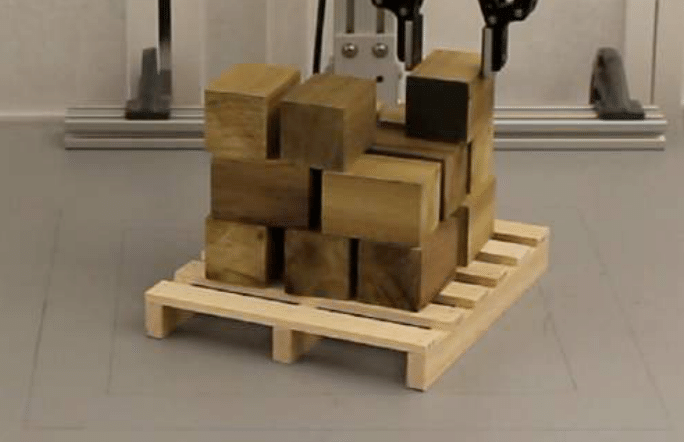}
}\\
\subfloat{
\includegraphics[width=0.15\textwidth]{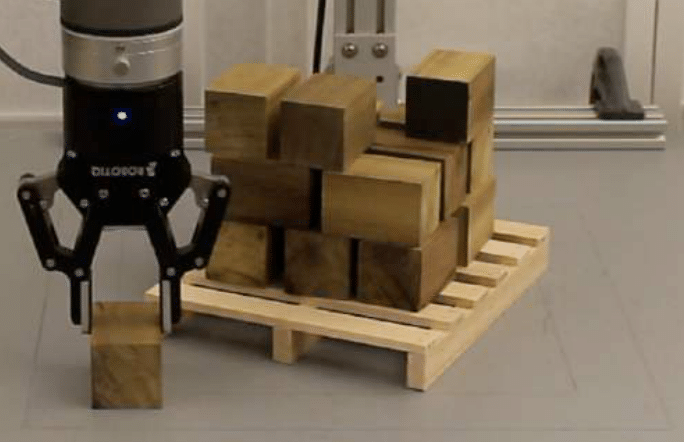}
}
\subfloat{
\includegraphics[width=0.15\textwidth]{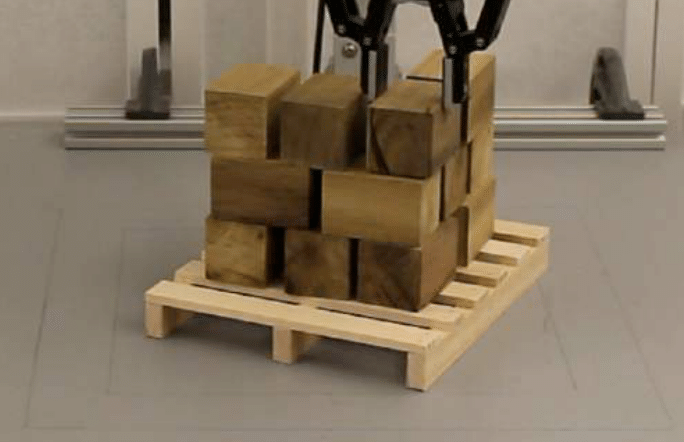}
}
\subfloat{
\includegraphics[width=0.15\textwidth]{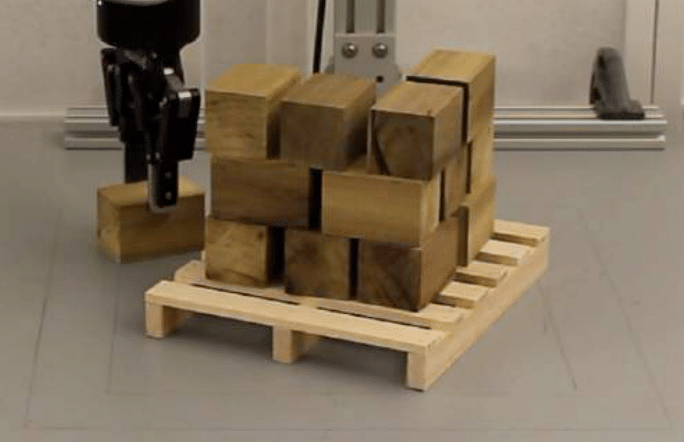}
}
\subfloat{
\includegraphics[width=0.15\textwidth]{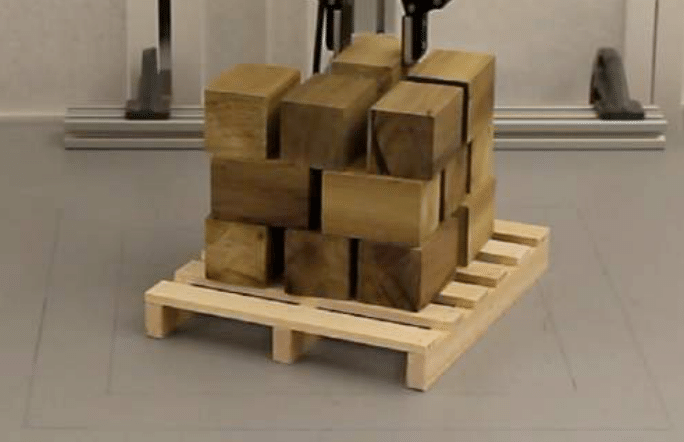}
}
\subfloat{
\includegraphics[width=0.15\textwidth]{image/exp/robot_box/35.png}
}
\subfloat{
\includegraphics[width=0.15\textwidth]{image/exp/robot_box/36.png}
}
\caption{An example episode of the Box Palletizing in the real world.}
\label{fig:robot_box18_full}
\end{figure}

\end{document}